
\documentclass[nohyperref]{article}

\usepackage{microtype}
\usepackage{graphicx}
\usepackage{booktabs} 

\usepackage{hyperref}
\hypersetup{
    colorlinks,
    linkcolor={red!50!black},
    citecolor={blue!50!black},
    urlcolor={blue!80!black}
}



\usepackage[accepted]{icml2024}

\usepackage{amsmath}
\usepackage{amssymb}
\usepackage{mathtools}
\usepackage{amsthm}

\usepackage[capitalize,noabbrev]{cleveref}

\usepackage{bbm, dsfont}
\usepackage{subcaption}
\usepackage{tikz}
\usepackage{booktabs,rotating,multirow}
\usepackage{adjustbox}
\setlength{\tabcolsep}{5pt}
\usepackage{enumitem}
\usepackage[hang,flushmargin]{footmisc}
\usepackage[T1]{fontenc}
\usepackage[scaled=0.85]{sourcecodepro}
\usepackage[leftcaption]{sidecap}
\sidecaptionvpos{figure}{t}

\theoremstyle{plain}
\newtheorem{proposition}{Proposition} 
\newtheorem{lemma}{Lemma}
\newtheorem{corollary}{Corollary} 
\theoremstyle{definition}
\newtheorem{definition}{Definition}

\theoremstyle{remark}



\newcommand{\squeeze}{\looseness=-1}

\newcommand{\red}[1]{} 
\newcommand{\blue}[1]{} 
\newcommand{\green}[1]{} 
\newcommand{\purple}[1]{} 
\newcommand{\orange}[1]{} 
\newcommand{\cyan}[1]{} 

\newcommand{\crimson}[1]{} 
\newcommand{\yellow}[1]{} 

\newcommand\todo[1]{{\red{TODO: {#1}}}}

\newcommand\extended[1]{{\orange{EXTENDED: {#1}}}}

\newcommand{\naive}{na\"{\i}ve}
\newcommand{\Naive}{Na\"{\i}ve}

\newcommand\expect[2]{\mathbbm{E}_{#1}{\left[ {#2} \right]}}

\newcommand{\one}[1]{\mathds{1}{\{{#1}\}}}

\DeclareMathOperator*{\argmax}{argmax}
\DeclareMathOperator*{\argmin}{argmin}
\DeclareMathOperator*{\supp}{supp}
\renewcommand*{\d}{\mathop{}\!\mathrm{d}}

\newcommand{\inner}[1]{{\langle #1 \rangle}}

\newcommand{\X}{{\cal{X}}}

\newcommand{\R}{\mathbb{R}}

\newcommand{\yhat}{{\hat{y}}}
\newcommand{\phat}{{\hat{p}}}

\newcommand{\ytilde}{{\tilde{y}}}
\newcommand{\ptilde}{{\tilde{p}}}
\newcommand{\atilde}{{\tilde{a}}}
\newcommand{\mtilde}{{\tilde{m}}}
\newcommand{\xbar}{{\bar{x}}}

\newcommand{\grp}{{z}}
\newcommand{\costu}{{c}}
\newcommand{\smplst}{{S}}
\newcommand{\loss}{{\ell}}

\newcommand{\acc}{{\mathtt{acc}}}
\newcommand{\err}{{\mathtt{err}}}
\newcommand{\prc}{{\mathtt{prc}}}
\newcommand{\softprc}{{\widetilde{\prc}}}
\newcommand{\rcl}{{\mathtt{rcl}}}

\newcommand{\base}{{\mu}}
\newcommand{\thresh}{{\tau}}
\newcommand{\score}{{\phi}}
\newcommand{\sbsdy}{{s}}
\newcommand{\sig}{\xi} 
\newcommand{\sigalt}{{\sigma}}
\newcommand{\temp}{{\tau}}
\newcommand{\bias}{{B}}

\newcommand{\dataset}[1]{{\texttt{#1}}}
\newcommand{\adult}{\dataset{adult}}
\newcommand{\bank}{\dataset{bank}}
\newcommand{\method}[1]{{\fontfamily{lmtt}\selectfont{{#1}}}}
\newcommand{\naivemthd}{{\method{\naive}}}
\newcommand{\semi}{{\method{semi}}}
\newcommand{\strat}{{\method{strat}}}
\newcommand{\stratx}{\method{strat}$_x$}
\newcommand{\stratnoz}{{\method{strat}$_{x \setminus \grp}$}}
\newcommand{\stratindep}{{\method{strat}$_{\yhat \perp \grp}$}}
\newcommand{\feature}[1]{\texttt{#1}}

\begin{document}

\twocolumn[
\icmltitle{Classification Under Strategic Self-Selection}



\icmlsetsymbol{equal}{*}

\begin{icmlauthorlist}
\icmlauthor{Guy Horowitz}{equal,technion}
\icmlauthor{Yonatan Sommer}{equal,technion}
\icmlauthor{Moran Koren}{bgu}
\icmlauthor{Nir Rosenfeld}{technion}
\end{icmlauthorlist}

\icmlaffiliation{technion}{Faculty of Computer Science, Technion -- Israel Institute of Technology}
\icmlaffiliation{bgu}{Department of Industrial Engineering and Management, Ben Gurion University}

\icmlcorrespondingauthor{Nir Rosenfeld}{nirr@cs.technion.ac.il}

\icmlkeywords{Machine Learning, ICML}

\vskip 0.3in
]



\printAffiliationsAndNotice{\icmlEqualContribution} 

\begin{abstract}

When users stand to gain from certain predictive outcomes,
they are prone to act strategically to obtain predictions that are favorable.
Most current works consider strategic behavior that manifests as users  modifying their features;
instead, we study a novel setting in which users decide whether to even participate (or not),
this in response to the learned classifier.
Considering learning approaches of increasing strategic awareness,
we investigate the effects of user self-selection on learning,
and the implications of learning on the composition of the self-selected population.
Building on this, we propose a differentiable framework for learning under self-selective behavior, which can be optimized effectively.
We conclude with experiments on real data and simulated behavior that complement our analysis and demonstrate the utility of our approach.
\squeeze
\end{abstract}

\section{Introduction} \label{sec:intro}
Machine learning is increasingly being used for informing decisions regarding humans;
some common examples include
loan approvals, university admissions, job hiring, welfare  benefits, and healthcare programs.
In these domains, learned models often serve as `gatekeepers',
used for screening potential candidates in order to determine their qualification (e.g., for a job, loan, or program).
This approach is based on the premise that more accurate models should provide better screening---which in turn should enable better decisions regarding 
additional costly testing
(e.g., who to interview or to recruit for a try-out period)
and consequent actions (e.g., who to hire).
But conventional learning methods optimize for accuracy on the distribution of input data, i.e., the train-time population of candidates;
this overlooks the important fact that \emph{who will apply after model deployment}---and who will not---often depends on the screening rule itself.
\squeeze


In this work we study classification of strategic agents that choose whether to apply or not in response to the learned classifier.
Strategic candidates apply only if the expected utility from passing screening outweighs associated costs;
thus, application choices derive from beliefs regarding classification outcomes.
Since these choices in aggregate determine the test-time distribution,
learning becomes susceptible to \emph{self-selection}---%
namely selection that is carried out by the agents which predictions target.
Our goal in this paper is to study learning under such self-selective behavior,
which we believe is prevalent in many application domains.
We seek to:
(i) establish the ramifications of self-selection on conventional learning methods;
(ii) propose a strategically robust method that is accurate on the self-selective distribution it induces;
(iii) study the power of such methods to influence choices and shape the applicant population;
and (iv) propose means for regulating and mitigating potential ill effects.
\squeeze

Our setting considers a firm which trains a classifier to be used for screening, where applicants who pass screening then partake in an accurate but costly test (e.g., trial period) that determines final outcomes (e.g., hiring).
Candidates would like to pass the test, but also to avoid unnecessary testing costs;
the challenge for them is that they do not know a-priori whether or not they will pass screening,
making their decisions regarding application inherently uncertain.
To cope with this, candidates can make use of relevant statistics regarding their chances of being hired.
We imagine these as being made public either by 
a third party (e.g., auditor, media outlet),
or by the firm itself,
e.g. due to regulation on transparency
\citep{matthews2023addressing}
or as a service to prospective candidates.%
\footnote{This is similar in spirit to e.g. credit calculators, that based on partial information provide an estimated range of likely credit scores.\squeeze}
The statistics we consider rely on a subset of (categorical) features describing candidates that provide semi-individualized, group-level information, useful to them for making informed application decisions.
Since the choice of classifier determines the reported statistics,
these become the interface through which learning influences applications.
This process is illustrated in Figure~\ref{fig:illust}.
\squeeze

The goal of learning in our setting is to train a classifier that will be accurate on the induced applicant distribution---as determined by the classifier, indirectly through how it shapes self-selection.
We study how learning approaches of increasing strategic sophistication affect, and are affected by, the process of self-selection.
We begin by showing that whereas learning optimizes for accuracy, candidates benefit from the classifier's \emph{precision}, which governs their decisions regarding application.
A classifier's performance on the induced (test-time) distribution therefore depends on how it balances accuracy and precision.
This also means that a \emph{strategic} learner can maximize induced accuracy
by carefully controlling its precision for different candidates
as a means for shaping the population of eventual applicants.
Our results show that this, coupled with the firm's informational advantage, provides it with much power:
under mild conditions, learning can fully determine for each group in the population whether its members will apply, or not.

To restrict this power, we propose to enforce a certain independence criterion, which draws connections to the literature on fairness.
Our main result here is that this ensures that applications adhere to a natural, classifier-independent `ordering', which relies only on the innate group-level base rate.
We show how this can allow a social planner to implement affirmative action policies using targeted subsidies.
\squeeze


We then switch gears and turn to proposing a practical method for learning under strategic self selection.
Our method is differentiable and so can be optimized using gradient methods.
Our first step is to model self-selection in the objective using per-example weights, where $w_i=1$ if candidate $i$ applies, and $w_i=0$ if she does not;
importantly, these weights depend on the learned classifier.
We then show how weights can be effectively `smoothed',
so that gradients can be passed through application decisions.
The challenge is that applications depend on precision,
which in turn depends on the predictions of the classifier that is being optimized.
For this we propose a differentiable proxy for (conditional) precision,
and provide an effective implementation.
We conclude with an empirical demonstration of our approach 
in a semi-synthetic experimental setting that uses real data and simulated self-selective behavior.
Code is publicly available at
\url{https://github.com/Ysommer/GKSC-ICML}.
\squeeze

\begin{figure}[t!]
\centering
\includegraphics[width=\columnwidth]{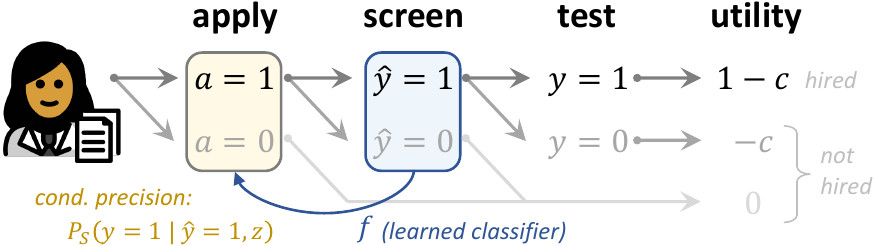}
\caption{\textbf{The application process}. 
Candidates who apply must first pass a screening classifier;
if successful, they advance to take a costly qualifying test.
Candidates are strategic, and apply only if it is cost-effective.
Since their likelihood of passing screening depends on the classifier
(through its conditional precision on past data),
learning has the power to shape the composition of the applicant population.
\squeeze
}
\label{fig:illust}
\end{figure}



\subsection{Related work} \label{sec:related}

\paragraph{Strategic classification.}
Our work is tightly connected to the growing literature on \emph{strategic classification} 
\citep{hardt2016strategic,bruckner2012static},
in which learning must cope with agents that can strategically modify their features (at a cost) in order to obtain preferred predictive outcomes.
\extended{Alongside works that study practical \citep{levanon2021strategic} and theoretical \citep{zhang2021incentive,sundaram2021pac} aspects of learning,}
There is an ongoing effort to extend and generalize beyond the original problem setting; examples include support for
richer models of user behavior
\citep{jagadeesan2021alternative,sundaram2021pac,levanon2022generalized,eilat2023strategic},
relaxing informational assumptions
\citep{ghalme2021strategic,bechavod2022information,barsotti2022transparency,lin2023plug,shao2023strategic,lechner2023strategic,harris2023strategic,rosenfeld2023one},
and introducing causal elements \citep{miller2020strategic,chen2023linear,horowitz2022causal,mendler2022anticipating}.
These works, as well as the large majority of others in the field, focus on feature modification as the action that users can take
(one notable exception is \citet{krishnaswamy2021classification}, who allow users to withhold certain features).
In contrast, our work extends the literature by considering a drastically different type of action---namely the initial choice of users regarding whether to participate or not.
\squeeze

\extended{TODO: cite 'Strategic Usage in a Multi-Learner Setting' (concurrent) - user action is choosing amongst alternatives, so sort of self-selection}




\paragraph{Screening, selection, and self-selection.}
The study of self-selection has a significant history in economics;
some recent works that are relevant to our context include 
\citet{Lagziel} and \citet{Lagziel2019} who analyze signal distributions in filtering mechanisms and identify conditions leading to inefficiencies from excessive filtration steps in selection processes;
\citet{Carroll2017} who focuses on screening in principal-agent models; and \citet{courtySequentialScreening2000} who investigates dynamic pricing as a tool to screen consumers with low willingness to pay.
Most related to ours is \citet{korenGatekeeperEffectImplications2023},
which establishes the connection between hiring and self-selection, as mediated by the quality of screening.
In machine learning, several recent works study the use of classifiers for screening
\citep{wang2022improving},
also for strategic agents \citep{cohen2023sequential,beyhaghi2023screening},
and with connections to fairness
\citep{khalili2021fair,blum2022multi,okati2023within}.
Other works study learning in settings with self-selection,
although in different contexts and with differing goals.
\citet{zhang2021classification} consider a sequential screening setting where applicants can decide when (and if) to quit, and show how self-selection can be exploited.
\citet{ben-porat2022modeling} model user attrition in recommendation systems as a bandit problem with `departing' arms.
\citet{cherapanamjeri2023makes} give algorithms for endogenous self-selection that controls realized labels (rather than participation).
These works have a strong game-theoretical emphasis;
in contrast, our focus is primarily on learning aspects.
\extended{Additionally, almost all of the above works consider sequential screening processes, whereas we consider a single deployment.}
\squeeze



\section{Problem setup} \label{sec:setup}

Consider a firm interested in training a classifier to be used for screening job applicants.
Prospective candidates are represented by features $x$
and a binary label $y \in \{0,1\}$ indicating whether the candidate is qualified or not.
We assume that $x$ includes at least some categorical features,
but allow the other features to be of any type or modality (e.g., vectors, images, text).
Candidates are assumed to be sampled iid from some unknown joint distribution as $(x,y) \sim p$.
Given a sample set $\smplst =\{(x_i, y_i) \}_{i=1}^m \sim p^m$,
the firm seeks to train a classifier $\yhat = f(x)$
to accurately predict labels $y$ for unseen future candidates $x$.
Typically we will have that $f(x)=\one{\score(x)>0}$ where $\score$ is a learned score function.
Once $f$ is obtained, it is used by the firm as a `gatekeeper' for screening: 
any candidate predicted to be qualified (i.e., has $\yhat=1$) is invited to partake in an accurate (but costly) qualification test or trial period which reveals her true $y$.%
\footnote{This is similar to the screen-then-test setup of \citet{blum2022multi}.}
The firm then hires any candidate deemed qualified, i.e., has $y=1$.
\squeeze


\paragraph{Strategic application.}
Candidates would like to be hired,
but also to avoid incurring the possibly unnecessary costs of potential testing.
Assuming w.l.o.g. that candidates gain unit utility from being hired (which occurs iff $y=1$),
let $\costu \in [0,1]$ be the cost candidates incur when taking the test.
\todo{make sure this is actually wlog!}
We assume candidates are \emph{strategic}, and hence make informed decisions regarding whether to apply,
denoted $a \in \{0,1\}$.
These are made on the basis of information regarding the screening process,
as it depends on the learned $f$.
Since testing takes place only if a candidate applies \emph{and} passes screening (i.e., obtains $\yhat=1$),
utility is given by:
\begin{equation}
\label{eq:utility}
u(a) =  a \cdot \yhat \cdot (y-\costu)   
\end{equation}
Candidates would like to apply only if this admits positive utility, i.e., if $u(1) \ge 0$ (note $u(0)=0$).
The firm, however, does not provide pre-application access to individual predictions $\yhat$---%
i.e., candidates cannot know with certainty whether they will pass screening or not.
This, coupled with candidates not knowing their true $y$, means that 
$u(a)$ cannot be computed (nor optimized) exactly.
To cope with this uncertainty, we model candidates as 
rational decision-makers, who choose to apply iff this maximizes their expected utility:
\squeeze
\begin{equation}
\label{eq:rational}
a^* = \argmax\nolimits_{a \in \{0,1\}} \expect{\ptilde(y,\yhat \mid x)}{u(a)}
\end{equation}
where $\ptilde(y,\yhat \mid x)$ encodes their beliefs regarding the joint uncertainty in $y$ and $\yhat$, conditional on $x$ (which they know).
We refer to candidates who select to apply via $a^*=1$ as \emph{applicants}.
We next discuss what constitutes these beliefs.
\squeeze

\paragraph{Decision-making under uncertainty.}
To facilitate informed decision-making,
we assume that the firm publishes coarse aggregate statistics concerning $y$ and $\yhat$, which candidates then use to form beliefs $\ptilde$.
In particular, let $\grp \subset x$ be a subset of (categorical) features of size $k$,
and denote by $K$ the number of distinct values $\grp$ can take.%
\footnote{
Group variables $\grp$ can correspond to sensitive or protected variables, but are not necessarily such---rather, we think of them simply as the set of variables for which the firm chooses to (or must, e.g. due to regulation) report conditional statistics.
}
Then we assume the system makes public
the \emph{conditional precision} metrics:
\begin{equation}
\prc_\grp = P_\smplst(y=1 \mid \yhat=1, \grp) \quad \forall \grp
\end{equation}
where $P_\smplst$ is the empirical distribution over the training set $\smplst$.
Note that $\prc_\grp$ depends on the classifier $f$ through the conditioning on (positive) predictions $\yhat=f(x)$.
\squeeze

Precision provides candidates a rough estimate of their likelihood of being hired, given that they pass the screening phase.
By partitioning all candidates into $K$ `groups', as determined by $\grp$,
candidates can obtain partially-individualized group-level information by querying $\prc_\grp$.\footnote{
One reason for having variables $x \setminus \grp$ that are not conditioned on is that they materialize only after application:
for example, in the academic job market, whether a submitted paper is accepted or not, or the contents of a recommendation letter.
}
Interestingly, precision turns out to be sufficient for decision-making.
\begin{proposition}\label{prop:utility_simple_form}
Given a classifier $f$, the utility-maximizing application rule in Eq.~\eqref{eq:rational}
admits the following simple form:
\begin{equation}
\label{eq:decision_rule-precision}
a^* = \one{\prc_\grp \ge \costu}
\end{equation}
\end{proposition}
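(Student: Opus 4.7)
The plan is to evaluate the candidate's expected utility directly under the belief $\ptilde$ and compare the two possible actions. Because $u(0)=0$ deterministically, the $\argmax$ in Eq.~\eqref{eq:rational} reduces to checking whether $\expect{\ptilde(y,\yhat\mid x)}{u(1)} \ge 0$, so the entire argument boils down to rewriting this single expectation.

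First I would expand the expectation using $u(1)=\yhat(y-\costu)$ and the fact that $\yhat\in\{0,1\}$:
\begin{align*}
\expect{\ptilde(y,\yhat\mid x)}{u(1)}
&= \sum_{y,\yhat} \ptilde(y,\yhat\mid x)\,\yhat(y-\costu) \\
&= \ptilde(\yhat=1\mid x)\,\bigl[\ptilde(y=1\mid \yhat=1, x) - \costu\bigr].
\end{align*}
The $\yhat=0$ summands vanish, and the remaining sum factors by the tower rule into the marginal probability of passing screening times the conditional probability of being qualified given that one passes.

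Second, I would identify the belief conditional $\ptilde(y=1\mid \yhat=1, x)$ with the published statistic $\prc_\grp$. By the informational assumption of the setup, a candidate's belief about $(y,\yhat)$ is formed \emph{only} from the released group-level statistics: although she knows her full $x$, the only piece of $x$ that refines this particular conditional under her belief is the group index $\grp\subset x$, so $\ptilde(y=1\mid \yhat=1, x) = \prc_\grp$.

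Third, since the outer factor $\ptilde(\yhat=1\mid x)$ is nonnegative, the sign of the expected utility is controlled entirely by the bracket, and $a^*=1$ iff $\prc_\grp \ge \costu$, which is Eq.~\eqref{eq:decision_rule-precision}. The only corner case is $\ptilde(\yhat=1\mid x)=0$, in which both actions yield zero and the stated rule is recovered by a natural tie-break. I do not expect a technical obstacle; the one step that requires care rather than calculation is justifying the identification of the belief conditional with $\prc_\grp$, which is really a matter of unpacking the informational model rather than a mathematical difficulty.
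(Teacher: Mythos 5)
Your proposal is correct and follows essentially the same route as the paper's proof: both reduce the $\argmax$ to a sign check on $\expect{}{u(1)}$, factor the expectation as $P(\yhat=1\mid\grp)\,(\prc_\grp-\costu)$, and conclude by noting the outer factor is nonnegative, with the degenerate case $P(\yhat=1\mid\grp)=0$ handled by the paper's stated side condition. The only cosmetic difference is that you identify the belief conditional with $\prc_\grp$ after expanding, whereas the paper substitutes $\ptilde(y,\yhat\mid x)=P_\smplst(y,\yhat\mid\grp)$ up front; the content is identical.
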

We defer all proofs to Appendix~\ref{appx:proofs}.
Eq.~\eqref{eq:decision_rule-precision} holds under the mild condition that $P_S(\yhat=1 \mid \grp)>0$, i.e., as long as in each $\grp$, not all candidates are classified as negative. %
Importantly, once beliefs are shaped by $\prc_\grp$,
who applies---and who does not---becomes dependent on the learned $f$.%
\footnote{
Note that precision accounts for a strict subset of the information in $\ptilde(y,\yhat \,|\, \grp)$,
which is generally required for $a^*$ in Eq.~\eqref{eq:rational}.
}
This idea is illustrated in Fig.~\ref{fig:prc}.
When needed, we will use $a^*_\grp$ to denote applications under $\grp$.
Note Eq.~\eqref{eq:decision_rule-precision} implies that $a^*$ does not depend on screening outcomes.
\squeeze


\todo{need to say something about uncertainty remaining in $x \setminus \grp$?}

\begin{figure}[t!]
\centering
\includegraphics[width=\columnwidth]{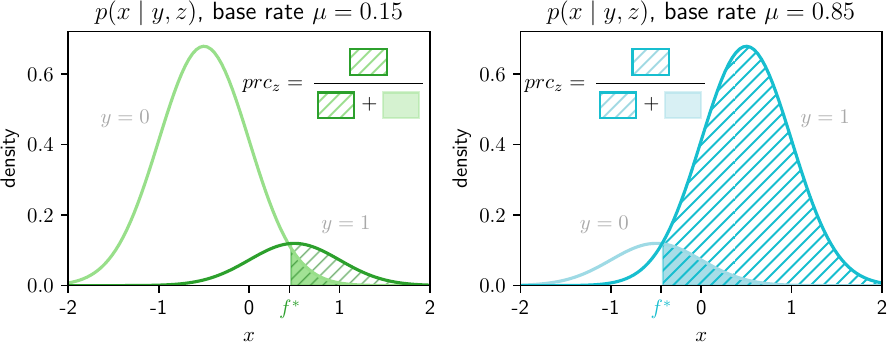}
\caption{
$\prc_z$ under optimal $f^*$ for high and low base rates.
\squeeze
}
\label{fig:prc}
\end{figure}

\paragraph{Learning under self-selection.}
Since the goal of screening is to reduce the load on testing,
screening needs to be accurate on the pool of \emph{applicants}---not on the entire population of candidates.
For a given classifier $f$, denote by $p^f$ its induced distribution over applicants, defined as:
\begin{equation}
p^f(x,y) = 
\begin{cases}
    \frac{1}{A} p(x,y) & \text{ if } a^*=1 \\
    0 & \text{otherwise}
\end{cases}
\end{equation}
where $A{=}\int \!a^* p(x,y) \d x \d y$ is the normalizing constant.
In other words, the probability of sampling a candidate 
from $p^f$
remains proportional to $p$ (with respect to all applicants) if 
the applicant applies,
and zero otherwise.
We refer to $p^f$ as the \emph{self-selective distribution} induced by $f$
(Fig.~\ref{fig:dists}).
Given this, the goal in learning is to minimize predictive error on this induced applicant distribution: 
\squeeze
\begin{equation}
\label{eq:objective_induced}
\argmin\nolimits_{f \in F} \, \expect{p^f(x,y)}{\one{y \neq f(x)}}
\end{equation}
where $F$ is some chosen model class (e.g., linear classifiers or neural networks).
\todo{maybe say we focus on linear - depending on final analytic results, and what we end up doing in the experiments}
Note that in Eq.~\eqref{eq:objective_induced}, both the loss \emph{and} the distribution in the expectation depend on the optimized $f$,
making it an instance of learning under
\emph{decision-dependent distribution shift} \citep{drusvyatskiy2022stochastic}.
Our goals will be to devise a method for optimizing Eq.~\eqref{eq:objective_induced} effectively, and to study the effects of different learning approaches on application outcomes.
\squeeze


\subsection{Preliminaries} \label{sec:prelim}
Before turning to our main results,
we begin with some basic analysis which sheds light on important aspects of our setup.
\squeeze


\paragraph{The role of precision.}
Since the decisions of candidates are based on (conditional) precision,
a key question is whether higher precision is beneficial for them.
Generally, the answer is yes---this is since increased precision can enable $a^*=1$, which implies that the utility gained is positive (vs. $u=0$ for $a^*=0$).
However, this connection is more nuanced,
and is made precise by the following result:
\squeeze
\begin{proposition} \label{prop:util_mono_in_prc}
For any group $\grp$,
its expected utility
is monotonically increasing in conditional precision $\prc_\grp$,
as long as its positive prediction rate $P_\smplst(\yhat=1 \mid \grp)$ is kept fixed.   
\squeeze
\end{proposition}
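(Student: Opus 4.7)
The plan is to compute the expected utility of a representative group-$\grp$ candidate explicitly as a function of $\prc_\grp$ and the positive prediction rate $\pi_\grp := P_\smplst(\yhat=1 \mid \grp)$, and then read off monotonicity.

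First, I would invoke Proposition~\ref{prop:utility_simple_form} to substitute the optimal application rule $a^* = \one{\prc_\grp \ge \costu}$, which is constant within the group. Plugging this into Eq.~\eqref{eq:utility} and taking expectation over $\ptilde(y,\yhat \mid \grp)$ yields
\begin{equation*}
\expect{}{u(a^*)} \;=\; \one{\prc_\grp \ge \costu}\cdot \expect{}{\,\yhat\,(y-\costu)\mid \grp}.
\end{equation*}
Next, I would decompose the inner expectation by conditioning on $\yhat$: the $\yhat=0$ branch contributes zero, while the $\yhat=1$ branch gives $\pi_\grp\cdot\expect{}{y-\costu \mid \yhat=1,\grp} = \pi_\grp\,(\prc_\grp-\costu)$ by definition of conditional precision. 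Combining,
\begin{equation*}
\expect{}{u(a^*)} \;=\; \one{\prc_\grp \ge \costu}\cdot \pi_\grp\,(\prc_\grp-\costu).
\end{equation*}

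With $\pi_\grp$ held fixed, the right-hand side is piecewise: identically $0$ for $\prc_\grp<\costu$, and linear with nonnegative slope $\pi_\grp$ for $\prc_\grp\ge\costu$. Since the function also agrees at the threshold (both pieces equal $0$ at $\prc_\grp=\costu$), it is monotonically non-decreasing in $\prc_\grp$, which is the claim.

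The only subtlety, rather than an obstacle, is making sure that the conditioning on $\grp$ is handled correctly when invoking the definition $\prc_\grp = P_\smplst(y=1 \mid \yhat=1, \grp)$: one needs $\pi_\grp > 0$ so that the conditional precision is well-defined, which is exactly the mild condition already assumed after Proposition~\ref{prop:utility_simple_form}. Everything else is routine algebra on the decomposition of the expectation.
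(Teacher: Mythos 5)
Your proof is correct and follows essentially the same route as the paper: the paper's argument is simply to observe that the derivation in the proof of Proposition~\ref{prop:utility_simple_form} already yields $\expect{}{\yhat(y-\costu)\mid\grp}=P_\smplst(\yhat=1\mid\grp)(\prc_\grp-\costu)$, which is increasing in $\prc_\grp$ when the positive prediction rate is fixed. Your additional inclusion of the indicator $\one{\prc_\grp\ge\costu}$ from the optimal application rule, and the check that the two pieces agree at the threshold, is a slightly more careful rendering of the same computation.
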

Nonetheless, and perhaps surprisingly, higher precision does not always entail better outcomes for applicants:
\begin{proposition} \label{prop:util_not_mono_in_prc}
There exist classifiers $f_1,f_2$ where $f_1$ has higher precision, but $f_2$ entails higher utility for applicants.
\end{proposition}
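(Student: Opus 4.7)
The plan is to construct an explicit counterexample that exploits the caveat in Proposition~\ref{prop:util_mono_in_prc}: monotonicity of utility in $\prc_\grp$ was established only under a \emph{fixed} positive prediction rate $P_\smplst(\yhat = 1 \mid \grp)$. By allowing this rate to differ between $f_1$ and $f_2$, one opens the door to a precision-utility reversal, and I would aim to exhibit the simplest such instance.

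First I would derive a clean closed form for the expected utility of an applicant in a single group $\grp$. Using Proposition~\ref{prop:utility_simple_form}, members of $\grp$ apply iff $\prc_\grp \ge \costu$; conditional on applying, expected utility from Eq.~\eqref{eq:utility} reduces, by conditioning on $\yhat$ and invoking the definition of $\prc_\grp$, to
\[
U_\grp(f) \;=\; P_\smplst(\yhat = 1 \mid \grp) \cdot (\prc_\grp - \costu).
\]
This product form is the heart of the counterexample: it displays utility as a tradeoff between the precision surplus over cost and the likelihood of actually being screened in. A very precise classifier that almost never predicts positive can easily be dominated by a moderately precise one that screens in many more candidates.

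With this expression in hand, I would pick numbers that move $f_1$ and $f_2$ in opposite directions on the two factors. For a single-group instance with $\costu = 1/2$, take $f_1$ with $\prc_\grp = 0.99$ and $P_\smplst(\yhat = 1 \mid \grp) = 0.01$, giving $U_\grp(f_1) \approx 0.005$; and $f_2$ with $\prc_\grp = 0.6$ and $P_\smplst(\yhat = 1 \mid \grp) = 1/2$, giving $U_\grp(f_2) = 0.05 > U_\grp(f_1)$. Both satisfy $\prc_\grp \ge \costu$, so candidates apply under either classifier, yet $f_1$ is strictly more precise while $f_2$ delivers an order of magnitude more utility.

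The remaining (and mildest) step is to realize these numerical targets in an honest instance of the setup: a distribution $p$ on $\X \times \{0,1\}$ and two concrete classifiers matching the stated precision and positive-rate pairs. This is straightforward---partition $\X$ into two regions with appropriately calibrated $P(y=1\mid x)$, and let $f_1$ accept only a tiny near-pure positive region while $f_2$ accepts a larger, moderately noisy region. No genuine obstacle arises; the entire argument is essentially algebraic, with the conceptual content concentrated in the factorization of $U_\grp(f)$.
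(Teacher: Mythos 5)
Your proposal is correct and takes essentially the same route as the paper: the paper's proof is likewise a constructive counterexample (two contingency tables with $\costu=1/2$) in which the higher-precision classifier has a much smaller positive prediction rate, and the factorization $U_\grp(f)=P_\smplst(\yhat=1\mid\grp)(\prc_\grp-\costu)$ you rely on is exactly Eq.~\eqref{eq:util_derivation} already established in the proof of Proposition~\ref{prop:utility_simple_form}. Your numbers work and the realizability step is indeed routine, so there is no gap.
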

The proof is constructive, and relies on a simple contingency table.
Thus, the choice of classifier not only determines who applies,
but also the potential benefit of applying.

\paragraph{Incentive alignment.}
Since learning optimizes for accuracy,
but users generally seek higher precision,
it is natural to ask how these two incentives relate.
While not precisely aligned, our next result
shows that they are tightly connected.
For a group $\grp$,
let $\base_\grp = P_\smplst(y=1 \mid \grp)$ denote its \emph{base rate} (which does not depend on $f$),
and $\acc_\grp = P_\smplst(y=\yhat \mid \grp)$ denote its empirical accuracy (which does).
\squeeze
\begin{proposition}
\label{prop:acc_thresh_apply-or-not}
Fix $\costu$, and consider some group $\grp$. 
\begin{itemize}[leftmargin=1em,topsep=0em,itemsep=0.3em]
\item 
If $\acc_\grp < 1-\base_\grp \max\{1,\frac{1}{c}-1\}$,
then necessarily $a^*_\grp=0$.
\item 
If $\acc_\grp \ge 1-\base_\grp \min\{1,\frac{1}{c}-1\}$,
then necessarily $a^*_\grp=1$. %
\end{itemize}
\end{proposition}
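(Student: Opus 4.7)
By Proposition~\ref{prop:utility_simple_form}, the question ``does group $\grp$ apply?'' is equivalent to ``is $\prc_\grp \ge \costu$?'', so both bullets reduce to bounding $\prc_\grp$ in terms of $\acc_\grp$, $\base_\grp$, and $\costu$ across \emph{all} classifiers consistent with the given $\acc_\grp$ and $\base_\grp$. I would work entirely at the level of the within-group $2{\times}2$ confusion table, which, once $\base_\grp$ and $\acc_\grp$ are fixed, has a single remaining degree of freedom; this reduces the proposition to a one-variable calculus exercise.

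\paragraph{Steps.} Let $t, s, u, v$ denote the normalized entries TP, FP, FN, TN within the group, so $t+s+u+v=1$, $\base_\grp = t+u$, and $\acc_\grp = t+v$. Eliminating $u,v$ gives $s = t + 1 - \acc_\grp - \base_\grp$ with feasible range $t \in [\max\{0,\acc_\grp+\base_\grp-1\},\, \min\{\base_\grp,\acc_\grp\}]$, and
\[
\prc_\grp \;=\; \frac{t}{2t + 1 - \acc_\grp - \base_\grp}.
\]
A direct computation shows $\mathrm{sign}(d\prc_\grp/dt) = \mathrm{sign}(1-\acc_\grp-\base_\grp)$, so $\prc_\grp$ is monotone in $t$ and attains its extrema at the endpoints: the endpoint $t = \min\{\base_\grp,\acc_\grp\}$ evaluates to $\base_\grp/(\base_\grp+1-\acc_\grp)$ or $\acc_\grp/(\acc_\grp+1-\base_\grp)$ depending on which of $\base_\grp, \acc_\grp$ is smaller, while the other endpoint (when $\acc_\grp+\base_\grp\ge 1$) gives $s=0$ and $\prc_\grp = 1$.

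For bullet (a), the hypothesis forces $\acc_\grp+\base_\grp<1$ (since $\max\{1,(1-\costu)/\costu\}\ge 1$), so the \emph{maximum} of $\prc_\grp$ over feasible classifiers is at $t=\min\{\base_\grp,\acc_\grp\}$; splitting on $\base_\grp \lessgtr \acc_\grp$ then reduces ``$\prc_\grp<\costu$ for every feasible classifier'' to one of the two inequalities $\base_\grp(1-\costu)<\costu(1-\acc_\grp)$ or $\acc_\grp(1-\costu)<\costu(1-\base_\grp)$, each of which follows from the hypothesis (one directly, the other after a swap using $\base_\grp \lessgtr \acc_\grp$). Bullet (b) is dual: the hypothesis forces $\acc_\grp+\base_\grp\ge 1$, the \emph{minimum} of $\prc_\grp$ is again at $t=\min\{\base_\grp,\acc_\grp\}$, and the two reversed inequalities $\base_\grp(1-\costu)\ge\costu(1-\acc_\grp)$ or $\acc_\grp(1-\costu)\ge\costu(1-\base_\grp)$ arise and are supplied, respectively, by the two branches of $\min\{1,(1-\costu)/\costu\}$.

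\paragraph{Main obstacle.} The individual algebraic inequalities are routine; the subtle point is that the ``tight'' direction swaps with both $\costu \lessgtr 1/2$ and $\base_\grp \lessgtr \acc_\grp$. That is exactly why the hypothesis of (a) carries $\max\{1,(1-\costu)/\costu\}$ and that of (b) carries $\min\{1,(1-\costu)/\costu\}$: in one sub-case the asymmetric hypothesis \emph{is} the inequality needed, while in the other the roles of $\base_\grp$ and $\acc_\grp$ must be swapped, a manipulation whose correctness is controlled by the sign of $(\base_\grp-\acc_\grp)(2\costu-1)$. I expect the bulk of the bookkeeping to be in cleanly managing this sub-case swap rather than in any single calculation.
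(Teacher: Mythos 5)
Your proof is correct, and it reaches the result by a route that is the same in spirit but different in execution from the paper's. The paper also exploits the single remaining degree of freedom in the group's confusion table once $\base_\grp$ and $\acc_\grp$ are fixed, but it parameterizes that freedom by recall and invokes an identity of Alvarez (2002), $\base\cdot\rcl + (\base-\err)\prc = 2\base\cdot\rcl\cdot\prc$, to write $\prc = \base\,\rcl/(\base(2\rcl-1)+\err)$. The key move there is to rearrange the \emph{condition} $\prc \le \costu$ into $\acc \le 1-\base\bigl(\rcl(\tfrac{1}{\costu}-2)+1\bigr)$, whose right-hand side is linear (hence monotone) in $\rcl\in[0,1]$; taking the worse of the two endpoint values $\rcl=0$ and $\rcl=1$ immediately produces the $\max$/$\min$ over $\{1,\tfrac{1}{\costu}-1\}$ with no case split on $\base_\grp$ versus $\acc_\grp$ at all --- the only dichotomy is $\costu \lessgtr 1/2$, absorbed into the $\min$/$\max$. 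You instead extremize $\prc_\grp$ itself over the TP mass $t$, which forces you to identify the extremal endpoint of the feasible interval and then run the two-by-two case analysis on $\base_\grp \lessgtr \acc_\grp$ and $\costu \lessgtr 1/2$; I checked all four sub-cases of each bullet and they close, with the residual term in the ``swapped'' sub-cases being exactly $(\base_\grp-\acc_\grp)(2\costu-1)$ of the sign you predict. What your version buys is self-containedness (you re-derive the content of the cited identity from the raw table) and an explicit feasibility interval $t\in[\max\{0,\acc_\grp+\base_\grp-1\},\min\{\base_\grp,\acc_\grp\}]$, which the paper leaves implicit and which matters for the accompanying corollary about unconstrained $a^*_\grp$; what the paper's linearization buys is the elimination of the $\base_\grp \lessgtr \acc_\grp$ split, making the derivation several lines shorter.
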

Prop.~\ref{prop:acc_thresh_apply-or-not}
shows that excessively low accuracy can impede application,
and that high enough accuracy can enable it
(see Fig.~\ref{fig:acc_of_c} in Appx.~\ref{appx:proofs} for an illustration of these relations).
Thus, disparity across group accuracies $\acc_\grp$ can translate to disparity in applications $a^*_\grp$.
One worrying implication is that the firm, by controlling accuracy,
can indirectly control for which groups applying is cost-effective.
Fortunately, there are regimes in which learning is devoid such power.
\squeeze
\begin{corollary}
\label{cor:unconstrained_apply}
Fix $\acc_\grp$.
If neither conditions from Prop.~\ref{prop:acc_thresh_apply-or-not} hold,
then $a^*_\grp$ is unconstrained: there exist $f_1,f_2$ that both attain $\acc_\grp$, but $a^*_\grp=0$ under $f_1$, and $a^*_\grp=1$ under $f_2$.
\end{corollary}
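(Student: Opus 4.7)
The plan is to prove tightness of Prop.~\ref{prop:acc_thresh_apply-or-not} by explicit construction: for any $\acc_\grp$ lying in the ``middle'' range where neither of Prop.~\ref{prop:acc_thresh_apply-or-not}'s conditions applies, I would exhibit two classifiers $f_1,f_2$ that share the same group accuracy but whose conditional precisions lie on opposite sides of $\costu$, so that by Prop.~\ref{prop:utility_simple_form} they induce $a^*_\grp=0$ and $a^*_\grp=1$ respectively.

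First I would reduce the problem to a one-parameter family of group-level contingency tables. Letting $\alpha = P_\smplst(y{=}1,\yhat{=}1\mid\grp)$, the constraints $P_\smplst(y{=}1\mid\grp)=\base_\grp$ and $P_\smplst(y{=}\yhat\mid\grp)=\acc_\grp$ pin down the other three entries of the $2\times 2$ table in terms of $\alpha$, and non-negativity confines $\alpha$ to an interval whose endpoints depend on the signs of $\acc_\grp+\base_\grp-1$ and $\acc_\grp-\base_\grp$. Precision takes the explicit monotone form $\prc_\grp(\alpha)=\alpha/(2\alpha+1-\base_\grp-\acc_\grp)$, so it attains its extremes at the two endpoints. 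I would compute these extremes and check that the ``high precision'' endpoint always reaches $\costu$ whenever the first condition of Prop.~\ref{prop:acc_thresh_apply-or-not} fails, and symmetrically that the ``low precision'' endpoint drops below $\costu$ whenever the second condition fails; together these yield two feasible values $\alpha_1, \alpha_2$ with $\prc_\grp(\alpha_1) < \costu \le \prc_\grp(\alpha_2)$.

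Having two admissible contingency tables in hand, the final step is to realize them as classifiers. Since the construction only constrains within-group rates of correct/incorrect positive/negative predictions, any labeling rule matching those rates on the group will do (e.g., fix some arbitrary ordering of the group's examples and flip the prediction of the required number of true positives and true negatives); outside the group $f_1$ and $f_2$ can be chosen identically so that the overall $\acc_\grp$ is exactly the prescribed one under both. The main obstacle is the case analysis in the endpoint computation: depending on whether $\base_\grp+\acc_\grp \gtrless 1$ and on the ordering of $\base_\grp$ versus $\acc_\grp$, different non-negativity constraints bind, and the value of $\prc_\grp$ at the binding endpoint must be matched algebraically against the correct branch of $\max\{1,1/\costu-1\}$ and $\min\{1,1/\costu-1\}$ appearing in Prop.~\ref{prop:acc_thresh_apply-or-not}. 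Once these identifications are carried out, the two classifiers emerge directly from the two chosen values of $\alpha$.
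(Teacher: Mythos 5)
Your setup is the right one, and it is essentially a more explicit version of what the paper itself does: the paper parameterizes the feasible group-level contingency tables by recall and argues that precision can be pushed to either side of $\costu$, while you parameterize by $\alpha=P_\smplst(y{=}1,\yhat{=}1\mid\grp)$. Your formula $\prc_\grp(\alpha)=\alpha/(2\alpha+1-\base_\grp-\acc_\grp)$ and the feasibility interval $\alpha\in[\max\{0,\base_\grp+\acc_\grp-1\},\,\min\{\base_\grp,\acc_\grp\}]$ are both correct, as is the observation that precision is monotone in $\alpha$ (increasing or decreasing according to the sign of $1-\base_\grp-\acc_\grp$), so its extremes sit at the interval endpoints. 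The realization of a target table as an actual classifier is also unproblematic.

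The gap is in the step you defer: the claim that the extremal precisions \emph{always} straddle $\costu$ whenever neither condition of Prop.~\ref{prop:acc_thresh_apply-or-not} holds. The thresholds in Prop.~\ref{prop:acc_thresh_apply-or-not} come from plugging $\rcl=0$ and $\rcl=1$ into the linear accuracy bound, i.e.\ they correspond to the endpoints $\alpha=0$ and $\alpha=\base_\grp$. But when $\acc_\grp<\base_\grp$, the binding non-negativity constraint is $P_\smplst(y{=}0,\yhat{=}0\mid\grp)\ge 0$, so the relevant extremal value is $\alpha=\acc_\grp$ with precision $\acc_\grp/(\acc_\grp+1-\base_\grp)$, which does \emph{not} match the Prop.~\ref{prop:acc_thresh_apply-or-not} thresholds. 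Concretely, take $\costu=0.4$, $\base_\grp=0.5$, $\acc_\grp=0.3$: neither condition holds ($0.3\not<0.25$ and $0.3\not\ge 0.5$), yet every feasible table has $\prc_\grp=\alpha/(2\alpha+0.2)\le 0.3/0.8=0.375<\costu$, so $a^*_\grp=0$ for \emph{every} classifier attaining this accuracy. A symmetric failure occurs on the other side: $\costu=0.6$, $\base_\grp=0.8$, $\acc_\grp=0.4$ forces $\prc_\grp\ge 2/3>\costu$, hence $a^*_\grp=1$ always. So the algebraic matching you postpone cannot be carried out in all branches; the construction goes through only when the extremal recall values are actually feasible (e.g.\ when $\base_\grp\le\acc_\grp$, which covers the regime the paper cares about, where $\base_\grp$ is small and $\costu\ge 1/2$). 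To be fair, the paper's own one-line proof treats recall as ranging freely over $[0,1]$ and has exactly the same hole---your more careful bookkeeping is precisely what exposes it---but as written your plan asserts that the check succeeds rather than carrying it out, and carried out honestly it shows the corollary needs an additional hypothesis.
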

Prop.~\ref{prop:acc_thresh_apply-or-not} also reveals the roles of $\base_\grp$ and $\costu$ as mediating factors.
On the one hand, for fixed $\costu$, if some groups have low base rates $\base_\grp$, then this becomes an innate obstacle to equitable application.
On the other hand, if costs can be reduced for those groups,
then this disparatiy can be corrected---%
motivating our discussion on targeted subsidies in Sec.~\ref{sec:subsidies}.

\begin{figure}[t!]
\centering
\includegraphics[width=\columnwidth]{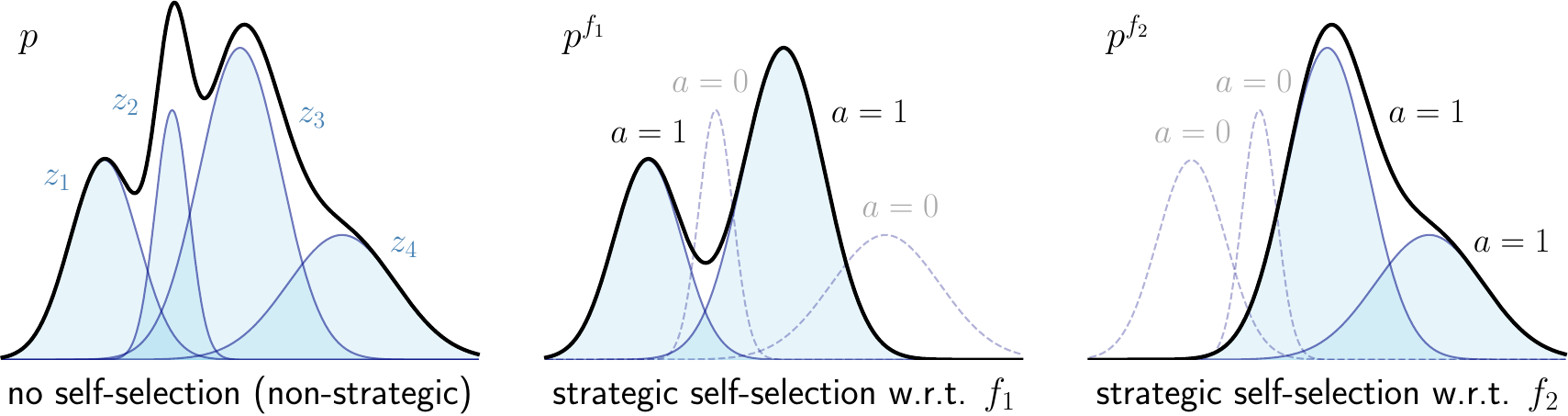}
\caption{\textbf{Self-selective distributions}. 
Absent strategic behavior, all groups in the population are assumed to participate (left).
But when applications depend on the learned classifier (here, $f_1$ vs. $f_2$), self-selection shapes the target distribution that the classifier will face (center vs. right).
}
\label{fig:dists}
\end{figure}

\section{Analysis} \label{sec:analysis}

In this section we consider the implications and possible outcomes of learning in the face of model-induced self-selection.
We begin with a simple form of learning,
and then proceed to consider increasingly more informed approaches.

\subsection{\Naive\ non-strategic learning} \label{sec:naive}
The constructive relation between accuracy and precision discussed in Sec.~\ref{sec:prelim} suggests that perhaps it may suffice to
employ a conventional learning approach that {\naive}ly maximizes predictive accuracy on the original distribution $p$,
despite this being oblivious to self-selection.
This may also seem to favor users:
notice that if we discard $a$,
then user utility, given by $\expect{p}{\yhat(y-c)}$, 
correlates with accuracy, which can be written as $\expect{p}{(2\yhat-1)(2y-1))}$,
since one is obtained from the other via linear transformations of $y$ and $\yhat$.
\squeeze

The crux, however, is that once $a$ \emph{is} accounted for,
this global correlation pattern breaks,
as does the relation between per-group accuracy and precision.
The result is a disconnect between the accuracy the model is assumed to obtain (w.r.t. $p$),
and its actual accuracy at test time (i.e., on the induced $p^f$).
This is demonstrated through a constructive example in Appx.~\ref{appx:naive}:
let $K=2$, and for each $\grp \in \{1,2\}$,
define $p(x,y|\grp)$ to be composed of two per-class Gaussians over $x$,
each centered at their corresponding $y$.
Then by varying the base rate $\mu_2$ (and keeping $\mu_1$ fixed),
where by varying a single $\base_\grp$,
we are able to generate arbitrary outcomes, where
assumed (i.e., standard) accuracy is either:
worse than induced accuracy;
appears to be better;
or remains the same.
\squeeze

\todo{if example is deferred to extended, give a sentence on possible outcomes or underspecification}

\subsection{Semi-strategic learning: varying the threshold} \label{sec:semi-strat}
Since precision determines applications, a slightly more sophisticated ``semi-stretegic'' approach
is to take a pre-trained `\naive' model and then tune its precision strategically to maximize induced accuracy.
For threshold classifiers $f_{\score,\thresh}(x)=\one{\score(x)>\thresh}$
with score function $\score$ and threshold $\thresh$,
here we build on the common practice of first training $f$ for accuracy, 
and then varying
$\thresh$ (with $\score$ fixed) to control precision. 
\squeeze

\paragraph{Monotonicity.}
Generally, precision is expected to increase with $\thresh$,
but this does not occur monotonically.%
\footnote{Intuitively, the reason is that $\thresh$, which determines $\yhat$,
appears in the denominator: $\prc = \frac{P(y=1,\yhat=1)}{P(\yhat=1)}$.
Recall, however, is monotone.}
To simplify our analysis, we present a condition which guarantees monotonicity;
this will ensure that when $\thresh$ is increased,
applications can transition from $a=0$ to $a=1$ at most once.
\squeeze

\begin{definition}[Calibrated score function] \label{def:calibration}
Let $\score$ be a score function such that
$p(\score(x),y)$ is a well-defined density, and
for which $\score$ has full support on $[\alpha,\beta]$.
We say $\score$ is a \emph{calibrated score function} w.r.t. $p$ if
for all $\tau \in [\alpha,\beta)$:
\[
P(y=1 \mid \score(x)>\thresh) \geq P(y=1 \mid \score(x)=\thresh)
\]
\end{definition}

Intuitively, $\score$ is calibrated if it captures the `direction' in which $P(y=1 | x)$ increases.%
\footnote{A similar definition appears in \citet{okati2023within}
in the context of fairness, but is stronger (Appx.~\ref{appx:within_group_monotone}) 
and used for different ends.\squeeze}
And while optimizing $f$ for accuracy does not guarantee calibration,
it is a desirable property which we can hope will emerge (perhaps approximately).
Note score function calibration is a weaker condition (and is implied by) standard calibration
(see Appx.~\ref{appx:calibration}).
\extended{
\begin{lemma} \label{lem:if_calibrated_then_calibrated_score}
Let $\phat(y=1|x)$ be a calibrated probabilistic classifier,
i.e., satisfies $P(y=1 \mid \phat=\thresh) = \thresh$.
Then $\score=\phat$ is a calibrated score function (with range $[0,1]$).
\end{lemma}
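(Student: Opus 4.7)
The plan is to unpack the definition of a calibrated probabilistic classifier, namely $P(y=1 \mid \phat(x)=\thresh)=\thresh$ for (almost) every $\thresh$ in the range of $\phat$, and then verify that setting $\score=\phat$ satisfies the score-function calibration inequality in Definition~\ref{def:calibration} on $[\alpha,\beta]=[0,1]$. Since the right-hand side of that inequality equals $P(y=1 \mid \phat(x)=\thresh)$, which by assumption is exactly $\thresh$, the entire task reduces to showing that $P(y=1 \mid \phat(x) > \thresh) \geq \thresh$ for every $\thresh \in [0,1)$.

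To establish this, I would use the tower property together with calibration. Conditioning on $\phat$ and then on the event $\{\phat > \thresh\}$ gives
\begin{equation*}
P(y=1 \mid \phat(x) > \thresh) \;=\; \expect{\phat \mid \phat > \thresh}{P(y=1 \mid \phat)} \;=\; \expect{\phat \mid \phat > \thresh}{\phat},
\end{equation*}
where the inner equality uses pointwise calibration $P(y=1 \mid \phat=s)=s$. Since the conditioning event forces $\phat > \thresh$, the conditional expectation is bounded below by $\thresh$, giving $P(y=1 \mid \phat(x)>\thresh) \geq \thresh = P(y=1 \mid \phat(x)=\thresh)$, which is exactly the required inequality.

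The only real care needed is well-definedness of the conditional probabilities and expectations. The assumption that $\phat$ has full support on $[0,1]$ guarantees $P(\phat > \thresh) > 0$ for all $\thresh \in [0,1)$, so the left-hand conditional is well-defined; similarly, the density assumption on $p(\score(x),y)$ makes $P(y=1 \mid \phat=\thresh)$ a meaningful quantity (via the regular conditional version used in the calibration hypothesis). I do not foresee a substantive obstacle here—the step that carries the argument is the identity $\expect{\phat \mid \phat>\thresh}{\phat} \geq \thresh$, which is immediate since the integrand exceeds $\thresh$ on the conditioning event, and the rest is bookkeeping connecting the two definitions.
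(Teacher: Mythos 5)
Your proposal is correct and follows essentially the same route as the paper's proof: the tower-property identity $P(y=1 \mid \phat > \thresh) = \expect{\phat \mid \phat>\thresh}{\phat}$ is exactly the paper's explicit integral $\int_{\thresh}^{1} t\,p_\varphi(t)\,\d t \big/ P(\varphi>\thresh)$, and the bound from the integrand exceeding $\thresh$ on the conditioning event is the same inequality the paper applies. The only difference is notational (conditional expectation versus written-out densities), and your remarks on well-definedness match the paper's standing assumptions.
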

}
Our next result links calibration and monotonicity.
\squeeze
\begin{lemma} \label{lem:calibrated_score_function}
Let $\score$ be a score function.
Then $\score$ is calibrated if and only if the precision of
its corresponding $f_{\score,\thresh}$,
namely $P(y=1 \mid f_{\score,\thresh}(x) = 1)$,
is monotonically increasing in $\thresh$.
\end{lemma}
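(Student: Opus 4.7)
The plan is to express precision as an explicit ratio of integrals in $\thresh$, differentiate, and observe that the sign of the derivative is governed exactly by the calibration gap $\prc(\thresh) - P(y{=}1 \mid \score(x){=}\thresh)$, which turns the Lemma into a one-line sign check.

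Concretely, I would introduce the marginal density $h(s)$ of $\score(x)$ (which exists by assumption on $p(\score(x),y)$, and is strictly positive on $[\alpha,\beta]$ by the full-support condition) together with the regular conditional $g(s) := P(y=1 \mid \score(x)=s)$. Precision at $\thresh$ can then be written as $\prc(\thresh) = G(\thresh)/F(\thresh)$, where $F(\thresh) := \int_\thresh^\beta h(s)\,ds = P(\score > \thresh)$ and $G(\thresh) := \int_\thresh^\beta g(s) h(s)\,ds = P(y=1, \score > \thresh)$. Using the fundamental theorem of calculus, $F'(\thresh) = -h(\thresh)$ and $G'(\thresh) = -g(\thresh) h(\thresh)$; the quotient rule, after one line of algebra, then gives
\[
\prc'(\thresh) \;=\; \frac{h(\thresh)}{F(\thresh)}\bigl(\prc(\thresh) - g(\thresh)\bigr).
\]
Because $h(\thresh) > 0$ and $F(\thresh) > 0$ on $[\alpha,\beta)$, the sign of $\prc'(\thresh)$ coincides with the sign of $\prc(\thresh) - g(\thresh) = P(y=1 \mid \score > \thresh) - P(y=1 \mid \score = \thresh)$, which is precisely the quantity appearing in Def.~\ref{def:calibration}.

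From here both directions fall out almost immediately. If $\score$ is calibrated, then $\prc'(\thresh) \geq 0$ at every $\thresh \in [\alpha,\beta)$, so $\prc$ is non-decreasing. For the converse, if $\prc$ is non-decreasing then $\prc'(\thresh) \geq 0$ wherever it exists, and the identity above forces $\prc(\thresh) \geq g(\thresh)$ almost everywhere; since $g$ is itself only determined up to a null set, I would select the representative that makes the calibration inequality hold at every $\thresh$. The only real obstacle is exactly this converse step: monotonicity of $\prc$ supplies the derivative inequality only a.e., and so a small amount of care is required to upgrade to the pointwise condition. If one prefers to avoid this measure-theoretic bookkeeping altogether, an equivalent calculus-free route is to note the elementary mixture identity that $\prc(\thresh_1) \leq \prc(\thresh_2)$ is equivalent to $P(y=1 \mid \thresh_1 < \score \leq \thresh_2) \leq P(y=1 \mid \score > \thresh_2)$, and then recover the calibration inequality by taking $\thresh_1 \to \thresh_2^-$.
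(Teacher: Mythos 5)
Your proposal is correct and follows essentially the same route as the paper's proof: both express precision as a ratio of tail probabilities, differentiate via the quotient rule, and show that the sign of $\prc'(\thresh)$ is exactly the sign of $P(y=1\mid\score(x)>\thresh)-P(y=1\mid\score(x)=\thresh)$ (the paper reaches this by repeated applications of Bayes' rule rather than your direct factorization $p(y{=}1,\varphi{=}s)=g(s)h(s)$, but the computation is the same). Your remark about the a.e.-to-pointwise upgrade in the converse is a fair point of rigor that the paper sidesteps by assuming the density is regular enough that $\prc$ is differentiable everywhere, under which its chain of pointwise equivalences gives both directions at once.
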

We say a score function is calibrated for group $\grp$ if it is calibrated w.r.t $p(\cdotp|\grp)$.
This ensures that $\prc_\grp$ is monotonic,
which in turn 
determines how varying $\thresh$ affects applications:
\squeeze
\begin{corollary} \label{cor:calib=>a_is_step_func}
Consider some group $\grp$. 
If $\score$ is calibrated w.r.t. $\grp$,
then $a^*_\grp$ is either a step function in $\thresh$, or is constant.
\end{corollary}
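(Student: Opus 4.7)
The plan is to combine the two results invoked just before the corollary. By Proposition~\ref{prop:utility_simple_form}, the application rule is $a^*_\grp = \one{\prc_\grp \ge \costu}$, so the behavior of $a^*_\grp$ as a function of $\thresh$ is entirely controlled by the behavior of $\prc_\grp(\thresh)$. I would therefore first state that it suffices to analyze $\thresh \mapsto \prc_\grp(\thresh)$ and then pull in the indicator at the end.

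Next, I would apply Lemma~\ref{lem:calibrated_score_function} to the conditional distribution $p(\cdot \mid \grp)$: the hypothesis that $\score$ is calibrated \emph{for group $\grp$} is exactly what Definition~\ref{def:calibration} requires when the reference distribution is $p(\cdot \mid \grp)$, and the precision appearing in the lemma then coincides with $\prc_\grp(\thresh) = P(y=1 \mid f_{\score,\thresh}(x) = 1, \grp)$. Thus the lemma yields that $\prc_\grp(\thresh)$ is (weakly) monotonically increasing in $\thresh$ over the support $[\alpha, \beta]$.

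With monotonicity in hand, the sublevel structure of the indicator becomes transparent: the set $T = \{\thresh \in [\alpha,\beta] : \prc_\grp(\thresh) \ge \costu\}$ is upward-closed, hence takes the form $\emptyset$, $[\alpha,\beta]$, or an interval $[\thresh^*, \beta]$ (or $(\thresh^*, \beta]$, depending on whether the infimum is attained). In the first two cases $a^*_\grp$ is identically $0$ or identically $1$, i.e.\ constant in $\thresh$; in the remaining case $a^*_\grp(\thresh)$ equals $0$ for $\thresh < \thresh^*$ and $1$ for $\thresh > \thresh^*$, which is the defining shape of a step function. I would close by remarking that whether the jump point itself is mapped to $0$ or $1$ is immaterial to the step-function conclusion.

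The argument is essentially a one-line deduction from Proposition~\ref{prop:utility_simple_form} and Lemma~\ref{lem:calibrated_score_function}, so the only real obstacle is a bookkeeping one: being careful that ``calibrated w.r.t.\ $\grp$'' really means calibrated w.r.t.\ the conditional density $p(\cdot \mid \grp)$ (so that Lemma~\ref{lem:calibrated_score_function} applies verbatim after conditioning on $\grp$), and cleanly treating the potential boundary value of $\prc_\grp$ at $\thresh = \thresh^*$ so the resulting $\{0,1\}$-valued function is unambiguously a step function rather than something more exotic.
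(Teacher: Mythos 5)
Your proposal is correct and follows essentially the same route as the paper's proof: both invoke Lemma~\ref{lem:calibrated_score_function} (applied to $p(\cdot\mid\grp)$) to get monotonicity of $\prc_\grp(\thresh)$ in $\thresh$, and then observe that the indicator $\one{\prc_\grp(\thresh)\ge\costu}$ of a monotone function is constant or a step function. The only cosmetic difference is that you argue directly via the upward-closed superlevel set, whereas the paper derives a contradiction from three thresholds with two sign changes; the content is identical.
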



\todo{merge footnotes? link okati to multi-calib?}

\begin{figure}[t!]
\centering
\includegraphics[width=0.95\columnwidth]{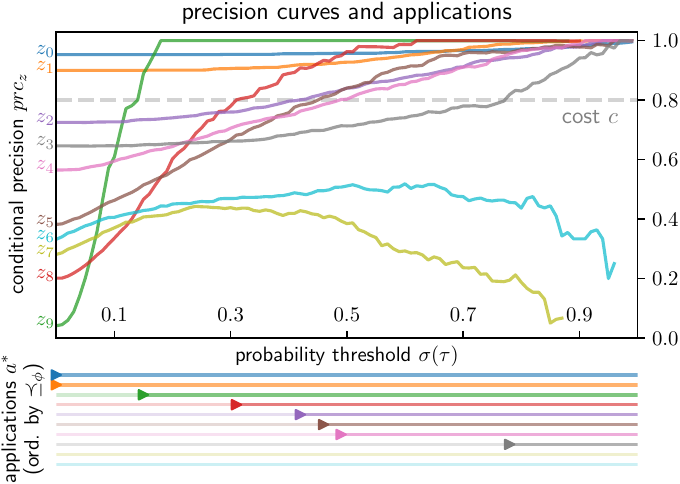}
\caption{
\textbf{Precision and application.}
For a fixed score function $\score$ and varying threshold $\thresh$,
different groups ($K=10$, colored lines) exhibit different precision curves.
Though $\score$ is not mutually calibrated, most curves are roughly monotone and cross the cost $\costu$ (dashed line) at most once.
This induces an ordering $\preceq_\score$ over applications $a^*$ (lower plot).
A semi-strategic learner can affect $a^*$ only by thresholding on $\preceq_\score$.
\squeeze
\todo{add extended comment on $\base$, how green starts low bu climbs fast, etc}
\todo{add similar plot but with $\yhat \perp \grp$ in appendix?}
}
\label{fig:precision_curves+ordering}
\end{figure}

\paragraph{Thresholds and ordering.}
We say that $\score$ is \emph{mutually calibrated} if it is calibrated 
for all $\grp$.%
\footnote{Works discussing standard calibration across groups includes in \citet{pleiss2017fairness} in the context of fairness
and in \citet{wald2021calibration} in the context of out-of-distribution generalization.}
When this holds,
Cor.~\ref{cor:calib=>a_is_step_func} implies that 
$\score$ induces an \emph{ordering} over groups,
and $\thresh$ serves to `threshold' applications w.r.t. that ordering.
Let $\costu_\grp$ be the minimal $\thresh$ s.t. $\prc_\grp \ge \costu$,
and define $\grp \preceq_\score \grp'$ iff $\costu_\grp \le \costu_{\grp'}$.
\squeeze
\begin{corollary}
\label{cor:score_ordering}
Let $\score$ be mutually calibrated, and consider some $\grp,\grp'$
for which $\grp \preceq_\score \grp'$. Then for any $\thresh$ under which
candidates from $\grp'$ apply, candidates from $\grp$ apply as well.
\end{corollary}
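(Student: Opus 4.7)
The plan is to combine three ingredients already laid out in the excerpt: (i) Proposition~\ref{prop:utility_simple_form}, which reduces ``candidates from $\grp$ apply at threshold $\thresh$'' to the event $\prc_\grp(\thresh) \ge \costu$; (ii) Lemma~\ref{lem:calibrated_score_function}, which under mutual calibration makes $\prc_\grp(\thresh)$ monotonically non-decreasing in $\thresh$ for every group; and (iii) the definition of the ordering $\preceq_\score$ via the per-group thresholds $\costu_\grp$.

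First I would fix a $\thresh$ at which candidates from $\grp'$ apply. By Proposition~\ref{prop:utility_simple_form} this is equivalent to $\prc_{\grp'}(\thresh) \ge \costu$. Since $\score$ is calibrated w.r.t.\ $\grp'$, Lemma~\ref{lem:calibrated_score_function} tells us that $\prc_{\grp'}(\cdot)$ is monotone non-decreasing, so the infimum
\[
\costu_{\grp'} = \min\{\thresh' : \prc_{\grp'}(\thresh') \ge \costu\}
\]
is well-defined (this is also where Corollary~\ref{cor:calib=>a_is_step_func} is implicitly used, since it identifies $\costu_{\grp'}$ as the unique step point of $a^*_{\grp'}$). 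Monotonicity then forces $\thresh \ge \costu_{\grp'}$.

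Second, I would chain this with the hypothesis $\grp \preceq_\score \grp'$, which by definition gives $\costu_\grp \le \costu_{\grp'}$, hence $\thresh \ge \costu_\grp$. Applying monotonicity once more, this time to $\grp$, yields
\[
\prc_\grp(\thresh) \;\ge\; \prc_\grp(\costu_\grp) \;\ge\; \costu,
\]
where the last inequality holds by the defining property of $\costu_\grp$. A final appeal to Proposition~\ref{prop:utility_simple_form} concludes that candidates from $\grp$ apply at $\thresh$ as well.

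I do not anticipate a serious obstacle here; the heavy lifting has already been done by Lemma~\ref{lem:calibrated_score_function} and Corollary~\ref{cor:calib=>a_is_step_func}, and the result is essentially a bookkeeping statement about a one-dimensional threshold induced on a totally ordered family of groups. The only subtlety worth flagging is the degenerate case in which $\prc_{\grp'}(\thresh) \ge \costu$ but $\costu_{\grp'}$ is not attained (e.g.\ if the precision function jumps across the value $\costu$ rather than hitting it); handling this just requires interpreting $\costu_\grp$ as an infimum and using non-strict monotonicity, which is precisely what Definition~\ref{def:calibration} provides.
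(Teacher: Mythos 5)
Your proposal is correct and follows essentially the same route as the paper's own proof: both deduce $\thresh \ge \costu_{\grp'} \ge \costu_\grp$ from the definition of $\preceq_\score$, then use the monotonicity of $\prc_\grp$ guaranteed by Lemma~\ref{lem:calibrated_score_function} to conclude $\prc_\grp(\thresh) \ge \prc_\grp(\costu_\grp) \ge \costu$. Your closing remark about interpreting $\costu_\grp$ as an infimum when the minimum is not attained is a reasonable extra care the paper glosses over, but it does not change the argument.
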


This idea is exemplified in Fig.~\ref{fig:precision_curves+ordering}, which plots per-group precisions $\prc_\grp$ as a function of $\thresh$
for a standard logistic regression model trained on simple synthetic data
($K=10$, class-conditional Gaussians $p(x|y,\grp)$ per group $\grp$; details in Appendix~\ref{appx:exp_details_synth}).
As can be seen, mutual calibration does not hold,
but nonetheless most $\prc_\grp$ are generally increasing, and cross
$\costu$ at most once, which enables the ordering $\preceq_\score$.
\squeeze

\extended{$z_9$ has high slope. $z_5$ and $z_6$ start similar, but diverge. $z_0$ and $z_1$ have $\base_\grp>\costu$ (cor).}

This points to an inherent limitation of semi-strategic learning:
once $\score$ is fixed, applications that result from tuning $\thresh$
must comply with the constraints imposed by $\preceq_\score$.
On the one hand, this provides an innate restraint on the firm's power to determine who applies and who does not.
On the other hand, it impedes the firm's ability to obtain its predictive goals:
Appendix~\ref{appx:synth_semi-strat} shows an example with $K=2$
where any reasonable classifier induces an ordering $\grp_1 \preceq \grp_2$,
but optimality w.r.t. induced accuracy necessitates $\grp_1 \succ \grp_2$.

\paragraph{Base rates and applications.}
Notice that for the minimal $\thresh$, it holds that $\prc_\grp = \base_\grp$;
this is since \emph{all} predictions are positive, and therefore conditioning on $\yhat = 1$ is vacuous.
Coupled with calibration, this has concrete implications:
\squeeze
\begin{corollary} \label{cor:base>c_always_applies}
Let $\grp$ with $\base_\grp \geq \costu$.
If $\score$ is calibrated w.r.t. $\grp$, 
then $\grp$ always applies, i.e., has $a^*_\grp=1$ for any value of $\thresh$.
\end{corollary}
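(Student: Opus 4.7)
The plan is to chain together the two ingredients that were just made explicit in the text preceding the corollary: the boundary behavior of $\prc_\grp$ at the smallest threshold, and the monotonicity of $\prc_\grp$ in $\thresh$ guaranteed by calibration. Since applications are determined by the simple rule $a^*_\grp = \one{\prc_\grp \ge \costu}$ from Prop.~\ref{prop:utility_simple_form}, it suffices to show $\prc_\grp \ge \costu$ for every admissible $\thresh$.

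First I would fix $\score$ with support $[\alpha,\beta]$ on group $\grp$ and consider the boundary threshold $\thresh_0 = \alpha$ (or, more carefully, any $\thresh < \alpha$). At this threshold every candidate in $\grp$ satisfies $\score(x) > \thresh$, hence $\yhat = 1$ almost surely, so conditioning on $\yhat=1$ is vacuous and
\[
\prc_\grp \big|_{\thresh = \thresh_0} \;=\; P(y=1 \mid \yhat=1, \grp) \;=\; P(y=1 \mid \grp) \;=\; \base_\grp.
\]
Since $\score$ is calibrated w.r.t.\ $\grp$, Lemma~\ref{lem:calibrated_score_function} (applied to the conditional distribution $p(\cdot \mid \grp)$) says that the map $\thresh \mapsto \prc_\grp$ is monotonically nondecreasing. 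Combining the boundary value with monotonicity gives $\prc_\grp \ge \base_\grp$ for all $\thresh \in [\alpha,\beta)$. By hypothesis $\base_\grp \ge \costu$, so $\prc_\grp \ge \costu$ throughout, and thus $a^*_\grp = 1$ everywhere by Prop.~\ref{prop:utility_simple_form}.

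The main delicacy—rather than a deep obstacle—is dealing with thresholds for which $\prc_\grp$ is not well-defined, namely those with $P_\smplst(\yhat=1 \mid \grp) = 0$ (e.g.\ $\thresh \ge \beta$), since Prop.~\ref{prop:utility_simple_form} itself was stated under this mild condition. For such degenerate thresholds the applicant pool from $\grp$ is empty, and one either excludes them from the statement or adopts the natural convention that $\prc_\grp$ inherits its value from the limit as $\thresh$ approaches from below, both of which are consistent with the monotone behavior established above. Beyond this bookkeeping the argument is a direct two-line consequence of the structural results already proved.
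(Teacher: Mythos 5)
Your argument is correct and is exactly the one the paper intends: at the minimal threshold all predictions are positive so $\prc_\grp=\base_\grp\ge\costu$, and Lemma~\ref{lem:calibrated_score_function} applied to $p(\cdot\mid\grp)$ gives monotonicity of $\prc_\grp$ in $\thresh$, so $\prc_\grp\ge\costu$ everywhere and $a^*_\grp=1$ by Prop.~\ref{prop:utility_simple_form}. Your extra remark about thresholds where $P_\smplst(\yhat=1\mid\grp)=0$ is a reasonable piece of bookkeeping the paper glosses over, but it does not change the substance.
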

\todo{before taking out - remember this is used in later results!}
One consequence is that a semi-strategic firm wishing to favor `quality' groups
(i.e., having high $\base_\grp$) can easily do so by revoking screening altogether
(via $\thresh = -\infty$),
so that $a^*=\one{\base_\grp \geq \costu}$:
this creates an appearance of equal opportunity,
but in effect discriminates against low-$\base_\grp$ candidates
by exploiting their need to overcome a larger cost gap.
\squeeze


Nonetheless, for general $\thresh$, the restraining effect of $\base_\grp$ can weaken.
Note how in Fig.~\ref{fig:precision_curves+ordering} curves differ in their intercepts (equal to $\base_\grp$), but also in their `slopes'.
Slopes are given by the part of $\prc_\grp$ that does not depend on $\base_\grp$---using Bayes, we can write:%
\footnote{This slope term is sometimes referred to as `normalized' recall.}
\begin{equation}
\prc_\grp = \frac{P(\yhat=1 \mid y=1, \grp)}{P(\yhat = 1 \mid \grp)} \base_\grp  
\end{equation}
This shows how low-$\base_\grp$ groups can apply `before' (i.e., under lower $\thresh$)
other groups with higher base rates.
For example, in Fig.~\ref{fig:precision_curves+ordering}, note how $\grp_9$ has low base rate, but its precision curve rises quickly, and it applies early in $\preceq_\score$.
Another example are $\grp_5$ and $\grp_6$ who have similar base rates and therefore begin similarly, but end up very differently in the ordering.


\subsection{Strategic learning: anticipating self-selection} \label{sec:analysis_strat}
If the firm is aware of self-selection, then it should benefit from encoding this directly into the learning objective.
A natural objective for a strategic firm is therefore:
\begin{equation}
\label{eq:objective_induced_empirical}
\argmin_{f\in F} \frac{1}{m^f} \sum\nolimits_i a^*_i  \loss(y_i,f(x_i))
\end{equation}
which is the empirical analog of Eq.~\eqref{eq:objective_induced}.
Here, $\ell$ is some proxy loss (e.g., log-loss or hinge loss),
$a^*_i$ is the application decision of candidate $i$ (recalling this depends on $f$),
and $\quad m^f = \sum_i a^*_i$ is the total number of applicants under $f$.%
\footnote{Technically, Eq.~\eqref{eq:objective_induced_empirical} is ill-defined if $a^*_i=0$ for all $i$. To circumvent this outcome, we can define $\frac{0}{0}=\infty$; in practice, we implement this using an additive penalty term (see end of Sec.~\ref{sec:method}).}


\paragraph{Strategic power.}
Once learning accounts for self-selection,
it can utilize its informational advantage over candidates:
the learner knows $y$ and can compute $\yhat$ for any $x \in \smplst$,
whereas candidates only have access to $\prc_\grp$.
A key question is: %
what are the reaches of this power? 
Our next result shows that the firm's control can be quite extensive.
\squeeze
\begin{proposition}
\label{prop:strategic_single_group_applies}
Let $F$ be any class of functions with group-specific `offset' terms $v_\grp$,
i.e., are of the form $f(x)=g(x)+v^\top \grp$.
If all groups have $\base_\grp<\costu$,
then the optimal $f$ for Eq.~\eqref{eq:objective_induced_empirical}
is such that only a single group applies.
Otherwise, $f$ is such that
of all groups with $\base_\grp<\costu$,
only at most one group applies.
In both cases, this group is that which attains the highest accuracy had it been trained on separately. 
\squeeze
\end{proposition}
\todo{does this assume calibration? if yes, then say this}
The proof is constructive, giving an algorithm that could theoretically obtain the optimal $f$.
Intuitively, Prop.~\ref{prop:strategic_single_group_applies} holds
since linear models $w^\top x + b$ have coefficients $w_\grp$ associated with each $\grp$ (encoded as 1-hot);
thus, $f$ can be constructed to ensure that only a specific target group $\grp$ applies by 
setting $w_\grp = 0$;
setting $w_{\grp'}=-\infty$ for all other $\grp' \neq \grp$;
and training the remaining coefficents $w_{x \setminus \grp}$ and $b$
to predict well on examples from $\grp$ in $\smplst$, subject to $\prc_\grp \geq \costu$.

\todo{careful - reviewer could ask why this doesn't happen empirically. either say the algo is theoretical, or requires oracle access to the minimizer of the standard loss.}

Prop.~\ref{prop:strategic_single_group_applies} asserts that
even with simple models, strategic learning has the capacity to 
designate a single group---that which enables the highest accuracy---%
and block all others candidates from applying.
But this power extends further:
if the firm has in mind goals other than accuracy,
it can essentially
shape applications, through self-selection, as it sees fit.
\begin{corollary} \label{cor:specific_groups}
Let $F$ include functions with group-specific offsets $v_\grp$ as before,
and consider some $f \in F$.
Then for any set of groups $A=\{\grp \,:\, \base_\grp<\costu\} \subseteq [K]$,
it is possible to construct an $f'$ which agrees with $f$ on all $x$ with $\grp \notin A$,
but prevents application from all candidates in any $\grp \in A$.
\squeeze
\end{corollary}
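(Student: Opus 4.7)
The plan is to exploit the additive group-specific structure of $F$: since $f(x) = g(x) + v^\top \grp$ and $\grp$ is 1-hot, the scalar offset $v_\grp$ for each group can be modified independently. For each $\grp \in A$, I will push $v_\grp$ high enough that every training candidate in that group is predicted positive, thereby forcing $\prc_\grp$ to collapse to $\base_\grp$, which is strictly below $\costu$ by the defining property of $A$; this yields $a^*_\grp = 0$ via Prop.~\ref{prop:utility_simple_form}. For $\grp \notin A$, I leave $v_\grp$ untouched, which trivially preserves $f$'s predictions (and hence its precision and application outcomes) on those groups.

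Concretely, I would define $f'$ identically to $f$ except that for each $\grp \in A$ the offset is replaced by any $v'_\grp > \max\{-g(x_i) : \grp_i = \grp\}$ (a finite max on any non-empty $\grp$ in $\smplst$). The first step is to verify preservation on $\grp \notin A$: since the corresponding offsets are unchanged, $f'(x) = f(x)$ pointwise for every $x$ with $\grp \notin A$, and hence the joint empirical frequencies $P_\smplst(\yhat, y \mid \grp')$ — and thus $\prc_{\grp'}$ and $a^*_{\grp'}$ — are identical under $f$ and $f'$.

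The second step is to verify blockage for $\grp \in A$: by the choice of $v'_\grp$, every training point in that group satisfies $g(x_i) + v'_\grp > 0$, so $\yhat_i = 1$ and thus $P_\smplst(\yhat = 1 \mid \grp) = 1$. This both validates the applicability condition for Prop.~\ref{prop:utility_simple_form} and makes conditioning on $\yhat = 1$ within $\grp$ vacuous, so
\[
\prc_\grp \,=\, P_\smplst(y=1 \mid \yhat=1, \grp) \,=\, P_\smplst(y=1 \mid \grp) \,=\, \base_\grp \,<\, \costu .
\]
Invoking Eq.~\eqref{eq:decision_rule-precision} then gives $a^*_\grp = 0$.

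The only potential subtlety is the degenerate case in which some $\grp \in A$ has no representatives in $\smplst$ (so $\prc_\grp$ is undefined to begin with); but there is nothing to prevent in this case, since no training candidate applies regardless, and the construction simply treats $v'_\grp$ as any finite value. Beyond this, there is no real obstacle — the whole argument is a one-line construction that leverages the group-specific offset structure already used in Prop.~\ref{prop:strategic_single_group_applies}, combined with the fact that saturating $\yhat=1$ on a group forces its precision down to its base rate.
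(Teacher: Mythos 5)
Your construction is correct. The paper gives no standalone proof of this corollary; the closest it comes is the intuition sketched for Prop.~\ref{prop:strategic_single_group_applies}, where groups are blocked by setting their offsets to $-\infty$ so that \emph{every} member fails screening ($\yhat=0$ throughout the group). That mechanism prevents application only via the degenerate branch of the proof of Prop.~\ref{prop:utility_simple_form}, where $P_\smplst(\yhat=1\mid\grp)=0$, precision is undefined, and the candidate is merely indifferent (expected utility exactly zero) rather than strictly deterred. You go the opposite way: saturate the offset upward so that $\yhat=1$ for the whole group, making the conditioning on $\yhat=1$ vacuous and collapsing $\prc_\grp$ to $\base_\grp<\costu$. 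This is arguably the better reading of the corollary as stated: it is the construction that actually uses the hypothesis $\base_\grp<\costu$ (which the $-\infty$ route would render superfluous), it keeps $P_\smplst(\yhat=1\mid\grp)>0$ so the clean decision rule $a^*=\one{\prc_\grp\ge\costu}$ of Prop.~\ref{prop:utility_simple_form} applies without appeal to the tie-breaking convention, and it makes non-application a strict preference. Your verification that groups outside $A$ are untouched (one-hot encoding, so only the candidate's own offset enters $f(x)$) and your handling of empty groups in $\smplst$ are both fine. The only cosmetic remark is that the "vacuous conditioning" fact you rely on is exactly the observation the paper itself makes in Sec.~\ref{sec:semi-strat} (precision equals base rate at the minimal threshold), so your argument is fully consistent with the paper's toolkit even though it inverts the direction of the offset perturbation.
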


\paragraph{Taming strategic learning.}
The power to determine application outcomes derives from the ability to influence each group individually.
To restrict this power, one idea is to simply remove $\grp$ from the set of features.
The caveat in this approach is that even if $\grp$ is not used explicitly,
if the remaining features $x \setminus \grp$ are informative of $\grp$,
then $\grp$ could be exploited implicitly.
To ensure that learning is entirely agnostic to group memberships,
we propose to augment the objective in Eq.~\eqref{eq:objective_induced}
with a constraint enforcing independence:
\squeeze
\begin{equation}
\label{eq:fairness_constraint}
\argmin\nolimits_{f \in F} \, \expect{p^f}{\one{y \neq f(x)}}
\quad\,\,\, \text{s.t.} \quad
f(x) \perp \grp
\end{equation}
\todo{should/can we instead have $\psi(x) \perp \grp$ or $x \perp \grp$?}
\todo{switch to empirical objective?}
In the fairness literature, this constraint is known as \emph{statistical parity} \citep{dwork2012fairness}.
Note that independence must be enforced on the training distribution $p$,
rather than on the induced $p^f$ (i.e., after the fact);
otherwise, a strategic learner could `choose' to discard any $\grp$
for which satisfying Eq.~\eqref{eq:fairness_constraint} is either difficult
or hurts predictive performance.

\todo{important: indep considers only predictions - learning still makes use of z in a to anticipate applications. if we were to remove z entirely from the data, then we can't learn strategically, only naively. but then this is back to sec 3.1. this also happens to the `remove z' heuristic - its removed from f, not from a.}

Our next result shows that statistical parity limits the power of strategic learning in a particular way%
---by inducing an ordering over group applications.
Let $\preceq_\base$ be an ordering over groups by decreasing base rate,
namely $\grp \preceq_\base \grp'$ if $\base_\grp \ge \base_{\grp'}$.
\squeeze
\begin{proposition}
\label{prop:indep_order}
Let $f$, and assume $f(x) \perp \grp$.
For any $\grp,\grp'$ with $\grp \preceq_\base \grp'$,
if $\grp'$ applies under $f$, then $\grp$ also applies.
\end{proposition}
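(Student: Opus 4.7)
The plan: by Prop.~\ref{prop:utility_simple_form}, group $\grp$ applies iff $\prc_\grp \geq \costu$, so the statement reduces to the monotonicity claim that under $f(x)\perp\grp$, $\base_\grp \geq \base_{\grp'}$ implies $\prc_\grp \geq \prc_{\grp'}$. The conclusion then follows by the chain $\prc_\grp \geq \prc_{\grp'} \geq \costu$, giving $a^*_\grp=1$ whenever $a^*_{\grp'}=1$.

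First, I would rewrite precision via Bayes:
\[
\prc_\grp \;=\; \frac{P(\yhat=1\mid y=1,\grp)\,\base_\grp}{P(\yhat=1\mid \grp)}.
\]
The independence $f(x)\perp\grp$ forces $P(\yhat=1\mid\grp)=P(\yhat=1)=:q$, i.e.\ the denominator is constant across groups. Letting $a_\grp := P(\yhat=1\mid y=1,\grp)$, this gives $\prc_\grp = a_\grp\base_\grp/q$, so monotonicity of $\prc_\grp$ in $\base_\grp$ reduces to monotonicity of the numerator $a_\grp\base_\grp$.

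The main obstacle is controlling $a_\grp$ across groups. With $b_\grp := P(\yhat=1\mid y=0,\grp)$, the law of total probability together with independence yields the per-group identity $q = a_\grp\base_\grp + b_\grp(1-\base_\grp)$, and hence $a_\grp\base_\grp = q - b_\grp(1-\base_\grp)$. Monotonicity of $\prc_\grp$ in $\base_\grp$ therefore reduces to showing $b_\grp(1-\base_\grp)$ is weakly non-increasing in $\base_\grp$. This is the delicate point: it does not follow from the single linear constraint alone, so I would exploit the fact that a common classifier $f$ generates all per-group statistics---tying the pairs $(a_\grp, b_\grp)$ together through $f$ rather than leaving them free to be permuted across groups---to argue that higher-$\base_\grp$ groups cannot concentrate disproportionately more of the fixed positive-prediction mass $q$ on their negative examples. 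Once this is established, substituting back into $\prc_\grp = a_\grp\base_\grp/q$ yields the desired monotonicity and hence the proposition.
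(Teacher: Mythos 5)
Your setup matches the paper's: both arguments reduce the claim to showing $\prc_\grp \ge \prc_{\grp'}$ whenever $\base_\grp \ge \base_{\grp'}$, expand precision via Bayes as $\prc_\grp = P(\yhat=1\mid y=1,\grp)\,\base_\grp / P(\yhat=1\mid\grp)$, and use statistical parity to make the denominator a group-independent constant $q$. You also correctly identify the remaining obstacle: the numerator still contains the group-dependent factor $a_\grp = P(\yhat=1\mid y=1,\grp)$, and the single linear constraint $a_\grp\base_\grp + b_\grp(1-\base_\grp)=q$ does not pin it down. The genuine gap is that you leave this step unresolved, and the route you sketch for closing it cannot work: the fact that all per-group statistics are generated by one classifier $f$ imposes no further constraint on the pairs $(a_\grp,b_\grp)$, because the class-conditional feature distributions $p(x\mid y,\grp)$ are arbitrary, so a single $f$ can realize essentially any combination of per-group true-positive and false-positive rates consistent with parity. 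Concretely, take $\base_\grp = 1/2$ with $f$ firing on $2/5$ of $\grp$'s negatives and none of its positives, and $\base_{\grp'}=1/5$ with $f$ firing on all of $\grp'$'s positives and none of its negatives; then $P(\yhat=1\mid\grp)=P(\yhat=1\mid\grp')=1/5$ so parity holds, yet $\prc_\grp=0<1=\prc_{\grp'}$, and for any $\costu\in(0,1]$ the lower-base-rate group applies while the higher one does not. Marginal independence alone therefore does not suffice, and no argument from ``common classifier structure'' can rescue it.

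The paper closes the step you could not by reading $f(x)\perp\grp$ as also giving $P(\yhat=1\mid y=1,\grp)=P(\yhat=1\mid y=1)$, i.e.\ independence conditional on $y$ (equalized true-positive rates across groups). With $a_\grp$ constant, a second application of Bayes collapses precision to $\prc_\grp = \prc\,\base_\grp/\base$ as in Eq.~\eqref{eq:prec_indep}, from which monotonicity in $\base_\grp$ is immediate. Your instinct that something beyond the marginal parity constraint is needed was correct; the missing ingredient is this conditional-independence reading of the hypothesis, not a structural property shared by the per-group statistics of a single classifier.
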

Thus, if $f$ satisfies statistical parity, then applications must comply with $\preceq_\base$;
in other words, $f$ can still determine who applies, but now only by thresholding
on $\base$.
Note this ordering is independent of $f$
(c.f. $\preceq_\score$ from Cor.~\ref{cor:score_ordering}, which is).
This follows from precision now admitting the following form:
\squeeze
\begin{equation}
\label{eq:prec_indep}
\prc_\grp = P_\smplst(y=1 \mid \yhat=1) \frac{P_\smplst(y=1 \mid \grp)}{P_\smplst(y=1)}
= \prc \frac{\mu_\grp}{\mu}
\end{equation}
where $\prc$ and $\mu$ are the global precision and base rate, respectively (see proof in Appendix~\ref{appx:proofs}).
Applications can therefore be rewritten as
$a^* = \one{\prc \geq c \cdot \frac{\base}{\base_\grp}}$,
and since $f$ only affects $\prc$, and does so globally,
high $\base_\grp$ can be interpreted as reducing `effective' per-group application costs.


\todo{add plot showing how lines move together both with different slopes?}

\subsection{Affirmative action} \label{sec:subsidies}


Consider a social planner who wishes to promote the application of some group $\grp$.
If statistical parity holds, then 
the social planner can grant \emph{targeted subsidies},
$\sbsdy_\grp \ge 0$, to reduce costs for candidates in $\grp$.
The decision rule becomes: 
\begin{equation}
\label{eq:application_targeted_subsidy}
a^*_\grp = \one{\prc_\grp \geq \costu_\grp}
\quad \text{where} \quad
\costu_\grp = \costu - \sbsdy_\grp
\end{equation}
Here,
per-group costs $\costu_\grp$ correct for low $\base_\grp$ (in units of $\base$) as:
\squeeze
\begin{equation}
\prc_\grp(\sbsdy_\grp) \coloneqq
\prc \frac{\base_\grp(\sbsdy_\grp)}{\base}, \qquad 
\base_\grp(\sbsdy_\grp) \coloneqq \base_\grp + s_\grp \base
\end{equation}
Subsidies can be used to `bump up' the target group in the ordering $\preceq_\base$ (Prop.~\ref{prop:indep_order}).
Nonetheless, a promoted group will apply only if its inclusion does not degrade potential accuracy, given its new position.
With calibration,
$\grp$ can be guaranteed to apply if $\sbsdy_\grp$ is sufficiently large 
so that $\base_\grp > c_\grp$ (Cor.~\ref{cor:base>c_always_applies}).
\squeeze

\todo{if learner controls subsidies, then can only help (if does not have to allocate all)}

\todo{
weaker requirement: ensure applications are independent on group *given base rate* - that is, group identity doesn't matter, only its base rate.
and this is correctable by subsidies - because they counter low base rate.}

\todo{what else can we say here? maybe result on incentives and/or alignment? (shapley?)}

\section{Method} \label{sec:method}
We now turn to describing an effective method for optimizing the strategic learning objective in Eq.~\eqref{eq:objective_induced_empirical}.
This objective differs from the standard ERM objective in that the classifier $f$ determines for each example not only its prediction, but also, through $a^*$, whether it should be `turned on'
or not.
The challenge is therefore to account for the dependence of application decisions $a^*$ on the classifier $f$ being optimized.
This is further complicated by the fact that $a^*$ is discrete. 

Our solution, which jointly addresses both problems, is to replace $a^*_i \in \{0,1\}$ with a continuous surrogate $\atilde_i \in [0,1]$ that is differentiable in the parameters of $f$. 
These, together with the normalizing factor,
are used to define differentiable per-example weights,
$w_i^f = \atilde_i/\mtilde^f$,
where $\mtilde_f=\sum_i \atilde_i$,
so that $\sum_i w_i^f =1$ always.
Our proposed objective is:
\begin{equation}
\label{eq:objective_induced_smoothed}
\argmin\nolimits_{f\in F} \sum\nolimits_i w_i^f  \loss(y_i,f(x_i)) + \lambda R(f)
\end{equation}
where $R$ is an (optional) regularization or penalty term.
We now describe how to effectively implement 
each component.

\paragraph{Differentiable applications.}
Recall that for a candidate $i$ with $\grp_i = \grp$, her application decision is $a^*_i = \one{\prc_\grp \geq \costu}$ (see Eq.~\eqref{eq:decision_rule-precision}).
A natural first step is to replace the indicator function with a smooth sigmoidal function $\sig$, so that:
\begin{equation}
\label{eq:application_smooth}
\atilde_i=\sig(\prc_\grp - \costu)
\end{equation}
Note however that the standard sigmoid is inappropriate, since its domain is the real line, whereas $\prc_\grp$ and $\costu$ are both in $[0,1]$.
A proper alternative should satisfy the following properties:
(i) have $\sig(-\costu) = 0$ for $\prc_\grp=0$,
and $\sig(1-\costu) = 1$ for $\prc_\grp=1$;
(ii) be indifferent at $\prc_\grp=\costu$, i.e., $\sig(0) = 0.5$;
(iii) include a temperature parameter $\tau$ s.t. $\lim_{\temp \to \infty} \sig_\temp = \mathds{1}$.
We propose the following sigmoid:
\squeeze
\begin{equation}
\label{eq:sigmoid}    
\sig_\temp(r;\costu) = 
\bigg(
1+\bigg(  
\frac{(r+\costu)(1-\costu)}{\costu(1-(r+\costu))}
\bigg)^{\!\!-\temp\,}
\bigg)^{\!\!-1}
\end{equation}
{where $\temp \ge 1$.
Eq.~\eqref{eq:sigmoid}
is differentiable and satisfies all three criteria,
as illustrated in Appx.~\ref{appx:sigmoid}.
Note how the domain is $[-\costu,1-\costu]$, which shifts with $\costu$.
In practice we add tolerance as $\costu+\varepsilon$ to safeguard against statistical discrepancies between train and test.
\squeeze


\begin{table*}[t!]
\centering
\caption{\textbf{Experimental results.}
For representative $\costu \in \{0.7,0.8\}$ and averaged over 10 random splits, results show:
induced accuracy ($\pm$stderr),
number of applying groups,
and the $r^2$ between the ideal $\preceq_\base$ and the actual ranking based on $\prc_\grp$.
\squeeze
}
\resizebox{\textwidth}{!}{
\begin{tabular}{lrccccccccccccccc}
  &   & \multicolumn{3}{c}{\textbf{\adult} ($\costu=0.7$)} &   & \multicolumn{3}{c}{\textbf{\adult} ($\costu=0.8$)} &   & \multicolumn{3}{c}{\textbf{\bank} ($\costu=0.7$)} &   & \multicolumn{3}{c}{\textbf{\bank} ($\costu=0.8$)} \\
\cmidrule{3-5}\cmidrule{7-9}\cmidrule{11-13}\cmidrule{15-17}  &   & ind. acc. & apply & rank $r^2$ &   & ind. acc. & apply & rank $r^2$ &   & ind. acc. & apply & rank $r^2$ &   & ind. acc. & apply & rank $r^2$ \\
\cmidrule{1-1}\cmidrule{3-5}\cmidrule{7-9}\cmidrule{11-13}\cmidrule{15-17}\naivemthd &   & 85.2\tiny{\,$\pm0.3$} & 3.0/4 & 0.219 &   & (87.8) & 0.1/4 & (0.219) &   & (80.9) & 5.4/10 & (0.066) &   & - & 0/10 & - \\
\semi &   & 87.4\tiny{\,$\pm0.6$} & 2.1/4 & 0.135 &   & 90.1\tiny{\,$\pm0.5$} & 1.2/4 & 0.300 &   & \boldmath{}\textbf{90.0\tiny{\,$\pm0.4$}}\unboldmath{} & 1.5/10 & 0.068 &   & 86.4\tiny{\,$\pm0.6$} & 2.4/10 & 0.100 \\
\stratx &   & \boldmath{}\textbf{91.1\tiny{\,$\pm0.5$}}\unboldmath{} & 1.1/4 & 0.076 &   & \boldmath{}\textbf{90.5\tiny{\,$\pm0.5$}}\unboldmath{} & 1.0/4 & 0.003 &   & \boldmath{}\textbf{90.1\tiny{\,$\pm0.4$}}\unboldmath{} & 1.8/10 & 0.177 &   & \boldmath{}\textbf{88.7\tiny{\,$\pm0.4$}}\unboldmath{} & 1.1/10 & 0.238 \\
\stratnoz &   & 86.0\tiny{\,$\pm0.4$} & 2.5/4 & 0.244 &   & 89.0\tiny{\,$\pm0.5$} & 1.0/4 & 0.029 &   & 87.9\tiny{\,$\pm0.4$} & 6.7/10 & 0.170 &   & 87.0\tiny{\,$\pm1.1$} & 1.5/10 & 0.121 \\
\stratindep &   & 86.5\tiny{\,$\pm0$} & 0.6/4 & \textbf{0.85} &   & 88.5\tiny{\,$\pm0$} & 0.4/4 & \textbf{0.537} &   & 87.3\tiny{\,$\pm0.6$} & 0.9/10 & \textbf{0.343} &   & 88.2\tiny{\,$\pm0.5$} & 1.3/10 & \textbf{0.361} \\
\cmidrule{1-1}\cmidrule{3-5}\cmidrule{7-9}\cmidrule{11-13}\cmidrule{15-17}\end{tabular}%
}%
\label{tbl:results}%
\end{table*}%

\paragraph{Differentiable precision proxy.}
For $\atilde_i$ in Eq.~\eqref{eq:application_smooth} to be differentiable, $\prc_\grp$ must also be differentiable.
Note that:
\begin{equation}
\label{eq:precision}    
\prc_\grp = P_\smplst(y=1 \mid \yhat=1, \grp) =
\frac{\sum_{j:\grp_j=\grp} y_j \, \yhat_j}{\sum_{j:\grp_j=\grp} \yhat_j}
\end{equation}
Thus,
it suffices to replace `hard' predictions $\yhat$ with `soft' probabilistic predictions $\ytilde \in [0,1]$. For the common case where $\yhat = \one{\score(x)>0}$, we can replace the indicator with the standard sigmoid
$\sigalt$,
e.g., $\sigalt(r) = (1+e^{-r})^{-1}$, giving: 
\squeeze
\begin{equation}
\label{eq:precision_proxy}    
\softprc_\grp = 
\frac{\sum_{j:\grp_j=\grp} y_j \, \ytilde_j}{\sum_{j:\grp_j=\grp} \ytilde_j},
\qquad
\ytilde = \sigalt(\score(x))
\end{equation}
While sound, this approach suffers from a subtle form of bias, which makes it ineffective for our purposes.
To see this, consider that the nominator sums only over positive examples (i.e., with $y_j=1$). For a reasonably accurate $f$ (which implies $y, \yhat$ are correlated), since $\ytilde_j \le \yhat_j$, we can expect the nominator of $\softprc$ to be consistently \emph{smaller} than that of $\prc$;
if the denominator is only mildly affected (which is reasonable to assume),
then $\softprc$ as a proxy becomes negatively biased.
More worrying is the fact that this bias becomes \emph{worse} as accuracy improves---a phenomena we've observed repeatedly in our empirical investigations.
Thus, it becomes harder, through this proxy, to enable application for high-accuracy groups---which goes precisely against our learning goals.
%
To remedy this, we propose to apply a corrective term: 
\begin{equation*}
\label{eq:precision_correction}    
\softprc_\grp = 
\frac{\sum_{j:\grp_j=\grp} y_j \, \ytilde_j}{\sum_{j:\grp_j=\grp} \ytilde_j - \bias}, \quad
\bias = \frac{1}{\costu}\sum\nolimits_i (y_i - \costu)(\yhat_i - \ytilde_i)
\end{equation*}
Appendix~\ref{appx:corrective_term} shows how $\bias$ serves to de-bias $\softprc$.
In practice, we clip $\softprc$ to be in $[0,1]$ so that it will be well-defined.

The downside is that the corrected $\softprc$ is no longer differentiable (since it now includes hard predictions $\yhat$).
Our solution is to fix at each epoch the values of $\yhat_i$ from the previous iteration, and update them after each gradient step.
Because in each step (and especially in later stages of training) only a few examples are likely to flip predictions, and since $B$ sums over all examples,
we expect this to only marginally affect the resulting gradient computations.

\paragraph{Ensuring application and well-defined precision.}
One issue with precision is that it is undefined if $\yhat=0$ for all examples.
Although this does not affect our continuous proxy,
we would still like to ensure that at test time the true precision is well-behaved.
Similarly, it is important that at least one group applies.
Our solution is regularize via:
\squeeze
\begin{equation} 
\label{eq:app&prec_penalty}
R_{\mathtt{app}}(\score;\smplst) =
- \frac{1}{K}\sum\nolimits_\grp \log \max_{i \in \grp} \ytilde_i 
- \log \max_{\grp} \atilde_\grp
\end{equation}
As we will see, one drawback of non-strategic approaches is that 
in some cases learning results in no applications at all.




\paragraph{Implementing independence.}
Enforcing statistical parity as independence constraints is generally hard,
but there are many approximate methods (e.g., \citet{agarwal2018reductions}).
In line with our general differentiable framework,
we opt for a simple approach and add to the objective the penalty:
\begin{equation}
\label{eq:regularizer_penalty}
R_{\perp}(\score;\smplst) = 
\frac{1}{K}\sum\nolimits_\grp \big(\expect{}{\ytilde=1 \,|\, \grp} - 
\expect{}{\ytilde=1}\big)^2
\end{equation}


\section{Experiments} \label{sec:experiments}
We now turn to our experimental analysis
based on real data and simulated self-selective behavior.
We use two public datasets:
(i) \adult\ and (ii) \bank,
both of which are publicly available,
commonly used for evaluation in the fairness literature \cite{le2022survey},
and appropriate for our setting.
As group variables, we use `race' for \texttt{adult} ($K=4$)
and `job' for \texttt{bank} ($K=10$).
We experiment with $\costu \in [0.65,0.85]$,
since this range includes all significant variation in application outcome.
Data is split 70-30 into train and test sets.
Results are averaged over 10 random splits and include standard errors.
Appendix~\ref{appx:exp_details} provides further details on data and preparation, methods, and optimization.
\todo{Appx extended and additional results}

\todo{bank - br=0 or =0.3? here and appx}


\paragraph{Methods.}
We compare between three general approaches, as discussed in Sec.~\ref{sec:analysis}:
(i) \naivemthd, which trains a classifier conventionally and is agnostic to self-selective behavior;
(ii) \semi, which implements the semi-strategic approach of first training a \naive\ classifier and then tuning its threshold strategically;
and (iii) \strat, which trains using our strategically-aware objective  (Sec.~\ref{sec:method}).
For the latter, we consider three variants:
(iii.a) \stratx, which uses all information in $x$;
(iii.b) \stratnoz, which discards group features $\grp$;
and (iii.c) \stratindep, which encourages statistical parity via regularization.
All methods are based on linear classifiers.
\squeeze

\paragraph{Optimization.}
For a clean comparison, all methods are based on the same core implementation (pytorch) and optimized in a similar fashion.%
\footnote{We also compared our implementation of \naivemthd\ to a vanilla sk-learn implementation, ensuring that they performed similarly.}
We used vanilla gradient descent with learning rate 0.1
and trained for a predetermined and fixed number of epochs.
Coefficients for $R_{\mathtt{app}}$ and $R_{\perp}$ (when used) were chosen to be small yet still ensure feasibility and (approximate) independence, respectively,
but overall performance was not very sensitive to chosen values.
\squeeze

\extended{\todo{replace with final public repo}}

\paragraph{Main results.}
Table~\ref{tbl:results} shows 
induced accuracies, application rates, and correlations between ranks based on $\prc_\grp$ and $\preceq_\base$
for representative $c =0.7, 0.8$
(full results in Appx.~\ref{appx:additional_exps}).
Note all accuracies are relatively high and in a narrow range;
this is since both datasets have inherently low base rates
($\base_{\mathtt{adult}}=0.24$, $\base_{\mathtt{bank}}=0.16$)
and so even a few percentage points in accuracy are significant.
The \naivemthd\ approach is clearly suboptimal;
moreover, in some cases it leads to no groups applying at all
(parentheses/hyphen indicate no applications in some/all splits).
Meanwhile, across all settings, \stratx\ obtains the highest induced accuracy. Notice how, as suggested by Prop.~\ref{prop:strategic_single_group_applies},
this follows from the number of applying groups being to be close to 1.
Interestingly, \semi\ does rather well,
albeit inconsistently;
more importantly, 
it does not support constraining for statistical parity,
which is necessary for enabling affirmative action.
\todo{is this because naive ordering is good to begin with? how to check?}
As expected, discarding features in \stratnoz\ and further adding constraints in \stratindep\ reduces their performance.
However,
in line with Prop.~\ref{prop:indep_order},
the large $r^2$ values suggest that in most cases \stratindep\ is able to approximately enforce the ordering $\preceq_\base$,
this effectively limiting the learner's power
(results for $\bank$ were highly variate for larger $c$).
\todo{inconsistent, highly varied (note stderrs) - sometimes works, sometimes fails, across seeds and inits. sometimes obtains fair by trivial all yhat=0 or =1. we conjecture balancing acc and fair on this dataset is challenging, also due to low base rate.}
\todo{explain rationale for using r2; say some br-s can be very close (this happens in adult), and so apx indep may swap them. r2 accounts for this, but on the other hand, will not get r2=1 even even order is correct wrt true br.}
In contrast, note how the excessive power of \stratx\ pushes rankings away from $\preceq_\base$.

\extended{semi may be good, but can't enforce fairness (in principle can constrain the line search, but because the strategic 'learning' part is 1D, its very non-expressive, so will likely pay a lot in accuracy - may be even get infeasible results)}

\begin{figure}[t!]
\centering
\includegraphics[width=0.85\columnwidth]{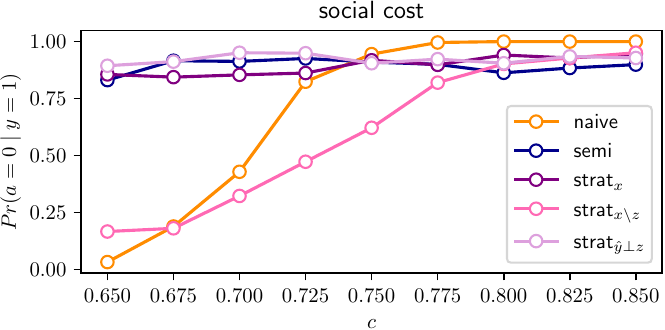}
\caption{\textbf{Social cost} per cost $c$ for different methods. 
}
\label{fig:social_cost}
\end{figure}

\paragraph{Detailed analysis.}
Fig.~\ref{fig:real_plots} shows accuracy and precision curves for varying $\costu$ on a typical \bank\ instance.
Results show how \strat\ is able to consistently perform well by ensuring that only two quality groups apply:
$\grp_1$ (orange), which provides high accuracy,
and $\grp_9$ (cyan), which has lower accuracy but maintains its high precision.
Since $\grp_1$ includes significantly more data points than $\grp_9$, it becomes the dominant component of the overall induced accuracy (black line).
Note how \strat\ increases $\prc_1,\prc_9$ to be above the application threshold as $\costu$ increases;
all other $\prc_\grp$ are substantially lower
(see scatterplot).
In contrast, since \naivemthd\ does not account for $\costu$, the order in which groups apply remains fixed to the order induced by $\prc_\grp$;
as $\costu$ increases, less groups apply, and accuracy varies arbitrarily.
Meanwhile, \stratnoz\
is able to push $\prc_9$ higher, but struggles for the more important $\prc_1$---succeeding for $\costu \le 0.725$, but failing above.
In general, the unavailability of $\grp$ as a feature greatly limits its capacity to control individual $\prc_\grp$.

Fig.~\ref{fig:social_cost} shows \emph{social costs}, defined as the ratio of qualified applicants ($y=1$) that did not apply ($a=0$) over all groups, for increasing costs $c$ and average over splits. As expected, for \stratx\ the social cost is high since the number of applications is very low. And whereas \semi\ and \stratindep\ display similar trends, it is \stratnoz\ that shows favorable costs for low $c$. These are even lower that \naive\, where costs are obtained simply by thresholding on the non-adjusting $\prc_\grp$ values, and are therefore zero for the lowest $c=0.65$.
Together, results suggest that accuracy for the firm comes at a social cost,
which is not mitigated by statistical parity.
This calls for other means for ensuring that qualified candidates apply across all groups---which should benefit both the firm and the candidate population.
\squeeze


\extended{increasing fairness penalty causes $p(\yhat=1)$ to *decrease*, becoming very small}

\section{Discussion} \label{sec:discussion}
This work studies classification under strategic self-selection,
a setting in which  humans---as the subjects of prediction---choose if to participate or not, this in response to the learned classifier used for screening.
Self-selection is a well-studied, well-documented phenomena that is highly prevalent across many social domains;
given its significant role in determining the eventual composition of the applying sub-population in job hiring, school admissions, welfare programs, and other domains,
we believe it is important to understand how self-selection affects, and is affected, by deployed predictive models.
Our work focuses on classifiers used in a particular screening setting,
but learning can influence self-selection in broader manners as well.
One avenue for future work is to enable more fine-grained individualized decisions, e.g., based on private information or beliefs.
Another path is to consider self-selective dynamics
(e.g., in a performative prediction setting).
Yet another path is to determine how to best choose which statistics to reveal.
We leave these for future investigation.


\begin{figure}[t!]
\centering
\includegraphics[width=\columnwidth]{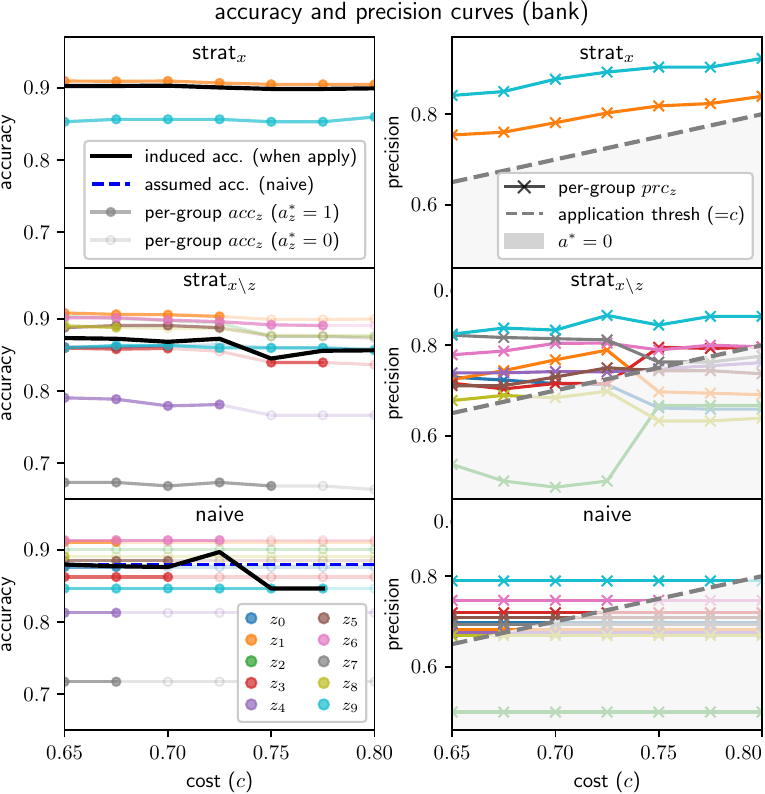}
\includegraphics[width=\columnwidth]{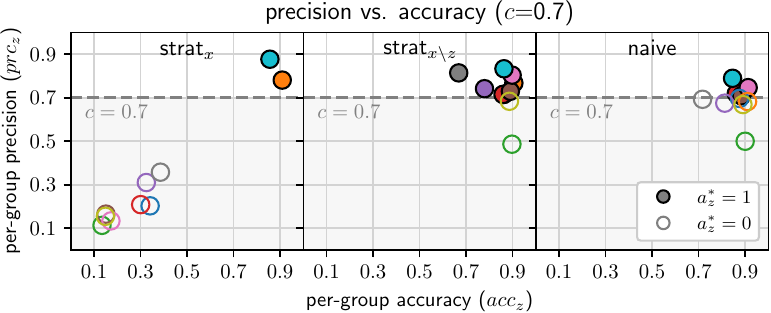}
\caption{
\textbf{(Top:)}
Accuracy and precision curves on an instance of \bank.
\textbf{(Bottom):}
Precision vs. accuracy per $\grp$.
}
\label{fig:real_plots}
\end{figure}
 


\section*{Acknowledgements}
This work is supported by the Israel Science Foundation grant no. 278/22.
\section*{Impact Statement}
A basic assumption in conventional machine learning is that data is drawn from a fixed underlying distribution.
Our work challenges this assumption by positing that in social settings,
the distribution in practice will be shaped by which users choose to participate, this in response to the learned classifier.
We believe this applies widely in settings
where learning is used to support downstream decisions regarding humans,
such as in hiring, admissions, or loan approval.
Our work suggests that overlooking how the learned classifier influences participation can result in unexpected or even undesired outcomes,
and propose means for learning in a way that anticipates and accounts for strategic self-selection.
Our hope is that this will enable firms, governments, and public institutions to learn in ways which balance their goals with those of their potential users.

One implication of our work relates to algorithmic bias.
Amounting evidence suggests that learning systems have the capacity, and often the tendency, to perpetuate social biases that exist in the data.
This is often seen as an undesired artifact of blindly pursuing learning objectives (e.g., maximizing accuracy) without any social considerations.
And whereas much effort has been devoted to developing methods that reduce such data-driven biases (e.g., by enforcing fairness constraints),
our work's perspective provides a possible explanation for \emph{how such biases came to be},
i.e., why the data we observe is biased to begin with.
For example, note it is possible to construct a classifier that is entirely `fair' (under any reasonable notion) when measured on the \emph{induced} test distribution; but where this is so only because the classifier prevented certain groups from applying---groups which otherwise would have made it difficult (or impossible) to achieve fairness.
This alludes to a more subtle form of `implicit' inequity in which classifiers exploit the fact that users make decisions under uncertainty
to create an illusion of fairness and thus circumvent accountability.

A second implication of our work considers its scope.
Note that many of our conclusions relate to the possible discrepancies between
learning in a way that is aware of self-selective behavior, compared to ways that are not.
It is however important to emphasize that these conclusions hold under the simplified setting we consider, and in particular, under rational strategic behavior.
Thus, while we believe our work carries practical implications for learning and policy-making, these must be considered with much care and deliberation as to the feasibility of our assumptions in reality. We are also hopeful that future work will extend beyond our focal setting, whether by considering more general behavioral models, other learning settings, or broader economic environments.
As for the latter, note our work applies mostly to markets where there is scarcity in supply (e.g., jobs) and a relative abundance of demand (e.g., qualified workers).
This justifies why firms in our setting need not worry about `missing' potential candidates---only about hiring good ones. And while many markets are such (e.g., academic positions, executive managers, academic publishing, and in some cases loans), clearly there are markets (or times) in which demand is scarce and supply is abundant (e.g., many jobs but few qualified candidates).
This distinction has implications not only in utilitarian terms, but also in terms of welfare and the role of learning in determining such outcomes.

A final implication regards transparency and the need for responsible usage of learning in social settings.
Our results suggests that, under the setting we consider, learning has the capacity to determine the eventual user population.
This grants learning much power that should not be taken lightly.
As we show, when learning simply pursues the conventional goal of optimizing accuracy, this can come at the expense of preventing most groups from applying (indirectly, by reducing the cost-effectiveness of application).
Since such undesirable outcomes arise even unintentionally---simply as a result of deploying a classifier---any firm that seeks to employ learning for improving decision-making should do so responsibly and transparently.
The latter is crucial given the key role information plays in our setting,
in the forms of what statistics are made public by the firm and therefore shared with potential candidates (e.g., conditional precision metrics).
Note that what information is revealed is a \emph{choice}---one that has major implications on outcomes.
Whereas our work considers this choice as a given, in practice, we believe that it offers a valuable entry point for a social planner or regulator to implement social policy---although how to do this precisely and effectively requires further research efforts.
\squeeze    

Overall, we hope our work aids in raising awareness to self-selection in social settings, its likely prevalence, and the potential capacity of learning to affect it.

\bibliography{refs}

\begin{thebibliography}{41}
\providecommand{\natexlab}[1]{#1}
\providecommand{\url}[1]{\texttt{#1}}
\expandafter\ifx\csname urlstyle\endcsname\relax
  \providecommand{\doi}[1]{doi: #1}\else
  \providecommand{\doi}{doi: \begingroup \urlstyle{rm}\Url}\fi

\bibitem[Agarwal et~al.(2018)Agarwal, Beygelzimer, Dudik, Langford, and Wallach]{agarwal2018reductions}
Agarwal, A., Beygelzimer, A., Dudik, M., Langford, J., and Wallach, H.
\newblock A reductions approach to fair classification.
\newblock In Dy, J. and Krause, A. (eds.), \emph{Proceedings of the 35th International Conference on Machine Learning}, volume~80 of \emph{Proceedings of Machine Learning Research}, pp.\  60--69. PMLR, 10--15 Jul 2018.

\bibitem[Alvarez(2002)]{alvarez2002exact}
Alvarez, S.~A.
\newblock An exact analytical relation among recall, precision, and classification accuracy in information retrieval.
\newblock \emph{Boston College, Boston, Technical Report BCCS-02-01}, pp.\  1--22, 2002.

\bibitem[Barsotti et~al.(2022)Barsotti, Ko{\c{c}}er, and Santos]{barsotti2022transparency}
Barsotti, F., Ko{\c{c}}er, R.~G., and Santos, F.~P.
\newblock Transparency, detection and imitation in strategic classification.
\newblock In \emph{Proceedings of the 31st International Joint Conference on Artificial Intelligence, IJCAI 2022}, 2022.

\bibitem[Bechavod et~al.(2022)Bechavod, Podimata, Wu, and Ziani]{bechavod2022information}
Bechavod, Y., Podimata, C., Wu, S., and Ziani, J.
\newblock Information discrepancy in strategic learning.
\newblock In \emph{International Conference on Machine Learning}, pp.\  1691--1715. PMLR, 2022.

\bibitem[Ben-Porat et~al.(2022)Ben-Porat, Cohen, Leqi, Lipton, and Mansour]{ben-porat2022modeling}
Ben-Porat, O., Cohen, L., Leqi, L., Lipton, Z.~C., and Mansour, Y.
\newblock Modeling attrition in recommender systems with departing bandits.
\newblock In \emph{Proceedings of the AAAI Conference on Artificial Intelligence}, volume~36, pp.\  6072--6079, 2022.

\bibitem[Beyhaghi et~al.(2023)Beyhaghi, Camara, Hartline, Johnsen, and Long]{beyhaghi2023screening}
Beyhaghi, H., Camara, M.~K., Hartline, J., Johnsen, A., and Long, S.
\newblock Screening with disadvantaged agents.
\newblock In \emph{4th Symposium on Foundations of Responsible Computing}, 2023.

\bibitem[Blum et~al.(2022)Blum, Stangl, and Vakilian]{blum2022multi}
Blum, A., Stangl, K., and Vakilian, A.
\newblock Multi stage screening: Enforcing fairness and maximizing efficiency in a pre-existing pipeline.
\newblock In \emph{Proceedings of the 2022 ACM Conference on Fairness, Accountability, and Transparency}, pp.\  1178--1193, 2022.

\bibitem[Br{\"u}ckner et~al.(2012)Br{\"u}ckner, Kanzow, and Scheffer]{bruckner2012static}
Br{\"u}ckner, M., Kanzow, C., and Scheffer, T.
\newblock Static prediction games for adversarial learning problems.
\newblock \emph{The Journal of Machine Learning Research}, 13\penalty0 (1):\penalty0 2617--2654, 2012.

\bibitem[Carroll(2017)]{Carroll2017}
Carroll, G.
\newblock Robustness and separation in multidimensional screening.
\newblock \emph{Econometrica}, 85\penalty0 (2):\penalty0 453--488, 2017.
\newblock \doi{10.3982/ECTA14165}.

\bibitem[Chen et~al.(2023)Chen, Wang, and Liu]{chen2023linear}
Chen, Y., Wang, J., and Liu, Y.
\newblock Linear classifiers that encourage constructive adaptation.
\newblock \emph{Transactions on Machine Learning Research}, 2023.

\bibitem[Cherapanamjeri et~al.(2023)Cherapanamjeri, Daskalakis, Ilyas, and Zampetakis]{cherapanamjeri2023makes}
Cherapanamjeri, Y., Daskalakis, C., Ilyas, A., and Zampetakis, M.
\newblock What makes a good fisherman? linear regression under self-selection bias.
\newblock In \emph{Proceedings of the 55th Annual ACM Symposium on Theory of Computing}, pp.\  1699--1712, 2023.

\bibitem[Cohen et~al.(2023)Cohen, Sharifi-Malvajerdi, Stangl, Vakilian, and Ziani]{cohen2023sequential}
Cohen, L., Sharifi-Malvajerdi, S., Stangl, K., Vakilian, A., and Ziani, J.
\newblock Sequential strategic screening.
\newblock In \emph{International Conference on Machine Learning}, pp.\  6279--6295. PMLR, 2023.

\bibitem[Courty \& Hao(2000)Courty and Hao]{courtySequentialScreening2000}
Courty, P. and Hao, L.
\newblock Sequential {{Screening}}.
\newblock \emph{The Review of Economic Studies}, 67\penalty0 (4):\penalty0 697--717, October 2000.
\newblock ISSN 0034-6527.
\newblock \doi{10.1111/1467-937X.00150}.

\bibitem[Drusvyatskiy \& Xiao(2022)Drusvyatskiy and Xiao]{drusvyatskiy2022stochastic}
Drusvyatskiy, D. and Xiao, L.
\newblock Stochastic optimization with decision-dependent distributions.
\newblock \emph{Mathematics of Operations Research}, 2022.

\bibitem[Dwork et~al.(2012)Dwork, Hardt, Pitassi, Reingold, and Zemel]{dwork2012fairness}
Dwork, C., Hardt, M., Pitassi, T., Reingold, O., and Zemel, R.
\newblock Fairness through awareness.
\newblock In \emph{Proceedings of the 3rd innovations in theoretical computer science conference}, pp.\  214--226, 2012.

\bibitem[Eilat et~al.(2023)Eilat, Finkelshtein, Baskin, and Rosenfeld]{eilat2023strategic}
Eilat, I., Finkelshtein, B., Baskin, C., and Rosenfeld, N.
\newblock Strategic classification with graph neural networks.
\newblock In \emph{The Eleventh International Conference on Learning Representations, {ICLR} 2023, Kigali, Rwanda, May 1-5, 2023}. OpenReview.net, 2023.

\bibitem[Ghalme et~al.(2021)Ghalme, Nair, Eilat, Talgam-Cohen, and Rosenfeld]{ghalme2021strategic}
Ghalme, G., Nair, V., Eilat, I., Talgam-Cohen, I., and Rosenfeld, N.
\newblock Strategic classification in the dark.
\newblock In \emph{International Conference on Machine Learning}, pp.\  3672--3681. PMLR, 2021.

\bibitem[Hardt et~al.(2016)Hardt, Megiddo, Papadimitriou, and Wootters]{hardt2016strategic}
Hardt, M., Megiddo, N., Papadimitriou, C., and Wootters, M.
\newblock Strategic classification.
\newblock In \emph{Proceedings of the 2016 ACM conference on innovations in theoretical computer science}, pp.\  111--122, 2016.

\bibitem[Harris et~al.(2023)Harris, Podimata, and Wu]{harris2023strategic}
Harris, K., Podimata, C., and Wu, Z.~S.
\newblock Strategic apple tasting.
\newblock In \emph{Advances in Neural Information Processing Systems}, volume~36, 2023.

\bibitem[Horowitz \& Rosenfeld(2023)Horowitz and Rosenfeld]{horowitz2022causal}
Horowitz, G. and Rosenfeld, N.
\newblock Causal strategic classifiaction.
\newblock In \emph{International Conference on Machine Learning}. PMLR, 2023.

\bibitem[Jagadeesan et~al.(2021)Jagadeesan, Mendler-D{\"u}nner, and Hardt]{jagadeesan2021alternative}
Jagadeesan, M., Mendler-D{\"u}nner, C., and Hardt, M.
\newblock Alternative microfoundations for strategic classification.
\newblock In \emph{International Conference on Machine Learning}, pp.\  4687--4697. PMLR, 2021.

\bibitem[Khalili et~al.(2021)Khalili, Zhang, and Abroshan]{khalili2021fair}
Khalili, M.~M., Zhang, X., and Abroshan, M.
\newblock Fair sequential selection using supervised learning models.
\newblock \emph{Advances in Neural Information Processing Systems}, 34:\penalty0 28144--28155, 2021.

\bibitem[Koren(2024)]{korenGatekeeperEffectImplications2023}
Koren, M.
\newblock The {{Gatekeeper Effect}}: {{The Implications}} of {{Pre-Screening}}, {{Self-Selection}}, and {{Bias}} for {{Hiring Processes}}.
\newblock \emph{Management Science}, forthcoming, May 2024.
\newblock ISSN 0025-1909.
\newblock \doi{10.1287/mnsc.2021.03918}.

\bibitem[Krishnaswamy et~al.(2021)Krishnaswamy, Li, Rein, Zhang, and Conitzer]{krishnaswamy2021classification}
Krishnaswamy, A.~K., Li, H., Rein, D., Zhang, H., and Conitzer, V.
\newblock Classification with strategically withheld data.
\newblock In \emph{Proceedings of the AAAI Conference on Artificial Intelligence}, volume~35, pp.\  5514--5522, 2021.

\bibitem[Lagziel \& Lehrer(2019)Lagziel and Lehrer]{Lagziel2019}
Lagziel, D. and Lehrer, E.
\newblock A {{Bias}} of {{Screening}}.
\newblock \emph{American Economic Review: Insights}, 1\penalty0 (3):\penalty0 343--56, September 2019.
\newblock \doi{10.1257/AERI.20180578}.

\bibitem[Lagziel \& Lehrer(2021)Lagziel and Lehrer]{Lagziel}
Lagziel, D. and Lehrer, E.
\newblock Screening {{Dominance}}: {{A Comparison}} of {{Noisy Signals}}.
\newblock \emph{American Economic Journal: Microeconomics}, 2021.
\newblock ISSN 1945-7669.
\newblock \doi{10.1257/MIC.20200284}.

\bibitem[Le~Quy et~al.(2022)Le~Quy, Roy, Iosifidis, Zhang, and Ntoutsi]{le2022survey}
Le~Quy, T., Roy, A., Iosifidis, V., Zhang, W., and Ntoutsi, E.
\newblock A survey on datasets for fairness-aware machine learning.
\newblock \emph{Wiley Interdisciplinary Reviews: Data Mining and Knowledge Discovery}, 12\penalty0 (3):\penalty0 e1452, 2022.

\bibitem[Lechner et~al.(2023)Lechner, Urner, and Ben-David]{lechner2023strategic}
Lechner, T., Urner, R., and Ben-David, S.
\newblock Strategic classification with unknown user manipulations.
\newblock In \emph{International Conference on Machine Learning}. PMLR, 2023.

\bibitem[Levanon \& Rosenfeld(2022)Levanon and Rosenfeld]{levanon2022generalized}
Levanon, S. and Rosenfeld, N.
\newblock Generalized strategic classification and the case of aligned incentives.
\newblock In \emph{Proceedings of the 39th International Conference on Machine Learning (ICML)}, 2022.

\bibitem[Lin \& Zrnic(2023)Lin and Zrnic]{lin2023plug}
Lin, L. and Zrnic, T.
\newblock Plug-in performative optimization.
\newblock \emph{arXiv preprint arXiv:2305.18728}, 2023.

\bibitem[Matthews \& Murphy(2023)Matthews and Murphy]{matthews2023addressing}
Matthews, V. and Murphy, M.
\newblock Addressing bias in artificial intelligence: The current regulatory landscape (white paper).
\newblock Technical report, Duke Law, 2023.
\newblock URL \url{https://www.thomsonreuters.com/en-us/posts/wp-content/uploads/sites/20/2023/08/Addressing-Bias-in-AI-Report.pdf?ref=blog.salesforceairesearch.com}.

\bibitem[Mendler-D{\"u}nner et~al.(2022)Mendler-D{\"u}nner, Ding, and Wang]{mendler2022anticipating}
Mendler-D{\"u}nner, C., Ding, F., and Wang, Y.
\newblock Anticipating performativity by predicting from predictions.
\newblock \emph{Advances in Neural Information Processing Systems}, 35:\penalty0 31171--31185, 2022.

\bibitem[Miller et~al.(2020)Miller, Milli, and Hardt]{miller2020strategic}
Miller, J., Milli, S., and Hardt, M.
\newblock Strategic classification is causal modeling in disguise.
\newblock In \emph{International Conference on Machine Learning}, pp.\  6917--6926. PMLR, 2020.

\bibitem[Okati et~al.(2023)Okati, Tsirtsis, and Gomez~Rodriguez]{okati2023within}
Okati, N., Tsirtsis, S., and Gomez~Rodriguez, M.
\newblock On the within-group fairness of screening classifiers.
\newblock In \emph{Proceedings of the 40th International Conference on Machine Learning}, pp.\  26495--26516, 2023.

\bibitem[Pleiss et~al.(2017)Pleiss, Raghavan, Wu, Kleinberg, and Weinberger]{pleiss2017fairness}
Pleiss, G., Raghavan, M., Wu, F., Kleinberg, J., and Weinberger, K.~Q.
\newblock On fairness and calibration.
\newblock \emph{Advances in neural information processing systems}, 30, 2017.

\bibitem[Rosenfeld \& Rosenfeld(2023)Rosenfeld and Rosenfeld]{rosenfeld2023one}
Rosenfeld, E. and Rosenfeld, N.
\newblock One-shot strategic classification under unknown costs.
\newblock \emph{arXiv preprint arXiv:2311.02761}, 2023.

\bibitem[Shao et~al.(2023)Shao, Blum, and Montasser]{shao2023strategic}
Shao, H., Blum, A., and Montasser, O.
\newblock Strategic classification under unknown personalized manipulation.
\newblock In \emph{Advances in Neural Information Processing Systems}, volume~36, 2023.

\bibitem[Sundaram et~al.(2021)Sundaram, Vullikanti, Xu, and Yao]{sundaram2021pac}
Sundaram, R., Vullikanti, A., Xu, H., and Yao, F.
\newblock {PAC}-learning for strategic classification.
\newblock In \emph{International Conference on Machine Learning}, pp.\  9978--9988. PMLR, 2021.

\bibitem[Wald et~al.(2021)Wald, Feder, Greenfeld, and Shalit]{wald2021calibration}
Wald, Y., Feder, A., Greenfeld, D., and Shalit, U.
\newblock On calibration and out-of-domain generalization.
\newblock \emph{Advances in neural information processing systems}, 34:\penalty0 2215--2227, 2021.

\bibitem[Wang et~al.(2022)Wang, Joachims, and Rodriguez]{wang2022improving}
Wang, L., Joachims, T., and Rodriguez, M.~G.
\newblock Improving screening processes via calibrated subset selection.
\newblock In \emph{International Conference on Machine Learning}, pp.\  22702--22726. PMLR, 2022.

\bibitem[Zhang et~al.(2021)Zhang, Cheng, and Conitzer]{zhang2021classification}
Zhang, H., Cheng, Y., and Conitzer, V.
\newblock Classification with few tests through self-selection.
\newblock In \emph{Proceedings of the AAAI Conference on Artificial Intelligence}, volume~35, pp.\  5805--5812, 2021.

\end{thebibliography}
\bibliographystyle{icml2024}

\newpage
\appendix
\onecolumn
\section{Proofs} \label{appx:proofs}

\subsection*{Proposition \ref{prop:utility_simple_form}:}
\begin{proof}
If candidates base their beliefs $\ptilde(y,\yhat \mid x)$ 
on valid information, which in this case amounts to the aggregate per-group statistics published by the firm, namely $P_\smplst(y,\yhat\,|\,\grp)$,%
\footnote{In particular, we assume that candidates do not hold any private information, other than $x$ itself, that is further informative of $p(y,\yhat\,|\,\grp)$.}
then we can rewrite the decision rule as:
\begin{equation}\label{eq:optimal_a}
\begin{aligned}
a^* &= \argmax\nolimits_{a \in \{0,1\}} \expect{\ptilde(y,\yhat \mid x)}{u(a)} \\
&= \argmax\nolimits_{a \in \{0,1\}} \expect{p(y,\yhat \mid \grp)}{u(a)} \\
&= \one{ \expect{p(y,\yhat \mid \grp)}{u(1)} \geq 0} \\
&= \one{ \expect{p(y,\yhat \mid \grp)}{\yhat\cdot(y-c)} \geq 0}
    \end{aligned}
    \end{equation}
Computing the expectation directly, we get:
\begin{align} \label{eq:util_derivation}
\expect{p(y,\yhat\mid \grp)}{\yhat\cdot(y-c)} &=
\sum_{y,\yhat\in \{0,1\}}{p(y,\yhat\mid \grp)\cdot\yhat\cdot(y-c)} \nonumber \\
&= \sum_{y\in \{0,1\}}{p(y,\yhat=0\mid \grp)\cdot0\cdot(y-c) + p(y,\yhat=1\mid \grp)\cdot1\cdot(y-c)} \nonumber \\
&= \sum_{y\in \{0,1\}}{p(y,\yhat=1\mid \grp)\cdot(y-c)} \nonumber \\
&= p(y=0,\yhat=1\mid \grp)\cdot(0-c) + p(y=1,\yhat=1\mid \grp)\cdot(1-c) \nonumber \\
&= p(y=1,\yhat=1\mid \grp) -c\cdot \left(p(y=0,\yhat=1\mid \grp) + p(y=1,\yhat=1\mid \grp)\right) \nonumber \\
&= p(y=1,\yhat=1\mid \grp) -c\cdot p(\yhat=1\mid \grp) \nonumber \\
&= p(\yhat=1\mid \grp)\cdotp(y=1|\yhat=1,\grp) -c\cdot p(\yhat=1\mid \grp) \nonumber \\
&= p(\yhat=1\mid \grp)\left(p(y=1|\yhat=1,\grp) -c\right)  \nonumber \\
&= p(\yhat=1\mid \grp)\left(\prc_\grp -c\right)
\end{align}
Plugging this back into Eq. \eqref{eq:optimal_a}, we get:
\begin{align}\label{eq:applicaiton_function}
a^* &= \one{ p(\yhat=1\mid \grp)\left(\prc_\grp -c\right) \geq 0} \nonumber \\
&= \begin{cases}
        \one{ \prc_\grp \geq c} & \text{ if } p(\yhat=1\mid \grp)>0 \\
        0 & \text{ o.w. } 
\end{cases}
\end{align}
which under the assumption that $p(\yhat=1\mid \grp)>0$ always
simplifies to $a^*=\one{ \prc_\grp \geq c}$.
\end{proof}

\subsection*{Proposition \ref{prop:util_mono_in_prc}:}
\begin{proof}
Immediate from Eq.~\eqref{eq:util_derivation}
\end{proof}

\subsection*{Proposition \ref{prop:util_not_mono_in_prc}:}
\begin{proof}
Proof by construction. Let $\costu=0.5$, fix $m=15$, and consider the following contingency tables:

\begin{equation*}
\begin{tabular}{c|cc}
 $f_1$     & $y=0$ & $y=1$ \\ \hline
$\yhat=0$ &   2        &   3        \\
$\yhat=1$ &   3        &   7       
\end{tabular}
\qquad \qquad \qquad \qquad
\begin{tabular}{c|cc}
 $f_2$     & $y=0$ & $y=1$ \\ \hline
$\yhat=0$ &   0        &   0        \\
$\yhat=1$ &   5        &   10       
\end{tabular}
\end{equation*}

Precision values are $7/(3+7)=0.7$ for $f_1$ and $10/(5+10)=2/3<0.7$ for $f_2$,
so higher for $f_1$.
Recall $u(a)=a\yhat(y-\costu)$, then utilities are:
\begin{align*}
 u_1(1) = - \frac{2}{15}\cdot0\cdot\frac{1}{2} - \frac{3}{15}\cdot1\cdot\frac{1}{2} + \frac{3}{15}\cdot0\cdot\frac{1}{2} + \frac{7}{15}\cdot1\cdot\frac{1}{2} = 0.133 &&& \\
 u_2(1) = - \frac{0}{15}\cdot0\cdot\frac{1}{2} - \frac{5}{15}\cdot1\cdot\frac{1}{2} + \frac{0}{15}\cdot0\cdot\frac{1}{2} + \frac{10}{15}\cdot1\cdot\frac{1}{2} = 0.166 &&& 
\end{align*}
and so higher for $f_2$.
\end{proof}

\begin{figure}[t!]
\centering
\includegraphics[width=0.55\columnwidth]{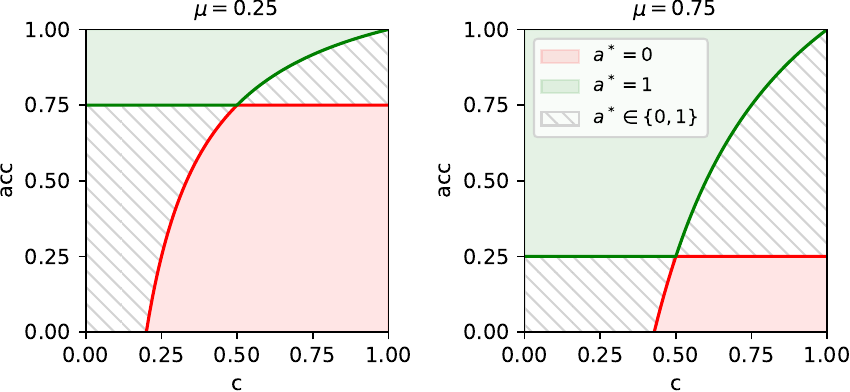}
\caption{
\textbf{Regions of application}.
Plots show for different base rates $\base$ the relations between a classifier's accuracy $\acc_\grp$ on group $\grp$ and that group's application $a^*$. In red regions $a^*=0$, in green regions $a^*=1$, and elsewhere $a^*$ is unconstrained.
}
\label{fig:acc_of_c}
\end{figure}

\subsection*{Proposition~\ref{prop:acc_thresh_apply-or-not} and Corollary~\ref{cor:unconstrained_apply}:}
\begin{proof}
Our proof relies on the main Theorem from \citet{alvarez2002exact} that makes precise the relation between base rate, accuracy, precision, and recall (denoted $\rcl$) of any classifier.
The result states that:
\begin{equation}
\label{eq:alvarez_acc+prc+rcl+base}
\base \cdot \rcl + (\base - \err)\prc = 2 \cdot \base \cdot \rcl \cdot \prc
\end{equation}
In what follows, for clarity we will omit the subscript $\grp$.
Extracting $\prc$ gives:
\[
\prc = \frac{\base \cdot \rcl}{\base(2\rcl-1)+\err}
\]
For the first statement, bounding the RHS from above by $\costu$ will guarantee $\prc \le \costu$ and therefore $a^*=0$. Rearranging gives:
\[
\acc \le 1- \base(\rcl(\frac{1}{\costu}-2)+1)
\]
The RHS is linear in $\rcl$, is increasing if $\costu>1/2$, and decreasing if $\costu<1/2$ (and constant otherwise).
It can therefore be bounded from above by plugging in $\rcl=0$ and $\rcl=1$, and taking the minimum. This gives:
\[
\acc \le \min\{1-\base,1-\base\frac{1-c}{c}\} =
1-\base\max\{1,\base(\frac{1}{c}-1)\}
\,\,\,\Rightarrow\,\,\,
a^*=0
\]
For the second statement, we can bound the RHS in \ref{eq:alvarez_acc+prc+rcl+base} from below by $\costu$;
this will guarantee $\prc < \costu$, and therefore $a^*=1$. 
Similarly, we can take the minimum from $\rcl=0$ and $\rcl=1$, which gives:
\begin{equation}
\label{eq:acc_lb}
\acc > \max\{1-\base,1-\base\frac{1-c}{c}\} =
1-\base\min\{1,\base(\frac{1}{c}-1)\}
\,\,\,\Rightarrow\,\,\,
a^*=1
\end{equation}
For the corollary, note that the above effectively makes use of $\rcl$ as the only degree of freedom;
thus, for any $\acc$ satisfying:
\[
1-\base\max\{1,\base(\frac{1}{c}-1)\} 
< \acc \le 
1-\base\min\{1,\base(\frac{1}{c}-1)\}
\]
then as long as $F$ is sufficiently expressive, $a^*$ remains unconstrained.
\end{proof}



\subsection*{Lemma \ref{lem:calibrated_score_function}:}
\begin{proof}
Let $f_{\score,\thresh}(x)=\one{\score(x)>\thresh}$ where $\score: \X\rightarrow[\alpha,\beta]$. Denote the random variable $\varphi=\score(x)$ and assume that $p(\varphi,y)$ is a well defined density function, so that the marginal and conditional density functions $p_{\varphi}$ and $p_{\varphi\mid y}$ are also well-defined, and further assume that $\supp(p_\varphi)=[\alpha,\beta]$ (this to ensure that $\varphi$ also induces a well-defined density).
Denote by $\prc(\thresh)$ the precision of $f_{\score,\thresh}(x)$ as a function of the threshold $\thresh \in [\alpha,\beta)$, i.e.:
$$\prc(\thresh) = P(y=1|\score(x)>\thresh)=P(y=1|\varphi>\thresh)$$
Note that $\prc(\thresh)$ is well defined for all $\thresh<\beta$, because for these values of $\thresh$, $P(\varphi>\thresh)>0$ since $\supp(p_\varphi)=[\alpha,\beta]$.
With Bayes theorem we can write:
\begin{equation*}
    \prc(\thresh) = P(y=1)\cdot\frac{P(\varphi>\thresh|y=1)}{P(\varphi>\thresh)}
\end{equation*}
If $P(y=1)=0$, then $\prc(\thresh)=0$ for any $\thresh$, therefore $\prc(\thresh)$ is (weakly) monotonically increasing.

Now, assume that $P(y=1)>0$.
Since $p(\varphi,y)$ is a well-defined density function, $P(\varphi>\thresh|y=1)$ and $P(\varphi>\thresh)$ are differentiable, therefore
$\prc(\thresh)$ is differentiable.
Therefore, $\prc(\thresh)$ is monotonically increasing $\Leftrightarrow \prc'(\thresh)\geq 0$.
Since $P(y=1)$ is a constant, the derivative is:
\begin{equation*}
    \prc'(\thresh) = P(y=1)\cdot\left(\frac{P(\varphi>\thresh|y=1)}{P(\varphi>\thresh)}\right)'
\end{equation*}
Since $P(y=1)>0$, $\prc'(\thresh)\geq 0 \Leftrightarrow \left(\frac{P(\varphi>\thresh|y=1)}{P(\varphi>\thresh)}\right)'\geq0$.
Using the quotient rule, we get that
\begin{equation*}
\left(\frac{P(\varphi>\thresh|y=1)}{P(\varphi>\thresh)}\right)' = \frac{(P(\varphi>\thresh|y=1))'\cdot P(\varphi>\thresh) - P(\varphi>\thresh|y=1) \cdot (P(\varphi>\thresh))'}{(P(\varphi>\thresh))^2}
\end{equation*}
Since $P(\varphi>\thresh)>0$, the denominator is positive, therefore
\begin{equation}\label{eq:derivative}
\left(\frac{P(\varphi>\thresh|y=1)}{P(\varphi>\thresh)}\right)'\geq 0 \Leftrightarrow (P(\varphi>\thresh|y=1))'\cdot P(\varphi>\thresh) - P(\varphi>\thresh|y=1) \cdot (P(\varphi>\thresh))' \geq0
\end{equation}
Using the connection between CDF and PDF, we get
\begin{equation*}
    (P(\varphi>\thresh))' = (1-P(\varphi\leq \thresh))' = -p_{\varphi}(\thresh)
\end{equation*}
and
\begin{equation*}
    (P(\varphi>\thresh|y=1))' = (1-P(\varphi\leq \thresh|y=1))' = -p_{\varphi|y=1}(\thresh)
\end{equation*}
Plugging these in Eq. \eqref{eq:derivative}:
\begin{align*}
    &\prc'(\thresh) \geq0
   &\Leftrightarrow -p_{\varphi|y=1}(\thresh)\cdot P(\varphi>\thresh) + p_{\varphi}(\thresh)\cdot P(\varphi>\thresh|y=1)\geq0
\end{align*}
With Bayes theorem applied to $P(\varphi>\thresh|y=1)$, we get that this holds iff:
\begin{align*}
   & -p_{\varphi|y=1}(\thresh)\cdot P(\varphi>\thresh) + p_{\varphi}(\thresh)\cdot P(\varphi>\thresh)\cdot \frac{P(y=1|\varphi>\thresh)}{P(y=1)}\geq0 \\
   &\Leftrightarrow P(\varphi>\thresh) \cdot \left(-p_{\varphi|y=1}(\thresh) + p_{\varphi}(\thresh)\cdot \frac{P(y=1|\varphi>\thresh)}{P(y=1)}\right)\geq0
\end{align*}
Since $P(\varphi>\thresh)>0$, this holds iff:
\begin{align*}
   &-p_{\varphi|y=1}(\thresh) + p_{\varphi}(\thresh)\cdot \frac{P(y=1|\varphi>\thresh)}{P(y=1)} \geq 0
\end{align*}
With Bayes theorem applied to $p_{\varphi|y=1}(\thresh)$, we get that this holds iff:
\begin{align*}
   & -p_{\varphi}(\thresh)\cdot\frac{P(y=1|\varphi=\thresh)}{P(y=1)} + p_{\varphi}(\thresh)\cdot \frac{P(y=1|\varphi>\thresh)}{P(y=1)} \geq 0 \\
   & \Leftrightarrow \frac{p_{\varphi}(\thresh)}{P(y=1)} \cdot \Big( -P(y=1|\varphi=\thresh) + P(y=1|\varphi>\thresh) \Big) \geq 0
\end{align*}
and since $p_{\varphi}(\thresh)>0$ and $P(y=1)>0$, we further get that this holds iff:
\begin{align*}
    & -Pr(y=1|\varphi=\thresh) + P(y=1|\varphi>\thresh) \geq 0 \\
    & \Leftrightarrow 
    P(y=1|\varphi>\thresh) \geq P(y=1|\varphi=\thresh) \\ 
    &\Leftrightarrow P(y=1|\score(x)>\thresh) \geq P(y=1|\score(x)=\thresh) 
\end{align*}
Overall, we get that $\prc'(\thresh)\geq 0 \Leftrightarrow P(y=1|\score(x)>\thresh) \geq P(y=1|\score(x)=\thresh)$, i.e. $\prc$ is monotonically increasing if and only if $\score$ is a calibrated score function.

\end{proof}

\subsection*{Corollary \ref{cor:calib=>a_is_step_func}:}
\begin{proof}
Let $\grp$ be some group, and let $\score$ be a score function with range $[\alpha,\beta]$ that is calibrated w.r.t. $\grp$. Denote $a^*_\grp(\thresh)=\one{\prc_{\grp}(\thresh)\geq c}$ as the optimal application function of $\grp$ w.r.t. the classifier $f_{\score,\thresh}(x)$.
From Lemma \ref{lem:calibrated_score_function} we get that $\prc_{\grp}(\thresh)$ is monotonically increasing in $\thresh$.
Assume that $a^*_\grp(\thresh)$ is neither a constant or a step function.
Therefore, there exists $\alpha\leq\thresh_1<\thresh_2<\thresh_3<\beta$, such that $a^*_\grp(\thresh_1)\neq a^*_\grp(\thresh_2)$ and $a^*_\grp(\thresh_2)\neq a^*_\grp(\thresh_3)$.
\begin{itemize}
    \item If $a^*_\grp(\thresh_2)=1$, then $a^*_\grp(\thresh_3)=0$.
    Therefore, $\prc_{\grp}(\thresh_2)\geq c$ and $\prc_{\grp}(\thresh_3)< c$, hence $\prc_{\grp}(\thresh_2) > \prc_{\grp}(\thresh_3)$, and since $\thresh_3 > \thresh_2$ this is a contradiction of the monotonicity of $\prc_{\grp}(\thresh)$.
    \item If $a^*_\grp(\thresh_2)=0$, then $a^*_\grp(\thresh_1)=1$.
    Therefore, $\prc_{\grp}(\thresh_1)\geq c$ and $\prc_{\grp}(\thresh_2)< c$, hence $\prc_{\grp}(\thresh_1) > \prc_{\grp}(\thresh_2)$, and since $\thresh_2 > \thresh_1$ this is a contradiction of the monotonicity of $\prc_{\grp}(\thresh)$.
\end{itemize}
Therefore, $a^*_\grp(\thresh)$ is either a constant or a step function. 
\end{proof}

\subsection*{Corollary \ref{cor:score_ordering}:}
\begin{proof}
Let $\score$ be a mutually calibrated score function with range $[\alpha,\beta]$, and let $\grp,\grp'$ such that $\grp \preceq_\score \grp'$.
From the definition of $\preceq_\score$, this implies that $\costu_{\grp}\leq\costu_{\grp'}$.
Let $\thresh\in [\alpha, \beta]$ be such that candidates from $\grp'$ apply, i.e. $a^*_{\grp'}(\thresh)=1$.
Therefore, $\prc_{\grp'}(\thresh)\geq c$, and from the definition of $\costu_{\grp'}$ we get $\thresh\geq\costu_{\grp'}\geq\costu_{\grp}$.
Since $\score$ is mutually calibrated, it is calibrated w.r.t. $\grp$, and from Lemma \ref{lem:calibrated_score_function}, $\prc_{\grp}(\thresh)$ is monotonically increasing, therefore $\prc_{\grp}(\thresh)\geq\prc_{\grp}(\costu_{\grp})$.
From the definition of $\costu_{\grp}$, $\prc_{\grp}(\costu_{\grp})\geq \costu$, therefore $\prc_{\grp}(\thresh)\geq\costu$.
Therefore $a^*_{\grp}(\thresh)=1$, i.e. candidates from $\grp$ apply.
\end{proof}

\subsection*{Proposition \ref{prop:strategic_single_group_applies}:}
\begin{proof}
Consider first the case where all $\grp$ are such that $\base_\grp < \costu$. As we have shown, if for some group $\grp$ we have $\yhat=1$ for all $x$ in that group, then $\prc_\grp=\base_\grp$,
and as a result, $a^*_\grp=0$.
To obtain the optimal classifier:
(i) go over all groups, and for each group $\grp$,
train $f^\grp$ on the subset of data from $\grp$ alone;
(ii) find $\grp^*$ such that $f^{\grp^*}$ has the highest accuracy;
and (iii) construct the final classifier $f(x)=g(x)+v^\top \grp$
as $g=f^{\grp^*}$, $v_{\grp^*}=0$, and $v_\grp=-\infty$ for all $\grp \neq \grp^*$. This ensures that only $\grp^*$ applies, and $f$ obtains the same accuracy (on this group, and hence on all applicants) as $f^{\grp^*}$. Note that any other $f'$ for which other groups apply can have accuracy at most that of $f^{\grp^*}$, 
since this would include averaging over additional groups.
Hence, the constructed $f$ is optimal.
In the more general case where some groups may have $\base_\grp>\costu$, it is impossible to guarantee that only a specific group applies (in particular when monotonicity does not hold); however, for all groups that do have $\base_\grp < \costu$, the same reasoning as before still applies.
Note that in this case the proof is not constructive, since the task of inferring the optimal subset of groups becomes combinatorially challenging.
\end{proof}

\subsection*{Corollary \ref{cor:specific_groups}:}
\begin{proof}
Let $f$.
As before, if for some group $\grp$  with $\base_\grp<c$ we set $v_\grp=-\infty$, then this group will not apply. Furthermore, the behavior of $f$ on any $x$ from other $\grp' \neq \grp$ will remain the same.
\end{proof}

\subsection*{Proposition \ref{prop:indep_order}:}
\begin{proof}
Let $f$, and assume $f(x) \perp \grp$, therefore $\yhat \perp \grp$.
Let $\grp,\grp'$ such that $\grp \preceq_\base \grp'$, i.e. $\base_\grp \ge \base_{\grp'}$.
Assume that $\grp'$ applies under $f$, i.e. $\prc_{\grp'}\geq c$.
For clarity we will write $P$ to mean $P_\smplst$.
With Bayes theorem can express $\prc_{\grp}$ as:
\begin{equation*}
\begin{aligned}
    \prc_{\grp} &= P(y=1 \mid \yhat=1, \grp) = P(\yhat=1 \mid y=1, \grp) \frac{P(y=1 \mid \grp)}{P(\yhat=1\mid \grp)} \\
    &= P(\yhat=1 \mid y=1, \grp) \frac{\mu_{\grp}}{P(\yhat=1\mid \grp)}
\end{aligned}
\end{equation*}
Since $\yhat \perp \grp$, $P(\yhat=1\mid \grp)= P(\yhat=1)$ and
$P(\yhat=1 \mid y=1, \grp) = P(\yhat=1 \mid y=1)$,
so we get:
\begin{equation*}
\begin{aligned}
    \prc_{\grp} &= P(\yhat=1 \mid y=1) \frac{\mu_{\grp}}{P(\yhat=1)}
\end{aligned}
\end{equation*}
Using Bayes theorem again on $P(\yhat=1 \mid y=1)$, we get:
\begin{equation*}
\begin{aligned}
    \prc_{\grp} &= P(y=1 \mid \yhat=1) \frac{P(\yhat=1)\cdot\mu_{\grp}}{P(y=1)\cdot P(\yhat=1)} \\
    &= P(y=1 \mid \yhat=1) \frac{\mu_{\grp}}{P(y=1)} \\
    &= \prc \frac{\mu_{\grp}}{\mu}
\end{aligned}
\end{equation*}
where $\prc$ and $\mu$ are the global precision and base rate, respectively.
In the same way, $\prc_{\grp'}=\prc \frac{\mu_{\grp'}}{\mu}$.
Therefore, 
\begin{equation*}
    \prc_{\grp} = \prc \frac{\mu_{\grp}}{\mu} \geq \prc \frac{\mu_{\grp'}}{\mu} = \prc_{\grp'}
\end{equation*}
And since $\prc_{\grp'}\geq c$, we get that $\prc_{\grp}\geq c$, i.e. $\grp$ applies.
\end{proof}

\section{Additional results and illustrations.}

\subsection{Calibrated score functions vs. calibrated classifiers} \label{appx:calibration}
Our next result connects score function calibration (Definition~\ref{def:calibration})
to the standard notion of calibration for probabilistic classifiers.
In particular, we show that under the assumption that $p(\score(x),y)$ is a well-defined density, score function calibration is a weaker requirement,
and is therefore implied by standard calibration.

\begin{lemma} \label{lem:if_calibrated_then_calibrated_score}
Let $g(x)=\phat(y=1|x)$ be a calibrated probabilistic classifier,
i.e., satisfies $P(y=1 \mid \phat=\thresh) = \thresh$.
Then $\score=g$ is a calibrated score function (with range $[0,1]$).
\end{lemma}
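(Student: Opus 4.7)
}
The plan is to unpack both sides of the calibrated score function inequality and reduce them to quantities that can be compared using standard calibration of $g$. Fix $\thresh \in [0,1)$ and abbreviate $\varphi = g(x) = \phat(y{=}1\mid x)$. The definition of a calibrated score function (Definition~\ref{def:calibration}) asks us to show
\[
P(y=1 \mid \varphi > \thresh) \;\ge\; P(y=1 \mid \varphi = \thresh).
\]
By the assumed standard calibration of $g$, the right-hand side equals $\thresh$, so it suffices to prove $P(y=1 \mid \varphi > \thresh) \ge \thresh$.

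For the left-hand side, I would apply the tower property, conditioning first on the value of $\varphi$ and then on the event $\{\varphi>\thresh\}$. Since $p(\varphi,y)$ is assumed to be a well-defined density, this conditioning is justified via standard regular conditional distributions, giving
\[
P(y=1 \mid \varphi > \thresh) \;=\; \mathbbm{E}\bigl[\,P(y=1 \mid \varphi) \;\big|\; \varphi > \thresh\bigr] \;=\; \mathbbm{E}\bigl[\,\varphi \;\big|\; \varphi > \thresh\bigr],
\]
where the last equality invokes standard calibration pointwise: $P(y=1\mid \varphi=t)=t$ for (almost) every $t$ in the support. Concretely this is the integral $\int_{\thresh}^{1} t\, p_\varphi(t)\,dt \,/\, P(\varphi>\thresh)$.

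Finally, since $\varphi>\thresh$ almost surely on the conditioning event, the conditional expectation $\mathbbm{E}[\varphi \mid \varphi>\thresh]$ is at least $\thresh$ (strictly so if $P(\varphi>\thresh)>0$, and trivially otherwise by the convention $P(\cdot \mid \cdot)$ is treated as the calibrated value on null events). Combining, $P(y=1 \mid \varphi > \thresh) \ge \thresh = P(y=1 \mid \varphi=\thresh)$, which is exactly the calibrated score function condition.

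I don't anticipate a hard step here; the only mild subtlety is ensuring that the pointwise calibration identity $P(y=1\mid \varphi=t)=t$ can be substituted inside the conditional expectation, which is exactly what the density assumption on $p(\varphi,y)$ in Definition~\ref{def:calibration} is there to guarantee. The range condition for $\score = g$ is immediate since $g$ takes values in $[0,1]$, and full support on $[\alpha,\beta]$ is inherited from the density assumption.
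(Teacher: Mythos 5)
Your proof is correct and is essentially the same argument as the paper's: both reduce $P(y=1\mid \varphi>\thresh)$ to the integral $\int_{\thresh}^{1} t\,p_\varphi(t)\,\d t \,/\, P(\varphi>\thresh)$ via pointwise calibration and then bound the integrand below by $\thresh$. The only difference is presentational (you phrase the averaging step as the tower property, the paper as the law of total probability with an explicit density substitution).
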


\begin{proof}
Let $\varphi=\score(x)$ be a random variable depicting scores, and assume that $p(\varphi,y)$ is a well defined density function, so that the marginal density function $p_{\varphi}$ is also well-defined.
If we think of $\score$ as a probabilistic classifier $h$, then
from the definition of classifier calibration we get $P(y=1|\varphi=\thresh)=\thresh$, where $\thresh\in[0,1]$.
Therefore, by the definition of conditional probability we get:
\begin{equation}\label{eq:calibration_def_simple}
    p_{y,\varphi}(y=1,\varphi=\thresh)=\thresh\cdot p_\varphi(\thresh)
\end{equation}
Using the definition of conditional probability again, we get:
\begin{equation*}
    P(y=1|\varphi>\thresh) = \frac{Pr(y=1,\varphi>\thresh)}{P(\varphi>\thresh)}
\end{equation*}
Using the law of total probability, we get:
\begin{equation*}
    \frac{P(y=1,\varphi>\thresh)}{P(\varphi>\thresh)} = \frac{\int_{t=\thresh}^{1}p_{y,\varphi}(y=1,\varphi=t)\d t}{P(\varphi>\thresh)}
\end{equation*}
Plugging in Eq. \eqref{eq:calibration_def_simple}, we get:
\begin{equation*}
    \frac{\int_{t=\thresh}^{1}p_{y,\varphi}(y=1,\varphi=t)\d t}{P(\varphi>\thresh)} = 
    \frac{\int_{t=\thresh}^{1}t\cdot p_\varphi(t)\d t}{P(\varphi>\thresh)} \geq 
    \frac{\int_{t=\thresh}^{1}\thresh\cdot p_\varphi(t)\d t}{P(\varphi>\thresh)} = 
    \frac{\thresh\cdot P(\varphi>\thresh)}{P(\varphi>\thresh)} = \thresh
\end{equation*}
Finally, this gives:
\begin{equation*}
    P(y=1|\score(x)>\thresh) \geq \thresh = P(y=1|\score(x)=\thresh)
\end{equation*}
which means that $\score$ is a calibrated score function over $[0,1]$.
\end{proof}

\subsection{Calibrated score functions vs. within group monotone classifiers} \label{appx:within_group_monotone}
Our next result connects score function calibration (Definition~\ref{def:calibration})
to the definition of \emph{within-group monotonicity} from \citet{okati2023within}.
Using our notations, their definition can be stated as:
\begin{definition}[Within-group monotonicity]
Let $\score$ be a score function with range $[\alpha,\beta]$. Then $\score$ is within-group monotone if for any $\grp$ and for any $\alpha\leq\thresh<\thresh'<\beta$ such that $p(\grp\mid\score(x)=\thresh)>0$ and $p(\grp\mid\score(x)=\thresh')>0$, it holds that :
\begin{equation*}
    P(y=1\mid \score(x)=\thresh,\,\grp) \leq P(y=1\mid \score(x)=\thresh',\,\grp)
\end{equation*}
\end{definition}
In some sense, this definition is more general than our definition of mutually calibrated score functions, because it does not require a well-defined density $p(\score(x),y)$.
However, we show that within our setting (in which we do assume that $p(\score(x),y)$ is a well-defined density),
our notion of score function calibration turns out to be a \emph{strictly weaker} requirement than within-group monotone, and is implied by it.
For simplicity, and w.l.o.g., we will prove this for a version within-group monotonicity that considers a general distribution $p$, i.e., absent the conditioning on $\grp$:
$P(y=1\mid \score(x)=\thresh) \leq P(y=1\mid \score(x)=\thresh')$.%
\footnote{Given this proof, the proof for Lemma \ref{lem:if_calibrated_then_calibrated_score} can be simplified, since a calibrated probabilistic classifier is a special case of a within-group monotone classifier, thus it implies score function calibration.}

We begin by showing that monotonicity implies calibration,
and then providing an example of a score function that is calibrated but is not monotone.

\begin{lemma} \label{lem:if_within_group_monotone_then_calibrated_score}
Let $\score(x)$ be a score function with range $[\alpha,\beta]$, such that $p(\score(x),y)$ is a well-defined density, and $\score$ has full support on $[\alpha,\beta]$ under $p$. 
Then if $\score$ is within-group monotone,
it is also a calibrated score function.
\end{lemma}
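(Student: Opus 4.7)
The plan is to mirror the strategy used in Lemma~\ref{lem:if_calibrated_then_calibrated_score}, but replacing the calibration identity $P(y{=}1\mid \varphi{=}\tau)=\tau$ with the inequality provided by monotonicity. Write $\varphi = \score(x)$ and define $g(t) = P(y=1 \mid \varphi = t)$, which is well-defined as a conditional density since $p(\varphi,y)$ is a proper density. In the simplified (ungrouped) form of within-group monotonicity that the lemma addresses, the hypothesis is exactly that $g$ is non-decreasing on $[\alpha,\beta]$, i.e. $g(t) \ge g(\tau)$ whenever $\tau < t$.

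Next I would express the quantity appearing on the left of the calibration inequality as a $p_\varphi$-weighted average of $g$. By the law of total probability and the factorization $p_{y,\varphi}(1,t) = g(t)\, p_\varphi(t)$, one obtains
\[
P(y=1 \mid \varphi > \tau) \;=\; \frac{\int_{\tau}^{\beta} g(t)\, p_\varphi(t)\,dt}{P(\varphi > \tau)}.
\]
The full-support assumption on $\score$ ensures $P(\varphi>\tau) > 0$ for every $\tau \in [\alpha,\beta)$, so the denominator is strictly positive and the ratio is well-defined. Applying the pointwise bound $g(t)\ge g(\tau)$ to the integrand gives
\[
\int_{\tau}^{\beta} g(t)\, p_\varphi(t)\,dt \;\ge\; g(\tau)\!\int_{\tau}^{\beta}\! p_\varphi(t)\,dt \;=\; g(\tau)\, P(\varphi>\tau),
\]
and dividing through yields $P(y=1\mid \varphi>\tau) \ge g(\tau) = P(y=1 \mid \varphi=\tau)$, which is precisely the definition of a calibrated score function.

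I do not expect any major obstacle: the proof reduces to an elementary ``average dominates the pointwise value'' argument combined with the factorization of the joint density. The only subtle point is the meaning of $P(y=1\mid\varphi=\tau)$ when $p_\varphi$ is continuous, but the paper's standing assumption that $p(\score(x),y)$ is a well-defined density handles this in the standard Radon-Nikodym sense, exactly as is tacitly done in the proof of Lemma~\ref{lem:if_calibrated_then_calibrated_score}; and the full-support hypothesis prevents degenerate thresholds where the conditional $g(\tau)$ would be ill-behaved. Once these mild regularity issues are settled, the implication follows in a few lines.
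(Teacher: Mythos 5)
Your proposal is correct and is essentially identical to the paper's proof: both arrive at the representation $P(y{=}1\mid\varphi>\tau)=\frac{1}{P(\varphi>\tau)}\int_{\tau}^{\beta}P(y{=}1\mid\varphi{=}t)\,p_\varphi(t)\,dt$ (the paper via two applications of Bayes, you via direct factorization of the joint density) and then apply the pointwise monotonicity bound under the integral. No gaps.
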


\begin{proof}
Let $\varphi=\score(x)$ be a random variable depicting scores,
and denote its density function by $p_{\varphi}$.
Let $\thresh\in[\alpha,\beta)$.
Using Bayes theorem we can write:
\begin{equation*}
\begin{aligned}
    P(y=1\mid \varphi>\thresh) &= \frac{P(y=1)}{P(\varphi>\thresh)}P(\varphi>\thresh\mid y=1) \\
    &= \frac{P(y=1)}{P(\varphi>\thresh)}\int_{t=\thresh}^{\beta}p_{\varphi\mid y=1}(t)\d t
\end{aligned}
\end{equation*}
Using Bayes theorem again on $p_{\varphi\mid y=1, \grp}(\thresh)$ we get:
\begin{equation*}
\begin{aligned}
    P(y=1\mid \varphi>\thresh) &= \frac{P(y=1)}{P(\varphi>\thresh)}\int_{t=\thresh}^{\beta}P(y=1\mid \varphi=t)\frac{p_{\varphi}(t)}{P(y=1)}\d t \\
    &= \frac{1}{P(\varphi>\thresh)}\int_{t=\thresh}^{\beta}P(y=1\mid \varphi=t)\cdot p_{\varphi}(t)\d t
\end{aligned}
\end{equation*}
Since $\score$ is within-group monotone, for all $t>\thresh$ it holds that $P(y=1\mid \varphi=t) \geq P(y=1\mid \varphi=\thresh)$, therefore:
\begin{equation*}
\begin{aligned}
    P(y=1\mid \varphi>\thresh) &\geq \frac{1}{P(\varphi>\thresh)}\int_{t=\thresh}^{\beta}P(y=1\mid \varphi=\thresh)\cdot p_{\varphi}(t)\d t \\
    &= \frac{P(y=1\mid \varphi=\thresh)}{P(\varphi>\thresh)}\int_{t=\thresh}^{\beta} p_{\varphi}(t)\d t \\
    &= \frac{P(y=1\mid \varphi=\thresh)}{P(\varphi>\thresh)} P(\varphi>\thresh) \\
    &= P(y=1\mid \varphi=\thresh)
\end{aligned}
\end{equation*}
Therefore $\score$ is a calibrated score function.
\end{proof}

\begin{lemma} \label{lem:within_group_monotone_strictly–stronger}
If $p(\score(x),y)$ is well-defined, then within-group monotonicity is strictly stronger than score function calibration.
\end{lemma}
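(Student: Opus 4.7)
The forward direction, within-group monotonicity $\Rightarrow$ score function calibration, is already Lemma \ref{lem:if_within_group_monotone_then_calibrated_score}, so to prove strictness I only need to exhibit a score function that is calibrated but not within-group monotone.

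My plan is to construct a minimal one-dimensional counterexample. I would take $\X = [0,1]$, $\score(x) = x$, and place the uniform distribution on $\X$, so that $\varphi := \score(x)$ has full support on $[0,1]$ and $p(\score(x),y)$ is a well-defined joint density for $y \in \{0,1\}$. The conditional $P(y=1 \mid \varphi = t)$ is then the remaining degree of freedom, and I would specify it as a piecewise-constant function with a deliberate \emph{dip} in the middle, compensated by a high-value tail:
\[
P(y=1 \mid \varphi = t) = \begin{cases} v_1, & t \in [0, a), \\ v_2, & t \in [a, b), \\ v_3, & t \in [b, 1], \end{cases}
\]
with $v_1 > v_2$ and $v_3$ large. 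Non-monotonicity is then immediate from any pair $\tau < \tau'$ with $\tau$ in the first piece and $\tau'$ in the second piece, since $P(y=1 \mid \varphi = \tau) = v_1 > v_2 = P(y=1 \mid \varphi = \tau')$.

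The substance of the argument is then verifying calibration, $P(y=1 \mid \varphi > \tau) \ge P(y=1 \mid \varphi = \tau)$, at \emph{every} threshold $\tau \in [0,1)$ and not just at the piece boundaries. On each of the three pieces, $P(y=1 \mid \varphi > \tau)$ is a simple weighted average of the remaining piece values (weighted by their Lebesgue mass above $\tau$), so the inequality reduces on each piece to one elementary calculation in $\tau$. The third piece is trivial (only $v_3$ remains), the second piece is easy as soon as $v_3$ is appreciably larger than $v_2$, and the binding case is the first piece, where I need the tail mass $(1-b)\cdot v_3$ to be large enough, and $b$ far enough from $1$, that the weighted average over $(\tau, 1]$ stays above $v_1$ uniformly in $\tau \in [0, a)$. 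Concrete choices such as $(v_1, v_2, v_3, a, b) = (\tfrac{1}{2}, \tfrac{2}{5}, 1, \tfrac{1}{2}, \tfrac{9}{10})$ comfortably satisfy all three cases.

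The only real obstacle is this quantitative balancing in the first-piece case --- picking parameters so that the high tail outweighs both the dip $v_2$ and the already-nontrivial prefix value $v_1$ everywhere in $[0,a)$. Once the three inequalities are verified, the construction gives a calibrated score function that fails within-group monotonicity, establishing that the containment is strict.
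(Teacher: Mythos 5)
Your proposal is correct and follows essentially the same route as the paper: invoke the forward implication from Lemma~\ref{lem:if_within_group_monotone_then_calibrated_score}, then exhibit a one-dimensional score function on $[0,1]$ with uniform marginal whose conditional $P(y=1\mid\varphi=t)$ dips (breaking monotonicity) while the tail keeps every upper-set average above the pointwise value (preserving calibration); the paper uses the smooth choice $P(y=1\mid\varphi=t)=(t-\tfrac{1}{3})^2+\tfrac{1}{3}$ where you use a piecewise-constant profile, and your parameters $(\tfrac12,\tfrac25,1,\tfrac12,\tfrac9{10})$ do check out (the binding first-piece inequality reduces to $\tfrac{51}{100}\ge\tfrac{50}{100}$). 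If anything, your elementary three-case verification is more explicit than the paper's, which defers the calibration check to a figure.
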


The proof is based on a constructive example of a score function that is calibrated, but is not within-group monotone.
Let $x\in \R$, such that $x\sim {U}(0,1)$, and let $P(y=1, x)=(x-\frac{1}{3})^2+\frac{1}{3}$ for $x\in[0,1]$.
Let $\score(x)=\varphi=x$ be the identity score function.
We get that for $\thresh\in[0, 1)$,
\begin{equation*}
    P(y=1\mid \varphi=\thresh)=\frac{P(y=1, x=\thresh)}{p_x(\thresh)} = \frac{\left(\thresh-\frac{1}{3}\right)^2+\frac{1}{3}}{1}=\left(\thresh-\frac{1}{3}\right)^2+\frac{1}{3}
\end{equation*}
For $\thresh=0$ and $\thresh'=\frac{1}{3}$, we get that $\thresh<\thresh'$, but
\begin{equation*}
    P(y=1\mid \varphi=\thresh)=\frac{1}{3} + \frac{1}{9} > \frac{1}{3} = P(y=1\mid \varphi=\thresh')
\end{equation*}
therefore $\score$ is not within-group monotone.
However, $\score$ is a calibrated score function:
\begin{equation*}
    P(y=1\mid \varphi>\thresh)=\frac{P(y=1, x>\thresh)}{P(x>\thresh)} = \frac{\int_{x=\thresh}^{1}  \left(x-\frac{1}{3}\right)^2+\frac{1}{3}\d x}{1-\thresh} = \frac{-3\thresh^3+3\thresh^2-4\thresh+4}{9(1-\thresh)}
\end{equation*}
As can be seen in Figure \ref{fig:calibrated_score} below, $P(y=1\mid\varphi>\thresh) \geq P(y=1\mid\varphi=\thresh)$ for all $\thresh\in [0,1)$.

\vspace{2em}

\begin{SCfigure}[2][h!]
\centering
\includegraphics[width=0.3\textwidth]{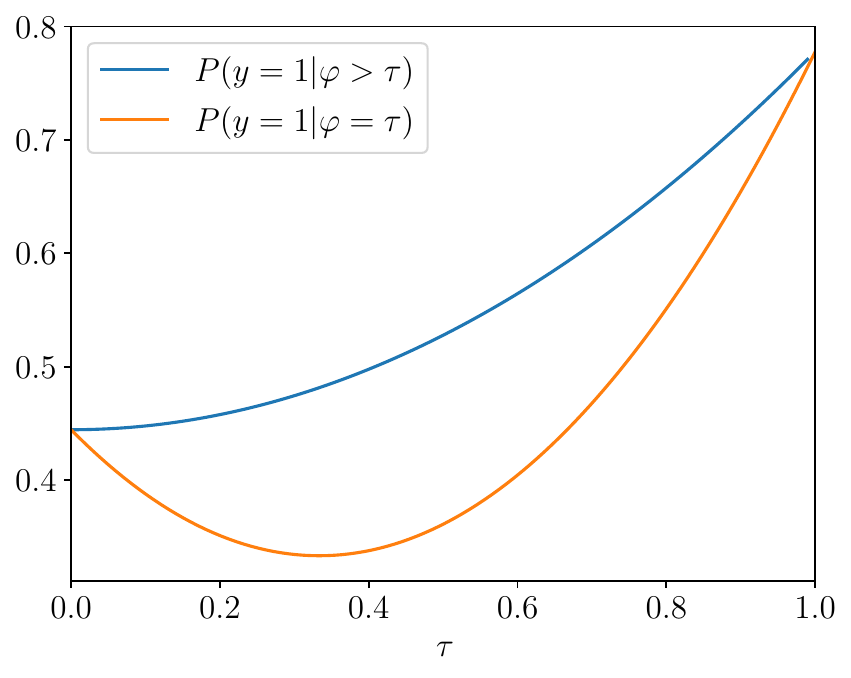}
\caption{
\textbf{A calibrated score function}.
Plot shows data distribution with a score function $\score(x)=\varphi=x$, where $P(y=1\mid\varphi=\thresh) = \left(\thresh-\frac{1}{3}\right)^2+\frac{1}{3}$, and $P(y=1\mid\varphi>\thresh) = \frac{-3\thresh^3+3\thresh^2-4\thresh+4}{9(1-\thresh)}$.
Under this distribution, $\score$ is a calibrated score function: $P(y=1\mid\varphi>\thresh) \geq P(y=1\mid\varphi=\thresh)$ for all $\thresh\in [0,1)$.
However, $\score$ is not within-group monotone: as can be seen, $P(y=1\mid\varphi=\thresh)$ is decreasing for $\thresh\in[0, \frac{1}{3}]$.
}
\label{fig:calibrated_score}
\end{SCfigure}


\begin{figure}[t!]
\centering
\includegraphics[width=0.6\columnwidth]{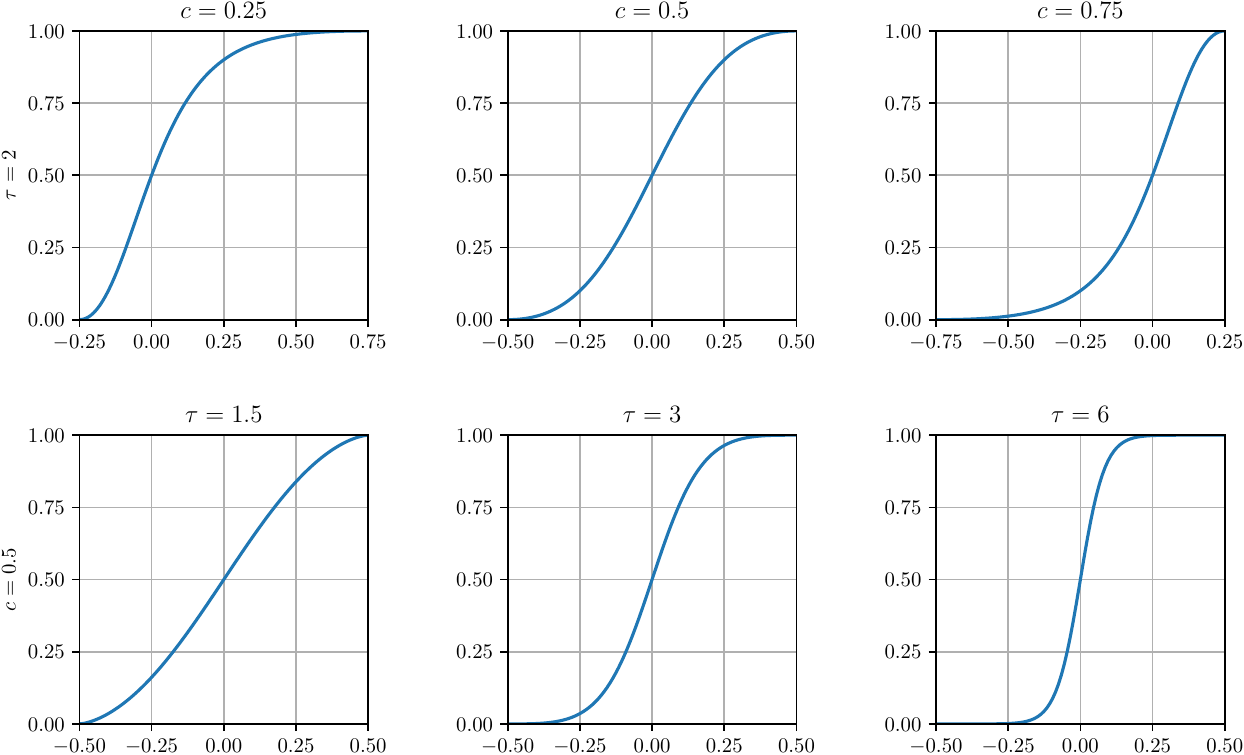}
\caption{
\textbf{Proposed sigmoid function $\sig$ for smoothed applications.}
}
\label{fig:sigmoid}
\end{figure}

\subsection{A sigmoid for smoothed applications} \label{appx:sigmoid}
Fig.~\ref{fig:sigmoid} illustrates our proposed sigmoid $\sig$ in 
Eq.~\eqref{eq:sigmoid} from Sec.~\ref{sec:method}, intended to serve as a differentiable proxy for applications.
The figure shows how the sigmoid's shape is affected by the cost parameter $\costu$ and the temperature parameter $\temp$.
The top row illustrates the role of $\costu$ (for fixed $\temp$).
Note how the domain is $-\costu,1-\costu$, and so shifts with $\costu$, 
but the indifference point (for which the output is 0.5) remains at input 0.
The bottom row shows how, for fixed $\costu$, increasing $\tau$ increases the sigmoid's slope, this enabling $\sig$ to better approximate a step function,
but making it harder to optimize.

\subsection{A corrective term for the smoothed precision proxy} \label{appx:corrective_term}
First, note we can write precision as:
\begin{equation*}
a^*= \one{\prc>c} = \one{\frac{\sum_i y_i \yhat_i}{\sum_i \yhat_i} > c} =
\one{ \sum_i y_i \yhat_i > c\sum_i \yhat_i  }
\end{equation*}
and therefore the smoothed precision proxy as:
\begin{equation*}
\atilde= \one{\softprc >c} = \one{ \sum_i y_i \ytilde_i > c\sum_i \ytilde_i  } \end{equation*}
Using the definition of the corrective term:
\begin{equation*}
\bias = \frac{1}{\costu}\sum_i (y_i-c)(\yhat_i-\ytilde_i)
\end{equation*}
rearranging the expression in the indicator for hard precision gives:
\begin{align*}
\sum_i y_i \yhat_i - c\sum_i \yhat_i =
\sum_i y_i (\yhat_i + \ytilde_i - \ytilde_i) - c\sum_i (\yhat_i + \ytilde_i -\ytilde_i) 
= \sum_i y_i \ytilde_i - c (\sum_i \ytilde_i  + \bias)
\end{align*}
From this, we can derive the corrected soft prediction decision rule:
\begin{align*}
\atilde = \one{\sum_i y_i \ytilde_i > \costu (\sum_i \ytilde_i - \bias) } 
= \one{\frac{\sum_i y_i \ytilde_i}{\sum_i \ytilde_i - \bias} > \costu }
\end{align*}



\section{Synthetic experimental results}

\begin{figure}[t!]
\centering
\includegraphics[width=\columnwidth]{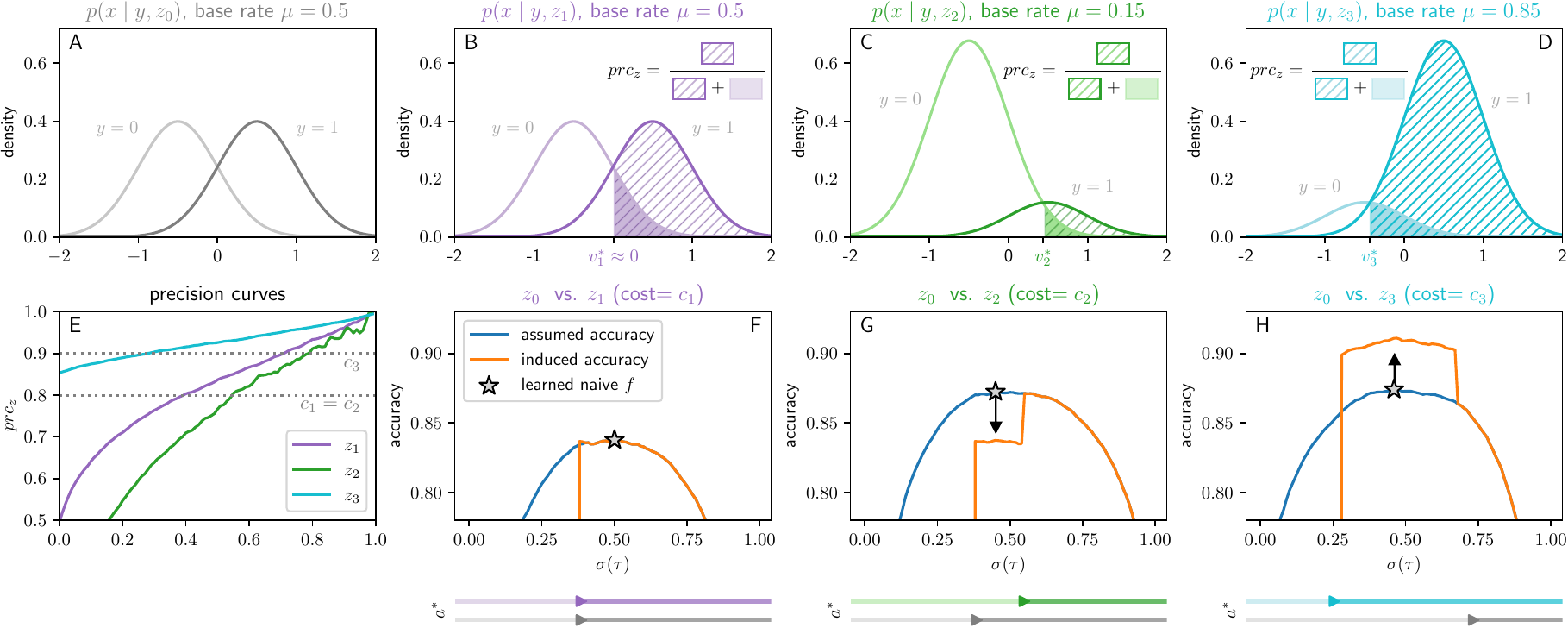}
\caption{
\textbf{Assumed vs. induced accuracy for a naive classifier.}
Each experimental condition includes two groups: $\grp_0$, and one of
$\{\grp_1,\grp_2,\grp_3\}$.
\textbf{(A)-(D):} Per-group class-conditional distributions, varying only by base rate $\base$. Shaded/hatched areas show how precision is computed.
\textbf{(E):} Precision curves for each group.
\textbf{(F)-(G)}: Assumed accuracy vs. actual induced accuracy (on the applicant population) in each experimental condition.
Curves show accuracies for varying thresholds $\thresh$.
Also shown are the actual learned thresholds (star), showing how the assumed accuracy of a \naive\ classifier can be either correct (F), overly optimistic (and wrong) (G), or overly pessimistic (but correct) (H).
For further details see the text in Appendix~\ref{appx:naive}.
\squeeze
}
\label{fig:naive}
\end{figure}

\subsection{Naive learning -- assumed vs. induced accuracy} \label{appx:naive}
The goal of this experiment is to demonstrate how a \naive\ learner that is oblivious to strategic self-selection can end up with a classifier who's actual performance on the induced self-selective distribution is essentially arbitrary:
it can be as expected, lower than expected, or better than expected.

Denoting by $\xbar = x \setminus \grp$ the non-group features,
here we use $\xbar \in \R$.
We consider four groups: $\grp_1,\grp_2,\grp_3,\grp_4$,
and in each instance of the experiment include two groups:
$\grp_0$, and an additional $\grp_i$, where $i \in \{1,2,3\}$.
Each group $\grp$ is associated with distribution
$p(\xbar, y \,|\, \grp)=p(\xbar \,|\,y,\grp) p(y\,|\,\grp)$. 
We define the class-conditional distributions to be Gaussian
with $\xbar \mid y=0 \sim \mathcal{N}(-0.5,\,0.5)$ and
with $\xbar \mid y=1 \sim \mathcal{N}(0.5,\,0.5)$,
which are fixed across groups,
and let $\base_i=p(y=1\,|\,\grp_i)$ be group-specific
(i.e., do not depend on $\grp$),
and where $\base_i$ is the base rate of group $\grp_i$,
and $p(y=0\,|\,\grp_i)=1-\base_i$.
In particular, we set
$\base_1=\base_2=0.5, \base_3=0.15$, and $\base_4=0.85$;
thus, compared to $\grp_1$, $\grp_2$ has the same base rate,
$\grp_3$ has lower base rate, and $\grp_4$ has higher base rate.
For the first two experiments we use $\costu_1=\costu_2=0.8$, and for the last we use $\costu_3=0.9$ (we explain why below).

In each instance we sample $m=10,000$ examples and `{\naive}ly' train a linear model using logistic regression on the full dataset including all groups.
We then measure the learned classifier's \emph{assumed accuracy} (i.e., made under the assumption that there is no self-selection), and its actual \emph{induced accuracy} on the applicant population.
We also vary the decision threshold $\thresh$ and report both assumed accuracy and induced accuracy on the entire (probabilistic) range $\sigalt(\thresh) \in [0,1]$.
Note that because $\xbar$ is uni-dimensional, a linear model
$w^\top x + b$ can be rewritten as $a\cdot\xbar + v^\top \grp + b$
where $a,b \in \R$ and $v \in \R^2$.
Hard predictions $\yhat=f(x)$ (on which precision and accuracy depend) therefore rely only on the per-group offsets $v_i$, where w.l.o.g. we can assume $b=0$.
Thus, learning $f$ amounts to learning how to "shift" each group's conditional distribution by its corresponding $v^i$ so that a global threshold $\thresh=b=0$ performs well.
We denote the learned group-specific offset terms by $v^*_i$.
\squeeze

Results are shown in Figure~\ref{fig:naive}.
Subplots (A)-(D) show the data distributions of the different groups,
overlaid with an illustration of how precision is computed under the learned model, i.e., for each $v^*_i$.
As can be seen, a lower base rate (as for $\grp_2$ in (C))
causes $v^*_2$ to increase (i.e., shift right).
For a fixed base rate, increasing the threshold should \emph{increase} precision;
however, the smaller base rate causes precision to generally \emph{decrease}, and this effect is stronger.
The result is a lower precision curve (compared to $\grp_1$), which is shown in (E).
In contrast, an increased base rate (as for $\grp_3$ in (D)) \emph{increases} precision---this time leading to a higher precision curve (also shown in (E)).

Sobplots (F)-(H) show assumed and induced accuracy for each experiment and for a range of (global) thresholds, and below each plot are shown the points along the threshold axis in which each group applies.
For $z_1$ (F), whose distribution matches that of $z_0$ (since $\base_1=\base_0$),
we see that the assumed accuracy matches induces accuracy---this is since for the learned $f$ both groups apply, and so the \naive\ perspective turned out to be correct.
For $\grp_2$ (G), assumed accuracy is \emph{higher} than the actual induced accuracy,
since at this point only one group applies (here, $\grp_0$),
which is precisely the result of the lowered precision curve (due to the lower $\base_2$).
For $\grp_3$ (H), for which we used a larger cost ($\costu_3=0.9$), 
assumed accuracy is \emph{lower} than the actual induced accuracy.
Here again this is only since one group applies (this time $\grp_3$),
though now due to the higher precision curve,
which both `kicks in' earlier, and provides higher accuracy at that point.
Note for $\grp_3$, though the assumed accuracy differs from the actual induced accuracy, the learned classifier is nonetheless optimal also for the induced distribution.
In contrast, for $\grp_2$, the \naive\ classifier is suboptimal,
since the optimal classifier requires a larger threshold which ensures that both groups apply.


\begin{figure}[t!]
\centering
\includegraphics[width=\columnwidth]{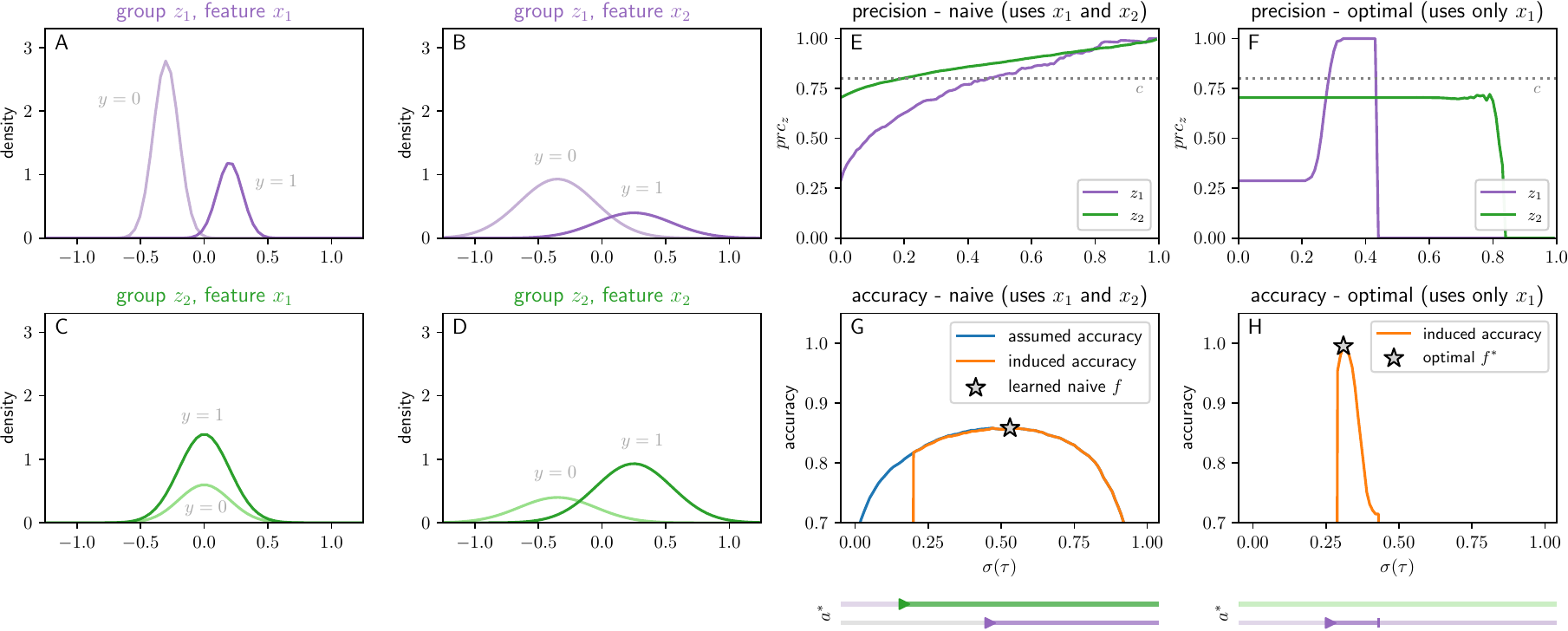}
\caption{
\textbf{Semi-strategic suboptimallity due to application order constraints.}
This example includes two groups $\grp_1,\grp_2$ and two features $\xbar_1,\xbar_2$,
constructed so that a {\naive}ly-trained classifier $f$ would rely mostly on $\xbar_2$ \textbf{(B,D)},
whereas the optimal strategic classifier $f^*$ is to use only $\xbar_1$ \textbf{(A,C)}.
This is because a varying the threshold on $f$ induces an ordering
$\grp_2 \preceq_\score \grp_1$ \textbf{(E)}, enabling a maximal accuracy of $\approx0.86$ \textbf{(G)}.
In contrast, $f^*$ induces $a^*_1=1$ but $a^*_2=0$ \textbf{(F)},
which is impossible under $\preceq_\score$, and enables an accuracy of $\approx0.99$.
This is since only $\xbar_1$ is used and only for $\grp_1$ \textbf{(A)},
while avoiding its ineffectiveness for $\grp_2$ \textbf{(C)},
and not needing to further rely on the less informative $\xbar_2$ \textbf{(B)}.
For further details see the text in Appendix~\ref{appx:synth_semi-strat}.
\squeeze
}
\label{fig:synth_ordering}
\end{figure}

\subsection{Semi-strategic learning -- ordering constraints} \label{appx:synth_semi-strat}
The goal of this experiment is to demonstrate the limitations of semi-strategic learning,
namely first training a score function $\score$ using standard methods (i.e., {\naive}ly), and then strategically tuning the threshold $\thresh$.
In particular, we show how the fact that semi-strategic learning induces an ordering $\preceq_\score$ on applications, derived from the learned $\score$,
prevents this approach from obtaining the optimal classifier, whose applications do not comply with $\preceq_\score$.

Again denoting by $\xbar = x \setminus \grp$ the non-group features,
here we use $\xbar \in \R^2$.
We define $K=2$ groups, where each group $\grp$ is associated with a data distribution $p(\xbar,y|\grp)=(\xbar_1\mid y,z)p(\xbar_2\mid y,z)p(y\mid \grp)$.
Note this means that $\xbar_1\perp \xbar_2$.
We set:
\begin{align*}
& \xbar_1 \mid y=0, \grp_1 \sim \mathcal{N}(-0.3,\,0.1), \quad    
\xbar_1 \mid y=1, \grp_1 \sim \mathcal{N}(0.2,\,0.1),  \\
& \xbar_2 \mid y=0, \grp_1 \sim \mathcal{N}(-0.35,\,0.3), \quad    
\xbar_2 \mid y=1, \grp_1 \sim \mathcal{N}(0.25,\,0.3),    \\
& \xbar_1 \mid y=0, \grp_2 \sim \mathcal{N}(0,\,0.2), \quad    
\xbar_1 \mid y=1, \grp_2 \sim \mathcal{N}(0,\,0.2),  \\
& \xbar_2 \mid y=0, \grp_2 \sim \mathcal{N}(-0.35,\,0.3), \quad    
\xbar_2 \mid y=1, \grp_2 \sim \mathcal{N}(0.25,\,0.3), \\
& \base_1 = 0.3, \quad \base_2 = 0.7
\end{align*}
Fig.~\ref{fig:synth_ordering} (A-D) visualize these distributions.
We also set $p(\grp=\grp_1)=0.1, p(\grp=\grp_2)=0.9$, and sampled 10,000 examples.

The idea underlying this construction is the following:
In terms of features, $x_2$ is generally informative of $y$, albeit noisy
(see how the class-conditional distributions overlap in Fig.~\ref{fig:synth_ordering} (B) and (D)).
This holds for both $\grp_1$and $\grp_2$.
In contrast, $x_2$ is highly informative of $y$ but \emph{only for $\grp_1$} (see (A),
whereas for $\grp_2$ it is completely uninformative (note how the distributions in (C) fully overlap).
Because a \naive\ learner is unaware of application, it learns a score function that relies primarily on the informative feature---namely $x_2$.
This, coupled with the higher base rate of $\grp_2$, results in $\grp_2$ applying before $\grp_1$ once the threshold is increased, i.e., $\grp_2 \preceq_\score \grp_1$ (see (E)).
As a result, the only feasible application assignments are:
\begin{equation} \label{eq:synth_ordering_apps}
a^*_1=0, a^*_2=0, \qquad\quad
a^*_1=0, a^*_2=1, \qquad\quad
a^*_1=1, a^*_2=1
\end{equation}
Under the optimal semi-strategic model (trained with logistic regression, threshold then optimized on the induced distribution), applications turned out to be $a^*_1=1, a^*_2=1$ (see (G)).
However, \emph{neither} of the applications assignments in Eq.~\eqref{eq:synth_ordering_apps} are optimal.
This is because the optimal solution is to completely discard $x_2$,
and use only $x_1$, but make sure that only $\grp_1$ applies (see (F)).
By learning $f(\xbar_1)$ and varying $\thresh$,
we ensure that the only information used to predict $y$ derives from
$p(\xbar_1 \,|\, \grp_1)$ (see A),
and dodges the uninformative $p(\xbar_1 \,|\, \grp_2)$ (see C).
This classifier induces an application profile of
$a^*_1=1, a^*_2=0$ (impossible under $\preceq_\score$),
which in turns provides it with almost perfect induced accuracy (see (H)).

Note that the semi-strategic classifier was `correct' in its assumed accuracy (see G).%
\footnote{Technically this is since $a^*_1=1, a^*_2=1$, which aligns with the {\naive} assumption, but note that assumed accuracy almost precisely matches induced accuracy also for lower thresholds where $a^*_1=0, a^*_2=1$, and so this holds more generally.}
Hence, its failure does not derive from a disparity between train and test distribution, but rather, from its inability to anticipate and account for strategic self-selective behavior during training.

\subsection{Details for illustration of precision curves under semi-strategic learning (Fig.~\ref{fig:precision_curves+ordering})} \label{appx:exp_details_synth}
The following describes the experimental setup of the synthetic example illustrated in Sec.~\ref{sec:semi-strat}.
Denoting by $\xbar = x \setminus \grp$ the non-group features,
here we use $\xbar \in \R$.
In this experiment, we created data for $K=10$ groups.
Each group $\grp_i$ is associated with a data distribution
$p(\xbar, y \,|\, \grp_i)=p(\xbar \,|\,y,\grp_i) p(y\,|\,\grp_i)$, with Gaussian class-conditional distributions:
$\xbar \mid y=0 \sim \mathcal{N}(a_{i}^{-},\,{\sigma^{2}}_{i}^{-})$ and
$\xbar \mid y=1 \sim \mathcal{N}(a_{i}^{+},\,{\sigma^{2}}_{i}^{+})$, and a base rate $\base_i=p(y=1\,|\,\grp_i)$, with $p(y=0\,|\,\grp_i)= 1-\base_i$.
These per-group distributions were created by randomly sampling their parameters:
$a_{i}^{-}\sim\mathcal{U}(0,0.5),\,{\sigma^{2}}_{i}^{-}\sim\mathcal{U}(0.2,0.5),\,a_{i}^{+}\sim\mathcal{U}(0.5,1),\,{\sigma^{2}}_{i}^{+}\sim\mathcal{U}(0.2,0.5),\, \base_i\sim\mathcal{U}(0,1)$.
We sampled a total of 10,000 examples, with 1,000 examples per group, and `{\naive}ly' trained a linear model using logistic regression on the full dataset including all groups.
Then we fixed the cost $c=0.8$, and varied the decision threshold $\thresh$ between $[0,1]$ (applied to the probabilistic scores of the learned classifier),
measuring $\prc_{i}$ and $a^{*}_{i}$ of the groups for each value of $\thresh$.


\section{Experimental details} \label{appx:exp_details}

\subsection{Data and preprocessing}
\subsubsection{Adult}

\paragraph{Data description.}
This dataset contains features based on census data from the 1994 census database that describe demographic and financial data. There are 13 features,  6 of which are categorical and the others numerical. 
The binary label is whether a person's income exceeds \$50k.
The dataset includes a total of 48,842 entries,
76\% of which are labeled as negative.
The data is publicly available at \url{https://archive.ics.uci.edu/dataset/2/adult}.

\paragraph{Preprocessing and features.}
We used the following numerical features:
\feature{age}, \feature{final\_weight}, \feature{education\_num}, \feature{capitol\_gain}, \feature{capitol\_loss}, \feature{hours\_per\_week}.
All such features were normalized to be in $[0,1]$.
We used the following categorical features:
\feature{work\_class}, \feature{marital\_status}, \feature{relationship}, \feature{race}, \feature{sex} and \feature{occupation}.
All of these were transformed into one-hot binary features representations.
The \feature{occupation} feature was reduced from 15 values to 5 based on similarity.  
We chose \feature{race} as the group variable $\grp$.
The dataset includes 5 race categories, but since
Amer-Indian-Eskimo category has very few entries, it was combined with the "Other" category.
We did not use two features:
\feature{education}, since it correlates perfectly to education\_num (which we do use);
and \feature{native\_country}, since it takes on many possible values and is uninformative of the label.
The final number of features used for this setting is $d=\red{???}$.

\paragraph{Sub-sampling.}
In the original data, the category "white" (coded as $\grp_3$) consists of 86\% of all examples, which is highly imbalanced.
To create a more balanced experimental setting,
we removed at random 75\% of the examples in that group.
After this step, the number of examples per groups were:
$\grp_0: 1303; \,\, \grp_1: 4228; \,\, \grp_2: 788; \,\,$ and $\grp_3: 9726$.

\subsubsection{Bank}
\paragraph{Data description.}
This dataset describes users and results of direct marketing campaigns of a Portuguese banking institution. There are 16 features, 6 of which are categorical and the others numerical. The binary label is whether a person subscribes to a proposed term deposit.
The dataset includes a total of 45,211 entries, 88\% of which are labeled as negative.
The data is publicly available at \url{https://www.kaggle.com/datasets/prakharrathi25/banking-dataset-marketing-targets}.

\paragraph{Preprocessing and features.}
We used the following numerical features:
\feature{default}, \feature{balance}, \feature{housing}, \feature{loan}, \feature{contact}, \feature{day}, \feature{month}, \feature{duration}, \feature{campaign}, \feature{pdays} and \feature{previous}.
All such features were normalized to be in $[0,1]$.
We used the following categorical features:
\feature{job}, \feature{marital}\_status, \feature{education}, \feature{contact}, and \feature{poutcome},
and transformed them into one-hot binary features representations. 
Features \feature{day} and \feature{month} where not used given that they have no meaningful relation to the label.
We chose \feature{job} to determine groups features $\grp$,
and removed the categories \feature{unknown} and \feature{housemaid} (which had few entries) to remain with 10 groups.
The final number of features used for this setting is $d=\red{???}$.

\paragraph{Sub-sampling.}
The original data is reasonably balanced across groups,
where the number of examples per group are:
$\grp_0: 5,175; \,\,
\grp_1: 9,732; \,\,
\grp_2: 1,487; \,\,
\grp_3: 9,458; \,\,
\grp_4: 2,264; \,\,
\grp_5: 1,579; \,\,
\grp_6: 4,154; \,\,
\grp_7: 9,38; \,\,
\grp_8: 7,597; \,\,
\grp_9: 1,303$.
However, labels are highly-imbalanced, with only 11.78\% positive.
As a result, we observed that learning in general sometimes converges to a solution that predicts $\yhat=0$ always.
To circumvent this and make for a more interesting experimental settings,
we globally down-sampled 30\% of all negative examples.
This increased the base rate to 16.02\%---which is still relatively low, but enabled more meaningful learning solutions.
Results in the main body are for this setting, but for completeness, we include full results for both the original and down-sampled datasets in Appendix~\ref{appx:additional_exps}.

\subsection{Splits and repetitions}
All experiments use a 70-30 split for partitioning the data into train and test sets. 
For \adult, this amounts to $\sim$11,000 train examples and 
$\sim$4,800 test examples. 
For \bank, this amounts to $\sim$30,000 train examples and 
$\sim$13,000 test examples for the original dataset, and
$\sim$22,000 train examples and $\sim$9,600 test examples for the down-sampled variant.
Overall we did not see evidence of overfitting, and hence had no need for a validation set.
We experimented with 10 random splits and report results averaged over these splits, including standard errors.

\subsection{Methods}
Our experiments compares three main learning approaches, corresponding to those discussed in Sec.~\ref{sec:analysis}.
In our experiments we use linear classifiers as the underlying hypothesis class, although generally any differentiable class could be used.
\begin{itemize}[leftmargin=1em,topsep=0em,itemsep=0.1em]
\item 
\naivemthd: A naive learner that does not account for strategic behaviour and simply optimizes for accuracy using a conventional learning approach
(see Sec.~\ref{sec:naive}).
In particular, we optimize the log-loss using gradient descent.

\item
\semi: A semi-strategic approach in which we first train a classifier using the \naivemthd\ method, and then strategically choose the threshold that maximizes induced accuracy on the training set (see Sec.~\ref{sec:semi-strat}).
Because the strategic aspect reduces to solving a uni-dimensional problem,
this is implemented by line search over the feasible range of thresholds.
Because this does not require taking gradient steps, we use the `hard' induced accuracy metric (i.e., 0-1 accuracy on hard predictions $\yhat$) as the tresholding criterion. In this sense, \semi\ has an advantage over \strat.

\item
\strat: Optimizes our proposed strategically-aware learning objective in (Eq.~\eqref{eq:objective_induced_smoothed}). 
The objective is designed to be a differential proxy for induced accuracy,
and is optimized using gradient descent (for details see Sec.~\ref{sec:method}).
We consider three variants of this approach, as discussed in Sec.~\ref{sec:analysis_strat}:
\begin{itemize}[leftmargin=1em,topsep=0em,itemsep=0.1em]
\item
\stratx:  Makes use of all available features in $x$, i.e., has the form $f(x)=\inner{w,x}+b$ where $w \in \R^d$ and $b \in \R$.
Note that in particular, since $\grp \subset x$, and since we represent $\grp$ as a one-hot vector of size $K$,
this means that the model has individual per-group offset terms $w_\grp$ (note this makes $b$ redundant). This allows it to effectively set group-specific thresholds---from which it derives much of its power to influence applications.

\item
\stratnoz: Uses only non-group features, i.e., $\xbar = x \setminus \grp$.
This serves as a simple heuristic for approximating independence $\yhat \perp \grp$ (see Eq.~\eqref{eq:fairness_constraint})---although of course this provides no guarantees, since the remaining features $\xbar$ can still be informative of $\grp$.
Nonetheless, this approach is still much less expressive:
this is since the model is now $f(x)=\inner{w,\xbar}+b$, where $w \in R^{d-K}$.
In particular, this means that $f$ does \emph{not} have group-specific offsets $w_\grp$, and can influence applications only globally by varying the global offset $b$.

\item
\stratindep: Uses only non-group features, but additionally penalizes for violation of the statistical parity independence constraints in Eq.~\eqref{eq:fairness_constraint}. Technically, this is achieved by adding to the objective the regularization term $R_\perp$ from Eq.~\eqref{eq:regularizer_penalty} as a `soft' constraint which encourages $\yhat \perp \grp$.

\end{itemize}
\end{itemize}




\subsection{Training, tuning, and optimization}

\paragraph{Implementation.}
All code was implemented in python,
and the learning framework was implemented using Pytorch.
For proper comparison, all methods were optimized using the same underlying implementation framework, with \semi\ and \strat\ implemented as subclasses of \naivemthd\ and using the same code base.
To ensure validity we also made sure that the performance of our implementation of \naivemthd\ matches that of a standard sklearn implementation on a subset of the experimental settings.

\paragraph{Optimization.}
Training for all methods and settings was done using vanilla gradient descent (full batch).
We used a learning rate of 0.1, chosen manually to provide fast convergence while maintaining a smooth learning curve (we observes that values $>0.5$ result in sporadic instability).
Lower values resulted in similar results, but converged slower.
We ran for a fixed number of 30,000 epochs since in most instances this was sufficient for the objective to converge sufficiently and for other metrics (train accuracy and precision) to be relatively stable.
We also experimented with shorter and longer runs, but this did not seem to effect results generally.

\paragraph{Initialization.}
All models were initialized with Gaussian noise.
For \strat, which is non-convex, we observed that some initializations converged to highly suboptimal solutions---this occurred when learning `committed' to advancing the precision of a low-quality group at the onset, and was unable to correct for this. To compensate for this, we ran with different initialization (5 for \adult, 10 for \bank) and chose model obtaining the highest induced accuracy on the training set.
We validated that \naivemthd\ (and therefore \semi), which is convex, did not benefit from multiple initializations, and therefore used a single initialization.

\paragraph{Hyperparameters.}
We used the following hyperparameters:
\begin{itemize}[leftmargin=1em,topsep=0em,itemsep=0.1em]
\item 
Temperature $\tau_\mathrm{app}$ for the application sigmoid $\sig$ in Eq.~\eqref{eq:sigmoid}: 5

\item 
Temperature $\tau_\mathrm{prec}$ for the precision proxy $\softprc$ in Eq.~\eqref{eq:precision_proxy}: 5

\item 
Temperature $\tau$ for standard sigmoid $\sigalt$: 2

\item
Cost tolerance $\varepsilon$: 0.02 for \adult, 0.05 for \bank.

\item Regularization coefficient $\lambda_{\mathrm{app}}$ for $R_{\mathrm{app}}$
in Eq.~\eqref{eq:app&prec_penalty}: 1/6 for \strat\ and \stratindep\ for \adult, 1/6 for \strat\ and 1/64 for \stratindep\ for \bank.

\item Regularization coefficient $\lambda_{\perp}$ for $R_{\perp}$
in Eq.~\eqref{eq:regularizer_penalty} (used only for \stratindep):
For \adult, we set 8 for the lowest cost $\costu=0.65$, and increasing linearly up to 16 up to $\costu=0.85$.
For \bank\ we used 100.%
\footnote{In our implementation, to avoid numerical instability, we multiplied the expectation terms inside the mean squares operator by 10, and used a coefficient of 10, which is equivalent to a coefficient of 100 without scaling.}

\end{itemize}

Temperature parameters where chosen mostly to provide fast convergence while ensuring that gradients do not explode.
This choice is not overly sensitive, although we did observe that excessively low values resulted in premature convergence.
Cost tolerance was chosen to be slightly higher for \bank\ since here we did observe mild overfitting in application outcomes---which is precisely the reason for our use of a tolerance term.
Regularization for applications was chosen to be as small as possible yet ensure the feasibility of applications and precisions.
Note that \stratindep\ requires a different cofficients since it must balance application reagularization with the independence penalty term.
The latter was chosen to ensure that the mean squared distance is sufficiently low so that independence is reasonably-well approximated.


\paragraph{Compute and runtime.}
The main experiment was run on a CPU cluster of
AMD EPYC 7713 machines (1.6 Ghz, 256M, 128 cores).
A typical epoch was timed at roughly 0.05 seconds per epoch for \adult, and 0.07 for \bank.
Thus, a single experimental instance (i.e., for a single method, cost, split, and initialization) completes in approximately 20-30 minutes for \adult\ and 30-45 minutes for \bank.


\section{Additional experimental results} \label{appx:additional_exps}

\begin{table}[h!]
\centering
\caption{\textbf{Extended experimental results.}
Results show:
induced accuracy ($\pm$stderr),
number of applying groups,
and the $r^2$ between the ideal $\preceq_\base$ and the actual ranking based on $\prc_\grp$.
Parentheses/dashes mark settings in which there were no applications in some/all splits (out of 10).
\squeeze
}
\begin{adjustbox}{width=0.75\textwidth}
\begin{tabular}{clrccccccccccc}
  &   &   & \multicolumn{3}{c}{\textbf{\adult}} &   & \multicolumn{3}{c}{\textbf{\bank} (30\% negs.)} &   & \multicolumn{3}{c}{\textbf{\bank} (original)} \\
\cmidrule{4-6}\cmidrule{8-10}\cmidrule{12-14}  &   &   & ind. acc. & apply & rank $r^2$ &   & ind. acc. & apply & rank $r^2$ &   & ind. acc. & apply & rank $r^2$ \\
\cmidrule{4-6}\cmidrule{8-10}\cmidrule{12-14}\multirow{5}[2]{*}{$c=0.65$} & \naivemthd &   & 85.5\tiny{\,$\pm0.1$} & 4.0/4 & 0.219 &   & 87.4\tiny{\,$\pm0.1$} & 9.1/10 & 0.066 &   & 89.4\tiny{\,$\pm0.2$} & 5.4/10 & 0.066 \\
  & \semi &   & 86.3\tiny{\,$\pm0.5$} & 2.8/4 & 0.168 &   & 89.4\tiny{\,$\pm0.4$} & 2.1/10 & 0.054 &   & 92.7\tiny{\,$\pm0.1$} & 1.0/10 & 0.078 \\
  & \stratx &   & 90.8\tiny{\,$\pm0.3$} & 1.6/4 & 0.388 &   & 90.4\tiny{\,$\pm0.5$} & 1.6/10 & 0.183 &   & 92.2\tiny{\,$\pm0.2$} & 1.3/10 & 0.216 \\
  & \stratnoz &   & 85.4\tiny{\,$\pm0.2$} & 3.1/4 & 0.186 &   & 87.5\tiny{\,$\pm0.3$} & 7.8/10 & 0.243 &   & 90.6\tiny{\,$\pm0.4$} & 3.6/10 & 0.089 \\
  & \stratindep &   & 81.7\tiny{\,$\pm0.5$} & 0.8/4 & 0.708 &   & 88.1\tiny{\,$\pm0.4$} & 1.5/10 & 0.442 &   & 91.7\tiny{\,$\pm0.4$} & 1.4/10 & 0.373 \\
\cmidrule{4-6}\cmidrule{8-10}\cmidrule{12-14}  &   &   &   &   &   &   &   &   &   &   &   &   &  \\
\cmidrule{4-6}\cmidrule{8-10}\cmidrule{12-14}\multirow{5}[2]{*}{$c=0.675$} & \naivemthd &   & 85.5\tiny{\,$\pm0.1$} & 3.9/4 & 0.219 &   & 83.5\tiny{\,$\pm0.4$} & 7.8/10 & 0.066 &   & 84.8\tiny{\,$\pm1$} & 3.3/10 & 0.066 \\
  & \semi &   & 86.7\tiny{\,$\pm0.6$} & 2.5/4 & 0.154 &   & 89.5\tiny{\,$\pm0.3$} & 1.4/10 & 0.125 &   & 92.7\tiny{\,$\pm0.1$} & 1.1/10 & 0.067 \\
  & \stratx &   & 90.8\tiny{\,$\pm0.5$} & 1.3/4 & 0.149 &   & 90.4\tiny{\,$\pm0.5$} & 1.7/10 & 0.167 &   & 92.2\tiny{\,$\pm0.2$} & 1.1/10 & 0.304 \\
  & \stratnoz &   & 85.4\tiny{\,$\pm0.3$} & 3.0/4 & 0.262 &   & 87.5\tiny{\,$\pm0.3$} & 7.7/10 & 0.172 &   & 90.3\tiny{\,$\pm0.6$} & 2.9/10 & 0.098 \\
  & \stratindep &   & 86.4\tiny{\,$\pm0.6$} & 0.2/4 & 0.851 &   & 87.5\tiny{\,$\pm0.5$} & 1.3/10 & 0.429 &   & 92.2\tiny{\,$\pm0.2$} & 1.0/10 & 0.330 \\
\cmidrule{4-6}\cmidrule{8-10}\cmidrule{12-14}  &   &   &   &   &   &   &   &   &   &   &   &   &  \\
\cmidrule{4-6}\cmidrule{8-10}\cmidrule{12-14}\multirow{5}[2]{*}{$c=0.7$} & \naivemthd &   & 85.2\tiny{\,$\pm0.3$} & 3.0/4 & 0.219 &   & 80.9\tiny{\,$\pm0.8$} & 5.4/10 & 0.066 &   & 82.7\tiny{\,$\pm1.4$} & 1.7/10& 0.066 \\
  & \semi &   & 87.4\tiny{\,$\pm0.6$} & 2.1/4 & 0.135 &   & 90.0\tiny{\,$\pm0.4$} & 1.5/10 & 0.068 &   & 92.3\tiny{\,$\pm0.2$} & 1.2/10 & 0.045 \\
  & \stratx &   & 91.1\tiny{\,$\pm0.5$} & 1.1/4 & 0.076 &   & 90.1\tiny{\,$\pm0.4$} & 1.8/10 & 0.177 &   & 92.1\tiny{\,$\pm0.2$} & 1.1/10 & 0.230 \\
  & \stratnoz &   & 86.0\tiny{\,$\pm0.4$} & 2.5/4 & 0.244 &   & 87.9\tiny{\,$\pm0.6$} & 6.7/10 & 0.170 &   & 91.1\tiny{\,$\pm0.5$} & 2.6/10 & 0.057 \\
  & \stratindep &   & 86.5\tiny{\,$\pm0.1$} & 0.6/4 & 0.85 &   & 87.3\tiny{\,$\pm0.4$} & 0.9/10 & 0.343 &   & 92.0\tiny{\,$\pm0.3$} & 1.1/10 & 0.267 \\
\cmidrule{4-6}\cmidrule{8-10}\cmidrule{12-14}  &   &   &   &   &   &   &   &   &   &   &   &   &  \\
\cmidrule{4-6}\cmidrule{8-10}\cmidrule{12-14}\multirow{5}[2]{*}{$c=0.725$} & \naivemthd &   & 84.5\tiny{\,$\pm0.7$} & 1.8/4 & 0.219 &   & 76.7\tiny{\,$\pm1.6$} & 2.3/10 & 0.066 &   & (77.9) & 0.5/10 & (0.066) \\
  & \semi &   & 87.9\tiny{\,$\pm0.9$} & 1.7/4 & 0.065 &   & 90.5\tiny{\,$\pm0.3$} & 1.1/10 & 0.107 &   & 91.4\tiny{\,$\pm0.8$} & 1.1/10 & 0.048 \\
  & \stratx &   & 90.4\tiny{\,$\pm0.4$} & 1.0/4 & 0.003 &   & 90.1\tiny{\,$\pm0.4$} & 1.8/10 & 0.193 &   & 92.1\tiny{\,$\pm0.2$} & 1.2/10 & 0.296 \\
  & \stratnoz &   & 87.0\tiny{\,$\pm0.8$} & 1.9/4 & 0.389 &   & 87.4\tiny{\,$\pm0.7$} & 5.9/10 & 0.160 &   & 91.4\tiny{\,$\pm0.4$} & 2.0/10 & 0.086 \\
  & \stratindep &   & 86.7\tiny{\,$\pm0.7$} & 0.4/4 & 0.659 &   & 87.4\tiny{\,$\pm0.3$} & 1.0/10 & 0.379 &   & 92.4\tiny{\,$\pm0.1$} & 1.2/10 & 0.311 \\
\cmidrule{4-6}\cmidrule{8-10}\cmidrule{12-14}  &   &   &   &   &   &   &   &   &   &   &   &   &  \\
\cmidrule{4-6}\cmidrule{8-10}\cmidrule{12-14}\multirow{5}[2]{*}{$c=0.75$} & \naivemthd &   & (90.0) & 0.4/4 & (0.219) &   & 78.7\tiny{\,$\pm2.8$} & 1.0/10 & 0.066 &   & (77.0) & 0.1/10 & (0.066) \\
  & \semi &   & 88.0\tiny{\,$\pm1$} & 1.7/4 & 0.125 &   & 90.0\tiny{\,$\pm0.3$} & 1.5/10 & 0.080 &   & 90.4\tiny{\,$\pm0.6$} & 1.7/10 & 0.062 \\
  & \stratx &   & 91.0\tiny{\,$\pm0.6$} & 1.0/4 & 0.005 &   & 89.8\tiny{\,$\pm0.3$} & 1.3/10 & 0.275 &   & 91.8\tiny{\,$\pm0$} & 1.0/10 & 0.327 \\
  & \stratnoz &   & 88.9\tiny{\,$\pm0.6$} & 1.2/4 & 0.094 &   & 87.2\tiny{\,$\pm1.4$} & 4.5/10 & 0.091 &   & 91.6\tiny{\,$\pm0.3$} & 1.7/10 & 0.056 \\
  & \stratindep &   & 87.4\tiny{\,$\pm0.6$} & 0.6/4 & 0.675 &   & 88.2\tiny{\,$\pm0.4$} & 1.4/10 & 0.424 &   & 92.4\tiny{\,$\pm0$} & 1.1/10 & 0.244 \\
\cmidrule{4-6}\cmidrule{8-10}\cmidrule{12-14}  &   &   &   &   &   &   &   &   &   &   &   &   &  \\
\cmidrule{4-6}\cmidrule{8-10}\cmidrule{12-14}\multirow{5}[2]{*}{$c=0.775$} & \naivemthd &   & (87.8) & 0.1/4 & (0.219) &   & (86.7) & 0.1/10 & (0.066) &   & (77.0) & 0.1/10 & (0.066) \\
  & \semi &   & 88.7\tiny{\,$\pm0.8$} & 1.5/4 & 0.327 &   & 88.6\tiny{\,$\pm0.7$} & 1.7/10 & 0.111 &   & 90.7\tiny{\,$\pm0.8$} & 1.3/10 & 0.096 \\
  & \stratx &   & 91.1\tiny{\,$\pm0.6$} & 1.0/4 & 0.005 &   & 89.7\tiny{\,$\pm0.3$} & 1.3/10 & 0.191 &   & 91.8\tiny{\,$\pm0$} & 1.0/10 & 0.352 \\
  & \stratnoz &   & 89.0\tiny{\,$\pm0.6$} & 1.0/4 & 0.030 &   & 87.5\tiny{\,$\pm1.5$} & 2.3/10 & 0.095 &   & 91.3\tiny{\,$\pm0.8$} & 1.2/10 & 0.094 \\
  & \stratindep &   & 87.4\tiny{\,$\pm0.8$} & 0.4/4 & 0.505 &   & 88.1\tiny{\,$\pm0.4$} & 1.2/10 & 0.462 &   & 92.2\tiny{\,$\pm0$} & 1.1/10 & 0.200 \\
\cmidrule{4-6}\cmidrule{8-10}\cmidrule{12-14}  &   &   &   &   &   &   &   &   &   &   &   &   &  \\
\cmidrule{4-6}\cmidrule{8-10}\cmidrule{12-14}\multirow{5}[2]{*}{$c=0.8$} & \naivemthd &   & (87.8) & 0.1/4 & (0.219) &   & - & 0/10 & - &   & - & 0/10 & - \\
  & \semi &   & 90.1\tiny{\,$\pm0.5$} & 1.2/4 & 0.300 &   & 86.4\tiny{\,$\pm0.6$} & 2.4/10 & 0.100 &   & 90.4\tiny{\,$\pm0.8$} & 1.5/10 & 0.114 \\
  & \stratx &   & 90.5\tiny{\,$\pm0.5$} & 1.0/4 & 0.003 &   & 88.7\tiny{\,$\pm0.4$} & 1.1/10 & 0.238 &   & 92.1\tiny{\,$\pm0.2$} & 1.0/10 & 0.321 \\
  & \stratnoz &   & 89.0\tiny{\,$\pm0.5$} & 1.0/4 & 0.029 &   & 87.0\tiny{\,$\pm1.1$} & 1.5/10 & 0.121 &   & 91.2\tiny{\,$\pm0.8$} & 1.3/10 & 0.061 \\
  & \stratindep &   & 88.4\tiny{\,$\pm1.2$} & 0.4/4 & 0.537 &   & 88.2\tiny{\,$\pm0.5$} & 1.3/10 & 0.361 &   & 91.9\tiny{\,$\pm0.3$} & 1.2/10 & 0.150 \\
\cmidrule{4-6}\cmidrule{8-10}\cmidrule{12-14}  &   &   &   &   &   &   &   &   &   &   &   &   &  \\
\cmidrule{4-6}\cmidrule{8-10}\cmidrule{12-14}\multirow{5}[2]{*}{$c=0.825$} & \naivemthd &   & - & 0/10 & - &   & - & 0/10 & - &   & - & 0/10 & - \\
  & \semi &   & 90.3\tiny{\,$\pm0.3$} & 1.1/4 & 0.411 &   & 85.6\tiny{\,$\pm0.4$} & 2.3/10 & 0.107 &   & 90.2\tiny{\,$\pm0.8$} & 1.2/10 & 0.081 \\
  & \stratx &   & 90.7\tiny{\,$\pm0.5$} & 1.0/4 & 0.003 &   & 89.0\tiny{\,$\pm0.3$} & 1.1/10 & 0.292 &   & 92.2\tiny{\,$\pm0.2$} & 1.0/10 & 0.285 \\
  & \stratnoz &   & 88.9\tiny{\,$\pm0.5$} & 1.0/4 & 0.180 &   & 87.0\tiny{\,$\pm0.7$} & 1.6/10 & 0.150 &   & 90.8\tiny{\,$\pm0.8$} & 1.3/10 & 0.075 \\
  & \stratindep &   & 88.7\tiny{\,$\pm1.1$} & 0.6/4 & 0.354 &   & 88.3\tiny{\,$\pm0.4$} & 1.1/10 & 0.336 &   & 92.0\tiny{\,$\pm0.3$} & 1.1/10 & 0.281 \\
\cmidrule{4-6}\cmidrule{8-10}\cmidrule{12-14}  &   &   &   &   &   &   &   &   &   &   &   &   &  \\
\cmidrule{4-6}\cmidrule{8-10}\cmidrule{12-14}\multirow{5}[2]{*}{$c=0.85$} & \naivemthd &   & - & 0/10 & - &   & - & 0/10 & - &   & - & 0/10 & - \\
  & \semi &   & 90.1\tiny{\,$\pm0.2$} & 1.2/4 & 0.560 &   & 86.9\tiny{\,$\pm0.6$} & 2.1/10 & 0.067 &   & 90.2\tiny{\,$\pm0.5$} & 1.0/10 & 0.121 \\
  & \stratx &   & 91.0\tiny{\,$\pm0.6$} & 1.0/4 & 0.005 &   & 89.0\tiny{\,$\pm0.2$} & 1.0/10 & 0.344 &   & 91.8\tiny{\,$\pm0.3$} & 1.0/10 & 0.314 \\
  & \stratnoz &   & 89.2\tiny{\,$\pm0.6$} & 1.0/4 & 0.219 &   & 86.4\tiny{\,$\pm0.8$} & 1.0/10 & 0.102 &   & 91.1\tiny{\,$\pm0.8$} & 1.2/10 & 0.066 \\
  & \stratindep &   & 89.1\tiny{\,$\pm0.8$} & 1/4 & 0.198 &   & 87.6\tiny{\,$\pm0.7$} & 1.3/10 & 0.341 &   & 92.1\tiny{\,$\pm0.2$} & 1.0/10 & 0.183 \\
\cmidrule{4-6}\cmidrule{8-10}\cmidrule{12-14}\end{tabular}%
\end{adjustbox}
\label{tbl:results}%
\end{table}%


\end{document}